\definecolor{ffqqqq}{rgb}{1.,0.,0.}
\definecolor{xfqqff}{rgb}{0.4980392156862745,0.,1.}
\newcommand{\bcar}{\begin{itemize}}
\newcommand{\ecar}{\end{itemize}}
\newcommand{\Prob}{\ensuremath{\mprob}}
\newcommand{\Yspace}{\ensuremath{\mathcal{Y}}}
\newcommand{\DelTil}{\ensuremath{\widetilde{\Delta}}}
\newcommand{\TermOne}{\ensuremath{\Term_1}}
\newcommand{\TermTwo}{\ensuremath{\Term_2}}
\newcommand{\strongcon}{\ensuremath{\gamma}}
\newcommand{\thetastar}{\ensuremath{{\theta^*}}}
\newcommand{\NORMAL}{\ensuremath{\mathcal{N}}}
\newcommand{\thetahat}{\ensuremath{\widehat{\theta}}}
\newcommand{\Ellipse}{\ensuremath{\mathcal{E}}}
\definecolor{MyGray}{rgb}{0.9,0.9,0.9}
\makeatletter\newenvironment{graybox}{ 
\begin{lrbox}{\@tempboxa}
\begin{minipage}{1\columnwidth}}{\end{minipage}
\end{lrbox}%
\colorbox{MyGray}{\usebox{\@tempboxa}} }
\newcommand{\Zback}[1]{\ensuremath{Z^{\backslash i}}}
\newcommand{\argmin}{\ensuremath{\operatorname{argmin}}}
\newcommand{\range}{\ensuremath{\operatorname{range}}}
\newcommand{\xsamstack}[1]{\ensuremath{x_1^\numobs}}
\newcommand{\Xsamstack}[1]{\ensuremath{X_1^\numobs}}
\newcommand{\widgraph}[2]{\includegraphics[keepaspectratio,width=#1]{#2}}
\newcommand{\muhat}{\ensuremath{\widehat{\mu}}}
\newcommand{\Term}{\ensuremath{T}}
\newcommand{\radepl}{\ensuremath{\varepsilon}}
\newcommand{\rade}[1]{\ensuremath{\radepl_{#1}}}
\newcommand{\Event}{\ensuremath{\mathcal{E}}}
\newcommand{\Fclass}{\ensuremath{\mathscr{F}}}
\newcommand{\fancysoln}[1]{
\ifthenelse{\equal{\doctype}{WITHSOLS}}
{
\begin{soln}
#1
\end{soln}
}
{
}
}
\newcommand{\fstar}{\ensuremath{f^*}}
\long\def\comment#1{}
\def\@cite#1#2{[\if@tempswa #2 \fi #1]}
\newcommand{\Hil}{\ensuremath{\mathbb{H}}}
\newcommand{\fhat}{\ensuremath{\widehat{f}}}
\newcommand{\Xspace}{\ensuremath{\mathcal{X}}}
\newcommand{\defn}{\ensuremath{: \, = }}
\newcommand{\real}{\ensuremath{\mathbb{R}}}
\newcommand{\numobs}{\ensuremath{n}}
\newcommand{\mprob}{\ensuremath{\mathbb{P}}}
\newcommand{\Kmat}{\ensuremath{\mymathbf{K}}}
\newcommand{\Loss}{\ensuremath{\LossSing_\numobs}}
\newlength{\widebarargwidth}
\newlength{\widebarargheight}
\newlength{\widebarargdepth}
\DeclareRobustCommand{\widebar}[1]{%
  \settowidth{\widebarargwidth}{\ensuremath{#1}}%
  \settoheight{\widebarargheight}{\ensuremath{#1}}%
  \settodepth{\widebarargdepth}{\ensuremath{#1}}%
  \addtolength{\widebarargwidth}{-0.3\widebarargheight}%
  \addtolength{\widebarargwidth}{-0.3\widebarargdepth}%
  \makebox[0pt][l]{\hspace{0.3\widebarargheight}%
    \hspace{0.3\widebarargdepth}%
    \addtolength{\widebarargheight}{0.3ex}%
    \rule[\widebarargheight]{0.95\widebarargwidth}{0.1ex}}%
  {#1}}
\newcommand{\inprod}[2]{\ensuremath{\langle #1 , \, #2 \rangle}}
\newcommand{\Exs}{\ensuremath{\mathbb{E}}}
\theoremstyle{plain}
\newtheorem{theo}{Theorem}[section]
\newtheorem{lem}{Lemma}[section]
\newtheorem{prop}{Proposition}[section]
\newtheorem{cor}{Corollary}[section]
\theoremstyle{definition} 
\newtheorem{nota}{Notation}[section]
\newtheorem{de}{Definition}[section]
\newtheorem{exa}{Example}[section]
\newtheorem{as}{Assumption}[section]
\newtheorem{alg}{Algorithm}[section]
\newcommand{\btheo}{\begin{theo}}
\newcommand{\bde}{\begin{de}}
\newcommand{\ble}{\begin{lem}}
\newcommand{\bpr}{\begin{prop}}
\newcommand{\bno}{\begin{nota}}
\newcommand{\bex}{\begin{exa}}
\newcommand{\bcor}{\begin{cor}}
\newcommand{\spro}{\begin{proof}}
\newcommand{\bas}{\begin{as}}
\newcommand{\balg}{\begin{alg}}
\newcommand{\etheo}{\end{theo}}
\newcommand{\ede}{\end{de}}
\newcommand{\ele}{\end{lem}}
\newcommand{\epr}{\end{prop}}
\newcommand{\eno}{\end{nota}}
\newcommand{\eex}{\end{exa}}
\newcommand{\ecor}{\end{cor}}
\newcommand{\fpro}{\end{proof}}
\newcommand{\eas}{\end{as}}
\newcommand{\ealg}{\end{alg}}
\theoremstyle{plain}
\newtheorem{theos}{Theorem}
\newtheorem{props}{Proposition}
\newtheorem{lems}{Lemma}
\newtheorem{cors}{Corollary}
\theoremstyle{definition}
\newtheorem{exas}{Example}
\newtheorem{algs}{Algorithm}
\newtheorem{asss}{Assumption}
\newtheorem{defns}{Definition}
\newcommand{\btheos}{\begin{theos}}
\newcommand{\etheos}{\end{theos}}
\newcommand{\brems}{\begin{remark}}
\newcommand{\erems}{\end{remark}}
\newcommand{\bprops}{\begin{props}}
\newcommand{\eprops}{\end{props}}
\newcommand{\bdes}{\begin{defns}}
\newcommand{\edes}{\end{defns}}
\newcommand{\blems}{\begin{lems}}
\newcommand{\elems}{\end{lems}}
\newcommand{\bcors}{\begin{cors}}
\newcommand{\ecors}{\end{cors}}
\newcommand{\bexs}{\begin{exas}}
\newcommand{\eexs}{\end{exas}}
\newcommand{\balgs}{\begin{algs}}
\newcommand{\ealgs}{\end{algs}}
\newcommand{\bass}{\begin{asss}}
\newcommand{\eass}{\end{asss}}
\definecolor{mypink}{rgb}{0.858, 0.188, 0.478}
\definecolor{myred}{rgb}{1, 0, 0}
\renewcommand{\Loss}{\ensuremath{\mathcal{L}}}
\newcommand{\Jloss}{\ensuremath{\mathcal{J}}}
\newcommand{\EmpLoss}{\ensuremath{\Loss_\numobs}}
\newcommand{\EmpJloss}{\ensuremath{\Jloss_\numobs}}
\newcommand{\EmpOp}{G_{\numobs}}
\newcommand{\PopOp}{G}
\newcommand{\fval}{\ensuremath{\theta}}
\newcommand{\fvalstar}{\ensuremath{\fval^*}}
\newcommand{\stsize}{\alpha}
\newcommand{\inprodk}[2]{\langle #1, #2 \rangle_{\Hil}}
\newcommand{\enorm}[1]{\| #1 \|_\numobs}
\newcommand{\knorm}[1]{\|#1\|_{\Hil}}
\newcommand{\ltwo}[1]{\| #1 \|_2}
\newcommand{\Fspace}{\mathcal{F}}
\newcommand{\fvalit}[1]{\ensuremath{\fval^{#1}}}
\newcommand{\HilNorm}[1]{\ensuremath{\| #1\|_{\Hil}}}
\newcommand{\EmpNorm}[1]{\ensuremath{\|#1\|_{\numobs}}}
\newcommand{\DelIt}[1]{\ensuremath{\Delta^{#1}}}
\renewcommand{\strongcon}{\ensuremath{m}}
\newcommand{\lipcon}{\ensuremath{M}}
\newcommand{\zval}{\ensuremath{z}}
\newcommand{\zvalit}[1]{\ensuremath{\zval^{#1}}}
\newcommand{\zvalstar}{\ensuremath{z^*}}
\newcommand{\zstar}{\ensuremath{z^*}}
\newcommand{\Sdiff}[1]{\ensuremath{d^{#1}}}
\newcommand{\Hdiff}[1]{\ensuremath{D}^{#1}}
\renewcommand{\Kmat}{\ensuremath{K}}
\newcommand{\KmatSqrt}{\ensuremath{\sqrt{K}}}
\newcommand{\Exp}{\text{exp}}
\newcommand{\ConstSet}{\mathcal{C}}
\newcommand{\secarg}{\ensuremath{\theta}}
\newcommand{\Tele}[1]{\ensuremath{V^{#1}}}
\newcommand{\Radius}{C_{\Hil}}
\newcommand{\hatf}{\widehat{f}}
\newcommand{\GW}{\mathcal{G}_\numobs}
\newcommand{\GWP}{\mathcal{\widebar{G}}_\numobs}
\newcommand{\EGW}{\mathcal{Z}_\numobs}
\newcommand{\EGWS}{\mathcal{\tilde{Z}}_\numobs}
\newcommand{\Eset}[2]{\mathcal{E}(#1,#2)}
\newcommand{\CrudeTerm}{T}
\newcommand{\E}{\text{e}}
\newcommand{\critquant}{\delta_n}
\newcommand{\popcritquant}{\widebar{\delta}_\numobs}
\newcommand{\sceff}{\gamma}
\newcommand{\scefftil}{\widetilde{\sceff}}
\newcommand{\AnnCon}{c_3}
\newcommand{\A}{\mathcal{A}}
\newcommand{\SubK}{\Hil_{\numobs}}
\newcommand{\AvgEst}{\bar{\fval}}
\newcommand{\Diam}{D}
\newcommand{\PhiB}{B}
\newcommand{\Kerfunc}{\mathbb{K}}
\newcommand{\KerR}{\mathcal{R}}
\newcommand{\PP}{\mathbb{P}}
\newcommand{\HilNormball}{\mathbb{B}_{\Hil}}
\newcommand{\HilNormBall}{\HilNormball}
\newcommand{\Hilstar}{\partial \Hil}
\newcommand{\matchquant}{\xi}
\newcommand{\fcoeff}{\omega}
\DeclareMathOperator*{\argmax}{arg\, max\,}
\renewcommand{\Xspace}{\ensuremath{\mathcal{X}}}
\renewcommand{\Yspace}{\ensuremath{\mathcal{Y}}}
\newcommand{\domX}{\mathcal{X}}
\newcommand{\ExsY}{\ensuremath{\Exs_{Y_1^\numobs}}}
\newcommand{\YDATA}{\ensuremath{\{Y_i\}_{i=1}^\numobs}}
\newcommand{\fit}[1]{\ensuremath{f^{#1}}}
\newcommand{\step}[1]{\ensuremath{\alpha^{#1}}}
\newcommand{\KerFun}{\ensuremath{\mathbb{K}}}
\renewcommand{\Hil}{\ensuremath{\mathscr{H}}}
\newcommand{\FSEQ}{\ensuremath{\{\fit{t}\}_{t=0}^\infty}}
\renewcommand{\muhat}{\mu} 
\renewcommand{\d}{\text{d}}
\newcommand{\pseudinv}{\dagger}
\newcommand{\statdim}{d_{\numobs}}
\newcommand{\level}{\sigma}
\newcommand{\Avgf}[1]{\bar{f}^{#1}}
\newcommand{\opcrit}{\rho_\numobs}
\newcommand{\mMC}{$m$-$M$-condition}
\newcommand{\PhiBC}{$\phi'$-boundedness}
\newcommand{\HANASET}{\ensuremath{\mathbb{S}}}
\newcommand{\EllipseBar}{\ensuremath{\widebar{\Ellipse}}}
\newcommand{\mprobx}{\ensuremath{\mprob_X}}
\newcommand{\divKL}[2]{\|#1,~#2\|_{\text{KL}}}
\newenvironment{hclist}
 {\begin{list}{$\bullet$}
 {\setlength{\topsep}{0in} \setlength{\partopsep}{0in}
  \setlength{\parsep}{0in} \setlength{\itemsep}{\parskip}
  \setlength{\leftmargin}{0.15in} \setlength{\rightmargin}{0.08in}
  \setlength{\listparindent}{0in} \setlength{\labelwidth}{0.08in}
  \setlength{\labelsep}{0.1in} \setlength{\itemindent}{0in}}}
 {\end{list}}
\renewcommand{\thefootnote}{\fnsymbol{footnote}}
\long\def\@makecaption#1#2{
        \vskip 0.8ex
        \setbox\@tempboxa\hbox{\small {\bf #1:} #2}
        \parindent 1.5em  
        \dimen0=\hsize
        \advance\dimen0 by -3em
        \ifdim \wd\@tempboxa >\dimen0
                \hbox to \hsize{
                        \parindent 0em
                        \hfil 
                        \parbox{\dimen0}{\def\baselinestretch{0.96}\small
                                {\bf #1.} #2
                                } 
                        \hfil}
        \else \hbox to \hsize{\hfil \box\@tempboxa \hfil}
        \fi
        }
\begin{document}

\begin{center}

{\bf{\Large{Early stopping for kernel boosting algorithms: 
A general analysis with localized complexities}}}

\vspace{.2in} 

{\large{
\begin{tabular}{ccccc}
Yuting Wei$^{1*}$ && Fanny Yang$^{2*}$
 &&  
Martin J. Wainwright$^{1, 2}$ 
\end{tabular}
}}

\vspace{.2in} \today

\begin{tabular}{c}
 Department of Statistics$^1$, and \\ Department of Electrical
 Engineering and Computer Sciences$^2$ \\ UC Berkeley, Berkeley,
 CA 94720
\end{tabular}

\begin{abstract}
  Early stopping of iterative algorithms is a widely-used form of
  regularization in statistics, commonly used in conjunction with
  boosting and related gradient-type algorithms. Although consistency
  results have been established in some settings, such estimators are
  less well-understood than their analogues based on penalized
  regularization.  In this paper, for a relatively broad class of loss
  functions and boosting algorithms (including $L^2$-boost, LogitBoost
  and AdaBoost, among others), we exhibit a direct connection between
  the performance of a stopped iterate and the localized Gaussian
  complexity of the associated function class.  This connection allows
  us to show that local fixed point analysis of Gaussian or Rademacher
  complexities, now standard in the analysis of penalized estimators,
  can be used to derive optimal stopping rules.  We derive such
  stopping rules in detail for various kernel classes, and illustrate
  the correspondence of our theory with practice for Sobolev kernel
  classes.
  \end{abstract}

\end{center}

\footnotetext[1]{Yuting Wei and Fanny Yang contributed equally to this work.}
\footnotetext[2]{Keywords: Boosting, kernel, early stopping, regularization, localized complexities}

\renewcommand*{\thefootnote}{\arabic{footnote}}

\section{Introduction}
While non-parametric models offer great flexibility, they can also
lead to overfitting, and thus poor generalization performance.  For
this reason, it is well-understood that procedures for fitting
non-parametric models must involve some form of regularization.  When
models are fit via a form of empirical risk minimization, the most
classical form of regularization is based on adding some type of
penalty to the objective function.  An alternative form of
regularization is based on the principle of \emph{early stopping}, in
which an iterative algorithm is run for a pre-specified number of
steps, and terminated prior to convergence.

While the basic idea of early stopping is fairly old
(e.g.,~\cite{Strand74,AndrerssenPrenter81,Wahba87}), recent years have
witnessed renewed interests in its properties, especially in the
context of boosting algorithms and neural network training
(e.g.,~\cite{prechelt1998early,caruana2001overfitting}). Over the past
decade, a line of work has yielded some theoretical insight into early
stopping, including works on classification error for boosting
algorithms~\cite{Bartlett07, Freund97, Jiang04, Mason99,
  Yao07,ZhangYu05}, $L^2$-boosting algorithms for
regression~\cite{BuhlmannYu03,BuehlHot07}, and similar gradient
algorithms in reproducing kernel Hilbert spaces
(e.g.~\cite{Caponnetto06,CaponnettoYao06,DeVito10, Yao07,
  RasWaiYu14}).  A number of these papers establish consistency
results for particular forms of early stopping, guaranteeing that the
procedure outputs a function with statistical error
that converges to zero as the
sample size increases.  On the other hand, there are relatively few
results that actually establish \emph{rate optimality} of an early
stopping procedure, meaning that the achieved error matches known
statistical minimax lower bounds.  To the best of our knowledge,
B\"uhlmann and Yu~\cite{BuhlmannYu03} were the first to prove optimality
for early stopping of $L^2$-boosting as applied to spline classes,
albeit with a rule that was not computable from the data.  Subsequent
work by Raskutti et al.~\cite{RasWaiYu14} refined this analysis of
$L^2$-boosting for kernel classes and first established an important
connection to the localized Rademacher complexity; see also the
related work~\cite{Yao07, Ros15, Camo16} with rates for particular
kernel classes.

More broadly, relative to our rich and detailed understanding of
regularization via penalization (e.g., see the books~\cite{GyorfiEtal,
  vanderVaart96,vandeGeer00,Wai17} and papers~\cite{Bar05,Kolt06} for
details), our understanding of early stopping regularization is not as
well developed.  Intuitively, early stopping should depend on the same
bias-variance tradeoffs that control estimators based on penalization.
In particular, for penalized estimators, it is now well-understood
that complexity measures such as the \emph{localized Gaussian width},
or its Rademacher analogue, can be used to characterize their
achievable rates~\cite{Bar05,Kolt06,vandeGeer00,Wai17}.  Is such a
general and sharp characterization also possible in the context of
early stopping?

The main contribution of this paper is to answer this question in the
affirmative for the early stopping of boosting algorithms for a
certain class of regression and classification problems involving
functions in reproducing kernel Hilbert spaces (RKHS). A standard way
to obtain a good estimator or classifier is through minimizing some
penalized form of loss functions of which the method of kernel ridge
regression~\cite{Wahba90} is a popular choice.  Instead, we consider
an iterative update involving the kernel that is derived from a greedy
update.  Borrowing tools from empirical process theory, we are able to
characterize the ``size'' of the effective function space explored by
taking $T$ steps, and then to connect the resulting estimation error
naturally to the notion of localized Gaussian width defined with
respect to this effective function space.  This leads to a principled
analysis for a broad class of loss functions used in practice,
including the loss functions that underlie the $L^2$-boost, LogitBoost
and AdaBoost algorithms, among other procedures.

The remainder of this paper is organized as follows.  In
Section~\ref{SecBackground}, we provide background on boosting methods
and reproducing kernel Hilbert spaces, and then introduce the updates
studied in this paper.  Section~\ref{SecMain} is devoted to statements
of our main results, followed by a discussion of their consequences
for particular function classes in Section~\ref{SecExamples}.  We
provide simulations that confirm the practical effectiveness of our
stopping rules, and show close agreement with our theoretical
predictions.  In Section~\ref{SecProofs}, we provide the proofs of our
main results, with certain more technical aspects deferred to the
appendices.

\section{Background and problem formulation}
\label{SecBackground}

The goal of prediction is to learn a function that maps
\emph{covariates} $x \in \Xspace$ to \emph{responses} $y \in \Yspace$.
In a regression problem, the responses are typically real-valued,
whereas in a classification problem, the responses take values in a
finite set.  In this paper, we study both regression
($\Yspace = \real$) and classification problems (e.g.,
$\Yspace = \{-1, +1 \}$ in the binary case). 
Our primary focus is on the case of
\emph{fixed design}, in which we observe a collection of $\numobs$
pairs of the form $ \{(x_i, Y_i) \}_{i=1}^\numobs$, where each $x_i
\in \Xspace$ is a fixed covariate, whereas $Y_i \in \Yspace$ is a
random response drawn independently from a distribution $\PP_{Y|x_i}$
which depends on $x_i$.  Later in the paper, we also discuss the
consequences of our results for the case of random design, where the
$(X_i, Y_i)$ pairs are drawn in an i.i.d. fashion from the joint
distribution $\PP = \mprobx \PP_{Y|X}$ for some distribution $\mprobx$ on the 
covariates.

In this section, we provide some necessary background on a
gradient-type algorithm which is often referred to as \emph{boosting}
algorithm.
We also discuss briefly about the reproducing kernel
Hilbert spaces before turning to a precise formulation of the problem
that is studied in this paper.

\subsection{Boosting and early stopping}

Consider a cost function $\phi: \real \times \real \rightarrow [0,
  \infty)$, where the non-negative scalar $\phi(y, \secarg)$ denotes
  the cost associated with predicting $\secarg$ when the true response
  is $y$.  Some common examples of loss functions $\phi$ that we
  consider in later sections include:
\bcar
  \item the \emph{least-squares loss} $\phi(y, \secarg) \defn
    \frac{1}{2}(y-\secarg)^2$ that underlies $L^2$-boosting~\cite{BuhlmannYu03},
  \item the \emph{logistic regression loss} $\phi(y, \secarg) = \ln(1+
     e^{-y \secarg })$ that underlies the LogitBoost algorithm~\cite{friedman00additive,Friedman01}, and
\item the \emph{exponential loss} $\phi(y, \secarg) = \exp( -y
  \secarg)$ that underlies the AdaBoost algorithm~\cite{Freund97}.
\ecar
The least-squares loss is typically used for regression problems
(e.g.,~\cite{BuhlmannYu03,Caponnetto06,CaponnettoYao06,DeVito10,
  Yao07, RasWaiYu14}), whereas the latter two losses are frequently
used in the setting of binary classification
(e.g.,~\cite{Freund97,Mason99,Friedman01}).

Given some loss function $\phi$, we define the \emph{population cost
  functional} $f \mapsto \Loss(f)$ via
\begin{align}
  \label{EqnPopLoss}
\Loss(f) \defn \ExsY \Big[ \frac{1}{\numobs} \sum_{i=1}^\numobs
  \phi\big(Y_i, f(x_i) \big) \Big].
\end{align}
Note that with the covariates $\{x_i\}_{i=1}^\numobs$ fixed, the
functional $\Loss$ is a non-random object.  Given some function space
$\Fclass$, the optimal function\footnote{As clarified in the sequel,
  our assumptions guarantee uniqueness of $\fstar$.} 
  minimizes the
population cost functional---that is
  \begin{align}
  \label{EqnDefnFstar}
    \fstar & \defn  \arg \min_{f \in \Fclass} \Loss(f).
  \end{align}
As a standard example, when we adopt the least-squares loss $\phi(y,
\secarg) = \frac{1}{2}(y-\secarg)^2$, the population minimizer
$\fstar$ corresponds to the conditional expectation $x \mapsto \Exs [Y
  \mid x]$.

Since we do not have access to the population distribution of the
responses however, the computation of $\fstar$ is impossible.  Given
our samples $\YDATA$, we consider instead some procedure applied to
the \emph{empirical loss}
\begin{align}
\label{EqnEmpLoss}
\EmpLoss(f) & \defn \frac{1}{\numobs} \sum_{i=1}^\numobs \phi(Y_i,
f(x_i)),
\end{align}
where the population expectation has been replaced by an empirical
expectation. For example, when $\EmpLoss$ corresponds to the log
likelihood of the samples with $\phi(Y_i, f(x_i)) = \log [\PP(Y_i;
  f(x_i))]$, direct unconstrained minimization of $\EmpLoss$ would
yield the maximum likelihood estimator.

It is well-known that direct minimization of $\EmpLoss$ over a
sufficiently rich function class $\Fclass$ may lead to overfitting.
There are various ways to mitigate this phenomenon, among which the
most classical method is to minimize the sum of the empirical loss
with a penalty regularization term.  Adjusting the weight on the
regularization term allows for trade-off between fit to the data, and
some form of regularity or smoothness in the fit.  The behavior of
such penalized of regularized estimation methods is now quite well
understood (for instance, see the
books~\cite{GyorfiEtal,vanderVaart96,vandeGeer00,Wai17} and
papers~\cite{Bar05,Kolt06} for more details).

In this paper, we study a form of \emph{algorithmic regularization},
based on applying a gradient-type algorithm to $\EmpLoss$ but then
stopping it ``early''---that is, after some fixed number of steps.
Such methods are often referred to as \emph{boosting algorithms},
since they involve ``boosting'' or improve the fit of a function via a
sequence of additive updates (see
e.g. \cite{schapire90strength,Freund97,Breiman98,breiman1999prediction,schapire2003boosting}). Many
boosting algorithms, among them AdaBoost~\cite{Freund97},
$L^2$-boosting~\cite{BuhlmannYu03} and
LogitBoost~\cite{friedman00additive,Friedman01}, can be understood as
forms of functional gradient methods~\cite{Mason99,Friedman01}; see
the survey paper~\cite{BuehlHot07} for further background on boosting.
The way in which the number of steps is chosen is referred to as a
stopping rule, and the overall procedure is referred to as \emph{early
  stopping} of a boosting algorithm.


\begin{figure}[h]
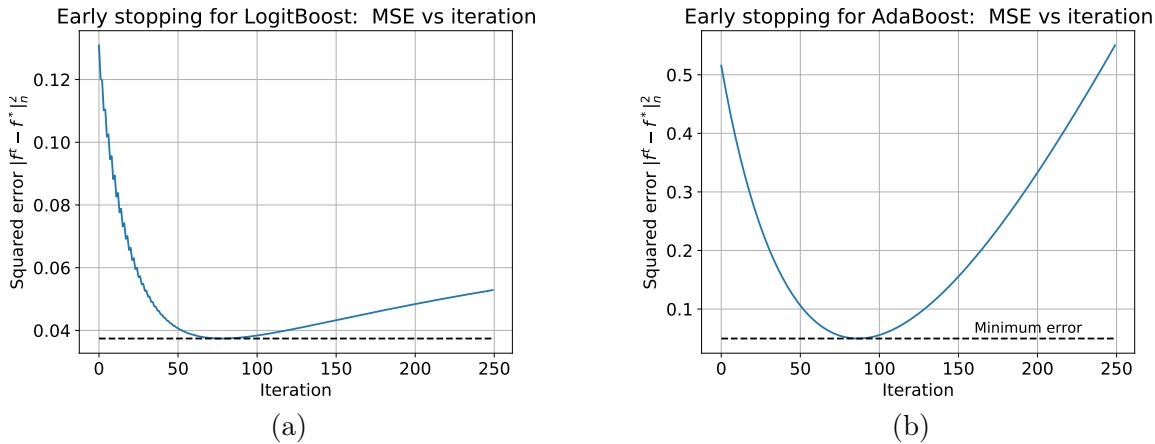

  \begin{center}
    \begin{tabular}{ccc}
      \widgraph{.45\textwidth}{fig_good_logitboost_early} &&
      \widgraph{.45\textwidth}{fig_good_adaboost_early}\\
      (a) & & (b)
    \end{tabular}
    \caption{Plots of the squared error
      $\|\fit{t} - \fstar\|_\numobs^2 = \frac{1}{\numobs}
      \sum_{i=1}^\numobs (\fit{t}(x_i) - \fstar(x_i))^2$ versus the
      iteration number $t$ for (a) LogitBoost using a first-order
      Sobolev kernel (b) AdaBoost using the same first-order Sobolev
      kernel $\KerFun(x, x') = 1 + \min(x, x')$ which generates a
      class of Lipschitz functions (splines of order one).  Both plots
      correspond to a sample size $n=100$.}
    \label{FigEarly}
  \end{center}
\end{figure}

In more detail, a broad class of boosting algorithms~\cite{Mason99}
generate a sequence $\FSEQ$ via updates of the form
\begin{align}
  \label{EqnGeneralUpdate}
\fit{t+1} & = \fit{t} - \step{t} g^t \quad \text{ with } \quad g^t \propto \argmax_{\|d\|_\Fclass \leq 1} \inprod{\nabla \EmpLoss(\fit{t})}{d(x_1^\numobs)},
\end{align}
where the scalar $\{\step{t}\}_{t=0}^{\infty}$ is a sequence of step
sizes chosen by the user, the constraint $\|d\|_\Fclass \leq 1$
defines the unit ball in a given function class $\Fclass$, $\nabla
\EmpLoss(f) \in \real^\numobs$ denotes the gradient taken at the
vector $\big(f(x_1), \ldots, f(x_\numobs))$, and $\inprod{h}{g}$ is
the usual inner product between vectors $h, g \in \real^\numobs$.  For
non-decaying step sizes and a convex objective $\EmpLoss$, running
this procedure for an infinite number of iterations will lead to a
minimizer of the empirical loss, thus causing overfitting.  In order
to illustrate this phenomenon, Figure~\ref{FigEarly} provides plots of
the squared error $\|f^t - \fstar\|_\numobs^2 \defn \frac{1}{\numobs}
\sum_{i=1}^\numobs \big(f^t(x_i) - \fstar(x_i) \big)^2$ versus the
iteration number, for LogitBoost in panel (a) and AdaBoost in panel
(b).  See Section~\ref{SecSimulations} for more details on exactly how
these experiments were conducted.

In the plots in Figure~\ref{FigEarly}, the dotted line indicates the
minimum mean-squared error $\opcrit^2$ over all iterates of that
particular run of the algorithm.  Both plots are qualitatively
similar, illustrating the existence of a ``good'' number of iterations
to take, after which the MSE greatly increases.  Hence a natural
problem is to decide at what iteration $T$ to stop such that the
iterate $f^T$ satisfies bounds of the form
\begin{align}
  \label{EqnHighLevelBounds}
\Loss(f^T) - \Loss(f^*) & \precsim \opcrit^2 \quad \mbox{and}
\quad \|f^T - \fstar\|_\numobs^2 \precsim \opcrit^2
\end{align}
with high probability. Here $f(n) \precsim g(n)$
indicates that $f(n) \leq cg(n)$ for some universal constant $c \in (0, \infty)$.
The main results of this paper provide a
stopping rule $T$ for which bounds of the
form~\eqref{EqnHighLevelBounds} do in fact hold with high probability
over the randomness in the observed responses.

Moreover, as shown by our later results, under suitable regularity
conditions, the expectation of the minimum squared error $\opcrit^2$
is proportional to the \emph{statistical minimax risk} $\inf_{\fhat}
\sup_{f \in \Fspace} \Exs[\Loss(\fhat) - \Loss(f)]$, where the infimum
is taken over all possible estimators $\fhat$. Note that the minimax
risk provides a fundamental lower bound on the performance of any
estimator uniformly over the function space $\Fspace$.  Coupled with
our stopping time guarantee~\eqref{EqnHighLevelBounds}, we are
guaranteed that our estimate achieves the minimax risk up to constant
factors.  As a result, our bounds are unimprovable in general (see
Corollary~\ref{CorOptimal}).

\subsection{Reproducing Kernel Hilbert Spaces}

The analysis of this paper focuses on algorithms with the
update~\eqref{EqnGeneralUpdate} when the function class $\Fclass$ is a
reproducing kernel Hilbert space $\Hil$ (RKHS, see standard sources
~\cite{Wahba90,Gu02,Scholkopf02,BerTho04}), consisting of functions mapping a domain $\Xspace$ to
the real line $\real$.  Any RKHS is defined by a bivariate symmetric
\emph{kernel function}
$\KerFun: \Xspace \times \Xspace \rightarrow \real$ which is required
to be positive semidefinite, i.e. for any integer $N\geq 1$ and a
collection of points $\{x_j\}_{j=1}^N$ in $\Xspace$, the matrix
$[\KerFun(x_i, x_j)]_{ij} \in \real^{N \times N}$ is positive
semidefinite.

The associated RKHS is the closure of the linear span of 
functions in the form $f(\cdot) =
\sum_{j \geq 1} \omega_j \KerFun(\cdot, x_j)$, where
$\{x_j\}_{j=1}^\infty$ is some collection of points in $\Xspace$, and
$\{\omega_j \}_{j=1}^\infty$ is a real-valued sequence.  For two
functions $f_1, f_2 \in \Hil$ which can be expressed as a finite sum
$f_1(\cdot) = \sum_{i=1}^{\ell_1}\alpha_i \Kerfunc(\cdot, x_i)$ and
$f_2(\cdot) = \sum_{j=1}^{\ell_2} \beta_j \Kerfunc(\cdot, x_j)$, the
inner product is defined as $\inprodk{f_1}{f_2} = \sum_{i=1}^{\ell_1}
\sum_{j=1}^{\ell_2} \alpha_i \beta_j \Kerfunc(x_i, x_j)$ with induced
norm $\knorm{f_1}^2 = \sum_{i=1}^{\ell_1} \alpha_i^2\Kerfunc(x_i,
x_i)$.  For each $x \in \Xspace$, the function $\KerFun(\cdot , x)$
belongs to $\Hil$, and satisfies the reproducing relation
\begin{align}
  \inprod{f}{\KerFun(\cdot, x)}_{\Hil} = f(x) ~~\text{ for all } f \in
  \Hil.
\end{align}

Moreover, when the covariates $X_i$ are drawn i.i.d. from a distribution $\mprobx$
with domain $\domX$ we can invoke Mercer's theorem which states that
 any function in $\Hil$ can be represented as
\begin{align}
\label{EqnMercer}
\Kerfunc(x,x') & = \sum_{k=1}^\infty \mu_k \phi_k(x) \phi_k(x'),
\end{align}
where $\mu_1 \geq \mu_2 \geq \dots \geq 0$ are the \emph{eigenvalues}
of the kernel function $\Kerfunc$ and $\{\phi_k\}_{k=1}^\infty$ are
eigenfunctions of $\Kerfunc$ which form an orthonormal basis of
$L^2(\domX, \mprobx)$ 
with the inner product $\langle f, g\rangle \defn \int_{\domX} f(x)g(x)
\d \mprobx(x)$.
We refer the
reader to the standard
sources~\cite{Wahba90,Gu02,Scholkopf02,BerTho04} for more details on
RKHSs and their properties.

Throughout this paper, we assume that the kernel function is uniformly
bounded, meaning that there is a constant $L$ such that \mbox{$\sup_{x
    \in \Xspace} \KerFun(x,x) \leq L$.}  Such a boundedness condition
holds for many kernels used in practice, including the Gaussian,
Laplacian, Sobolev, other types of spline kernels, as well as any
trace class kernel with trigonometric eigenfunctions.  By rescaling
the kernel as necessary, we may assume without loss of generality that
$L = 1$.  As a consequence, for any function $f$ such that
$\HilNorm{f} \leq r$, we have by the reproducing relation that
\begin{align*}
\|f\|_\infty = \sup_{x} \inprodk{f}{\Kerfunc(\cdot, x)} \leq
\HilNorm{f} \sup_x \HilNorm{\Kerfunc(\cdot, x)} \leq r.
\end{align*}

Given samples $\{(x_i,y_i)\}_{i=1}^\numobs$, by the representer
theorem~\cite{KimWah71}, it is sufficient to restrict ourselves to the
linear subspace $\SubK = \widebar{\text{span}}\{\Kerfunc(\cdot,
x_i)\}_{i=1}^\numobs$, for which all $f\in \SubK$ can be expressed as
\begin{align}
\label{EqnKernelRep}
f & = \frac{1}{\sqrt{\numobs}} \sum_{i=1}^\numobs \fcoeff_i
\Kerfunc(\cdot, x_i)
\end{align}
for some coefficient vector $\fcoeff \in \real^\numobs$.  Among those
functions which achieve the infimum in expression \eqref{EqnPopLoss},
let us define $\fstar$ as the one with the minimum Hilbert norm.  This
definition is equivalent to restricting $\fstar$ to be in the linear
subspace $\SubK$.

\subsection{Boosting in kernel spaces}
\label{SecBoostKer}

For a finite number of covariates $x_i$ from $i =1 \dots \numobs$, let
us define the \emph{normalized kernel matrix} $\Kmat \in
\real^{\numobs \times \numobs}$ with entries $\Kmat_{ij} =
\Kerfunc(x_i, x_j)/\numobs$.  
Since we can restrict the minimization of $\EmpLoss$ and $\Loss$ from
$\Hil$ to the subspace $\SubK$ w.l.o.g., using
expression~\eqref{EqnKernelRep} we can then write the function value
vectors $f(x_1^\numobs) \defn (f(x_1), \dots, f(x_\numobs))$ as
$f(x_1^\numobs) = \sqrt{\numobs}K \omega$. As there is a one-to-one
correspondence between the $\numobs$-dimensional vectors
$f(x_1^\numobs) \in \real^\numobs$ and the corresponding function
$f \in \SubK$ in $\Hil$ by the representer theorem, minimization of an
empirical loss in the subspace $\SubK$ essentially becomes the
$\numobs$-dimensional problem of fitting a response vector $y$ over
the set $\range(K)$. In the sequel, all updates will thus be performed
on the function value vectors $f(x_1^\numobs)$.

With a change of variable $d(x_1^\numobs) = \sqrt{\numobs}\sqrt{\Kmat}
z$ we then have
\begin{align*}
d^t(x_1^\numobs) &\defn \argmax \limits_{\substack{\|d\|_{\Hil} \leq 1
    \\ d \in \range(K)}} \inprod{\nabla \EmpLoss(f^t)}{d(x_1^\numobs)} 
= \frac{\sqrt{\numobs} K \nabla \EmpLoss(f^t)}{\sqrt{\nabla
    \EmpLoss(f^t) K \nabla \EmpLoss(f^t)}}.
\end{align*}
In this paper, we study the choice $g^t = \inprod{\nabla
  \EmpLoss(f^t)}{d^t(x_1^\numobs)} d^t$ in the boosting
update~\eqref{EqnGeneralUpdate}, so that the function value iterates
take the form
\begin{align}
  \label{EqnUpdate}
\fit{t+1}(x_1^\numobs) & = \fit{t}(x_1^\numobs) - \step{} \numobs
\Kmat \nabla \EmpLoss(\fit{t}),
\end{align}
where $\step{} > 0$ is a constant stepsize choice. Choosing
$\fit{0}(x_1^\numobs) = 0$ ensures that all iterates
$\fit{t}(x_1^\numobs)$ remain in the range space of $\Kmat$.

In this paper we consider the following three error measures for an 
estimator $\hatf$:
\begin{align*}
  L^2(\PP_\numobs) \text{ norm: }~~~
  &\|\hatf - \fstar\|_\numobs^2 = \frac{1}{\numobs} \sum_{i=1}^\numobs
  \big(\hatf(x_i) - \fstar(x_i) \big)^2,\\
  L^2(\mprobx) \text{ norm: }~~~
  &\|\hatf - \fstar\|_2^2 \defn \Exs \big(\hatf(X) - \fstar(X) \big)^2,\\
  \text{Excess risk: }~~~ &\Loss(\hatf) - \Loss(f^*),
\end{align*}
where the expectation in the $L^2(\mprobx)$-norm is taken over random
covariates $X$ which are independent of the samples $(X_i, Y_i)$ used
to form the estimate $\hatf$.  Our goal is to propose a stopping time
$T$ such that the averaged function $\hatf = \frac{1}{T} \sum_{t=1}^T
f^t$ satisfies bounds of the type~\eqref{EqnHighLevelBounds}.  We
begin our analysis by focusing on the empirical $L^2(\PP_\numobs)$
error, but as we will see in Corollary~\ref{CorRandDesign}, bounds on
the empirical error are easily transformed to bounds on the population
$L^2(\mprobx)$ error.  Importantly, we exhibit such bounds with a
statistical error term $\delta_\numobs$ that is specified by the
\emph{localized Gaussian complexity} of the kernel class.


\section{Main results}
\label{SecMain}
We now turn to the statement of our main results, beginning with the
introduction of some regularity assumptions.

\subsection{Assumptions}

Recall from our earlier set-up that we differentiate between the
empirical loss function $\EmpLoss$ in expression~\eqref{EqnEmpLoss},
and the population loss $\Loss$ in expression~\eqref{EqnPopLoss}.
Apart from assuming differentiability of both functions, all of our
remaining conditions are imposed on the population loss.  Such conditions
at the population level are weaker than their analogues at the empirical
level.

For a given radius $r > 0$, let us define the Hilbert ball around the
optimal function $\fstar$ as
\begin{align}
  \HilNormball(\fstar, r) \defn \{f \in \Hil \mid \HilNorm{f -\fstar}
  \leq r\}.
\end{align}
Our analysis makes particular use of this ball defined for the radius
$\Radius^2 \defn 2\max\{\HilNorm{\fstar}^2,~ 32, \sigma^2\}$ where the effective noise level $\sigma$ is defined in the sequel. 

We assume that the population loss is $\strongcon$-strongly
convex and $\lipcon$-smooth over $\HilNormball(\fstar, 2\Radius)$, meaning that the \\
$\text{\bf{\mMC}:}$
\begin{align*}
  \frac{\strongcon}{2} \|f - g\|_\numobs^2 \leq
   \Loss(f) - \Loss(g) - &\inprod{\nabla \Loss(g)}{f(x_1^n) - g(x_1^n)} \\
    &\hspace*{1.5cm}\leq \frac{\lipcon}{2} \|f - g\|_\numobs^2 
\end{align*}
holds for all $f, g \in \HilNormball(\fstar, 2 \Radius)$ and all
design points $\{x_i\}_{i=1}^\numobs$.  In addition, we assume that
the function $\phi$ is $M$-Lipschitz in its second argument over the
interval $\theta \in [\min \limits_{i \in [\numobs]} \fstar(x_i) -
  2\Radius, \max \limits_{i \in [\numobs]} \fstar(x_i) + 2\Radius]$.
To be clear, here $\nabla \Loss(g)$ denotes the vector in
$\real^\numobs$ obtained by taking the gradient of $\Loss$ with
respect to the vector $g(x_1^\numobs)$.  It can be verified by a
straightforward computation that when $\Loss$ is induced by the
least-squares cost $\phi(y, \theta) = \frac{1}{2} (y - \theta)^2$, the
\mMC~ holds for $\strongcon = \lipcon = 1$.  The logistic and
exponential loss satisfy this condition (see supp. material), where it
is key that we have imposed the condition \emph{only locally} on the
ball $\HilNormball(\fstar, 2 \Radius)$.

In addition to the least-squares cost, our theory also applies to losses
$\Loss$ induced by scalar functions $\phi$ that satisfy the\\
$\text{\bf{\PhiBC:}}$
\begin{align*}
&\max_{i=1, \ldots, \numobs} \left|\frac{\partial
  \phi(y, \theta)}{\partial \theta}\right|_{\theta = f(x_i)} \leq
\PhiB \\
&\hspace*{1cm}\mbox{ for all $f \in \HilNormBall(\fstar, 2 \Radius)$ and
} y\in\Yspace.
\end{align*}
This condition holds with $\PhiB = 1$ for the logistic loss for all
$\Yspace$, and $\PhiB = \exp(2.5 \Radius)$ for the exponential loss for
binary classification with $\Yspace = \{-1,1\}$, using our kernel
boundedness condition.  Note that whenever this condition holds with
some finite $\PhiB$, we can always rescale the scalar loss $\phi$ by $1/B$ so
that it holds with $\PhiB = 1$, and we do so in order to simplify
the statement of our results.

\subsection{Upper bound in terms of localized Gaussian width}

Our upper bounds involve a complexity measure known as the localized
Gaussian width.  In general, Gaussian widths are widely used to obtain
risk bounds for least-squares and other types of $M$-estimators.  In
our case, we consider Gaussian complexities for ``localized'' sets of
the form
\begin{align}
\Ellipse_\numobs(\delta, 1) \defn \Big \{f-g \mid 
\|f - g\|_\Hil \leq 1, ~\|f - g\|_\numobs \leq \delta \Big \}
\end{align}
with $f,g\in\Hil$.
The Gaussian complexity localized at scale $\delta$ is given by
\begin{align}
\label{EqnGW}
\GW \big(\Ellipse_\numobs(\delta, 1) \big) & \defn \Exs \Big[ \sup_{g \in
    \Ellipse_\numobs(\delta, 1)} \frac{1}{\numobs} \sum_{i=1}^\numobs w_i
  g(x_i) \Big],
\end{align}
where $(w_1, \ldots, w_\numobs)$ denotes an i.i.d. sequence of
standard Gaussian variables.

An essential quantity
in our theory is specified by a certain fixed point
equation that is now standard in empirical process
theory~\cite{vandeGeer00, Bar05, Kolt06, RasWaiYu14}.
Let us define
the \emph{effective noise level}
\begin{align}
\label{EqnLevel}
  \level & \defn \begin{cases} \min \Big \{t \, \mid \, \max
    \limits_{i=1, \ldots, \numobs} \Exs[e^{((Y_i - f^*(x_i))^2/t^2)}]
    < \infty \Big \} ~~\mbox{for L.S.} \\
    4 \, (2\lipcon+1)(1 + 2\Radius) ~~\mbox{for $\phi'$-bounded losses.}
  \end{cases}
\end{align}
The \emph{critical radius} $\critquant$ is the smallest positive scalar such
that
\begin{align}
\label{Eqn75perChocolate}
\frac{\GW(\Ellipse_\numobs(\delta,1))}{\delta} ~\leq~ \frac{\delta}{\level}.
\end{align}
We note that past work on localized Rademacher and Gaussian
complexity~\cite{Men02,Bar05} guarantees that there exists a unique
$\critquant > 0$ that satisfies this condition, so that our definition
is sensible.


\vspace*{0.2cm}

\subsubsection{Upper bounds on excess risk and empirical $L^2(\mprob_\numobs)$-error}

With this set-up, we are now equipped to state our main theorem.  It
provides high-probability bounds on the excess risk and
$L^2(\mprob_\numobs)$-error of the estimator $\Avgf{T} \defn
\frac{1}{T} \sum_{t=1}^T f^t$ defined by averaging the $T$ iterates of
the algorithm.  It applies to both the least-squares cost function,
and more generally, to any loss function satisfying the \mMC~and the
$\phi'$-boundedness condition.

\begin{theos}
\label{ThmGodWell}
Suppose that the sample size $\numobs$ large enough such that
$\critquant \leq \frac{\lipcon}{\strongcon}$, and we compute the
sequence $\{f^t\}_{t=0}^\infty$ using the update~\eqref{EqnUpdate}
with initialization $f^0 = 0$ and any step size \mbox{$\stsize \in (0,
  \min\{ \frac{1}{\lipcon}, \lipcon \}]$}. Then for any iteration $T
\in \big\{ 0,1,\ldots \lfloor \frac{\strongcon}{8 \lipcon
  \critquant^2} \rfloor \big \}$, the averaged function estimate
$\Avgf{T}$ satisfies the bounds
\begin{subequations}
\begin{align}
  \label{EqnMasterUnwrapped}
  \Loss(\Avgf{T}) - \Loss(\fstar) & \leq C\lipcon
  \Big(\frac{1}{\stsize \strongcon T} + \frac{\critquant^2}{\strongcon^2} \Big), \quad \mbox{and} \\
  \label{EqnEnormBound}
  \enorm{\Avgf{T} - \fstar}^2 & \leq C\Big( \frac{1}{\stsize
    \strongcon T} + \frac{\critquant^2}{\strongcon^2} \Big),
\end{align}
\end{subequations}
where both inequalities hold with probability at least \mbox{$1 - c_1
  \exp (- C_2\frac{m^2\numobs \critquant^2}{ \level^2})$.}
\end{theos}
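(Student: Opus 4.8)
\; The plan is to split the argument into a deterministic optimization part, which analyzes the recursion~\eqref{EqnUpdate} under the smoothness and strong-convexity hypotheses, and a probabilistic part, which controls the discrepancy between the empirical gradient $\nabla\EmpLoss(\fit t)$ and the population gradient $\nabla\Loss(\fit t)$; it is precisely in this discrepancy that the localized Gaussian complexity and the critical radius $\critquant$ enter. Throughout I would work with the function-value vectors $\fit t(x_1^\numobs)\in\real^\numobs$ and diagonalize the normalized kernel matrix $\Kmat=\Umat\Lambda\Umat^\top$, so that the recursion becomes coordinatewise; since the kernel is normalized to have $\|\Kmat\|_{\mathrm{op}}\le 1$, the step-size restriction $\stsize\le 1/\lipcon$ guarantees that the quadratic term in the $\lipcon$-smoothness expansion of $\Loss$ around $\fit t$ is dominated by the first-order decrease. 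Substituting the update into that expansion yields a one-step inequality, schematically of the form
\begin{align*}
\Loss(\fit{t+1}) - \Loss(\fstar) &\;\le\; \Loss(\fit{t}) - \Loss(\fstar)
\;-\; \tfrac{\stsize\numobs}{2}\,\big\|\sqrt{\Kmat}\,\nabla\EmpLoss(\fit t)\big\|_2^2 \\
&\quad +\; \big\langle \nabla\EmpLoss(\fit t) - \nabla\Loss(\fit t),\ \fit t(x_1^\numobs) - \fstar(x_1^\numobs)\big\rangle \;+\; (\text{remainder}),
\end{align*}
in which the inner-product term is the ``gradient-noise'' contribution to be handled probabilistically.

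Next, the $\strongcon$-strongly-convex half of the \mMC, combined with the first-order optimality relation $\Kmat\,\nabla\Loss(\fstar)=0$ (which holds because $\fstar$ minimizes $\Loss$ over the subspace $\SubK$), lets me lower bound the descent term $\|\sqrt{\Kmat}\,\nabla\EmpLoss(\fit t)\|_2^2$ by a constant multiple of the excess risk $\Loss(\fit t)-\Loss(\fstar)$, up to a further noise term. This turns the display into a genuine contraction of the excess risk by a factor controlled by $\stsize\strongcon$. Summing over $t=1,\dots,T$, telescoping the differences $\Loss(\fit{t+1})-\Loss(\fit t)$, and applying Jensen's inequality $\Loss(\Avgf T)-\Loss(\fstar)\le\frac1T\sum_{t=1}^T\big(\Loss(\fit t)-\Loss(\fstar)\big)$ then produces the optimization error $\tfrac{1}{\stsize\strongcon T}$ together with an averaged noise term; the $L^2(\PP_\numobs)$ bound~\eqref{EqnEnormBound} follows from~\eqref{EqnMasterUnwrapped} via $\tfrac{\strongcon}{2}\enorm{\Avgf T-\fstar}^2\le\Loss(\Avgf T)-\Loss(\fstar)$, again by the \mMC.

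The technical heart is the accumulated gradient-noise term, and this is where $\critquant$ appears. After rescaling by $\HilNorm{\fit t-\fstar}$, each difference $\fit t(x_1^\numobs)-\fstar(x_1^\numobs)$ lies in the localized set $\Ellipse_\numobs(\delta,1)$ for an appropriate $\delta$, so it suffices to bound, uniformly over $g\in\Ellipse_\numobs(\delta,1)$, the process $\tfrac1\numobs\sum_{i=1}^\numobs\big(\phi'(Y_i,\fit t(x_i))-\Exs[\phi'(Y_i,\fit t(x_i))]\big)g(x_i)$. The $\lipcon$-Lipschitz/$\PhiB$-bounded regularity of $\phi$ makes the summands sub-Gaussian (respectively bounded), so a Talagrand-type concentration around the mean, together with a Gaussian-comparison step identifying the mean with the localized Gaussian complexity $\GW(\Ellipse_\numobs(\delta,1))$ of~\eqref{EqnGW}, shows this supremum is at most of order $\delta\,\critquant+\critquant^2$ on an event of probability at least $1-c_1\exp(-C_2\,\strongcon^2\numobs\critquant^2/\level^2)$, invoking the fixed-point characterization~\eqref{Eqn75perChocolate} of $\critquant$. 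A self-bounding (one-step peeling) argument in the scale $\HilNorm{\fit t-\fstar}$ then converts this into a bound proportional to $\HilNorm{\fit t-\fstar}\big(\critquant\,\enorm{\fit t-\fstar}+\critquant^2\big)$, which is small enough to be reabsorbed into the contraction once $T\le\lfloor\strongcon/(8\lipcon\critquant^2)\rfloor$, leaving the residual statistical term $\critquant^2/\strongcon^2$.

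Two pieces of bookkeeping finish the argument, and the first is also the main obstacle. One must show, by induction on $t$, that every iterate stays inside $\HilNormball(\fstar,2\Radius)$, so that the \mMC\ and $\phi'$-boundedness — assumed only on that ball — apply at each step; the particular choice $\Radius^2=2\max\{\HilNorm{\fstar}^2,32,\level^2\}$ is exactly what makes this induction close, using $\HilNorm{\fit 0-\fstar}=\HilNorm{\fstar}$ and the fact that the update is a descent step on $\EmpLoss$. The difficulty is that the uniform gradient-noise bound of the previous paragraph is valid only on the localized RKHS ball, yet it is needed in order to prove that the iterates remain in that ball — so the ``staying-in-the-ball'' induction and the high-probability event have to be established in tandem, and it is this intertwined argument that yields the exponent $\strongcon^2\numobs\critquant^2/\level^2$ in the failure probability. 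Second, restricting $T$ to the stated range $\{0,1,\dots,\lfloor\strongcon/(8\lipcon\critquant^2)\rfloor\}$ simultaneously keeps the geometric factor bounded away from $1$ and balances the optimization error $1/(\stsize\strongcon T)$ against $\critquant^2/\strongcon^2$. A secondary nuisance is treating the least-squares case (unbounded, merely sub-Gaussian responses, $\level$ defined through an exponential-moment condition) and the $\phi'$-bounded case (logistic, exponential) on an equal footing: the two need slightly different concentration inputs but feed into the same rate through the common role of $\level$ and $\critquant$.
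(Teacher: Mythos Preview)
Your probabilistic programme --- controlling the gradient-noise process $\langle\nabla\Loss-\nabla\EmpLoss,\,\cdot\,\rangle$ uniformly over the localized set via symmetrization/Talagrand, identifying its mean with $\GW(\Ellipse_\numobs(\delta,1))$, invoking the fixed-point relation~\eqref{Eqn75perChocolate}, and then running a peeling argument together with an induction that keeps the iterates inside $\HilNormball(\fstar,2\Radius)$ --- is essentially the paper's Lemmas~2--3 and is correct in outline.

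The deterministic part, however, has a real gap: the step ``lower bound the descent term $\|\sqrt{\Kmat}\,\nabla\EmpLoss(\fit t)\|_2^2$ by a constant multiple of the excess risk $\Loss(\fit t)-\Loss(\fstar)$'' does not hold with a useful constant. The \mMC\ gives strong convexity of $\Loss$ in the $\|\cdot\|_\numobs$-geometry, but the update~\eqref{EqnUpdate} is gradient descent in the \emph{Hilbert} geometry (equivalently, after the change of variable $z=n^{-1/2}(\Kmat^\dagger)^{1/2}\theta$, ordinary gradient descent on $\mathcal J(z)=\Loss(\sqrt n\,\Kmat^{1/2}z)$). The function $\mathcal J$ is $\lipcon$-smooth but \emph{not} strongly convex in $z$: its Hessian is $n\,\Kmat^{1/2}\nabla^2\Loss\,\Kmat^{1/2}$, whose smallest eigenvalue scales with $\lambda_{\min}(\Kmat)$, which is typically zero or exponentially small. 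Hence any PL/contraction argument in the excess risk picks up a factor $\lambda_{\min}(\Kmat)$ and gives vacuous rates; there is no ``genuine contraction by a factor controlled by $\stsize\strongcon$''. The paper circumvents this by \emph{not} seeking contraction: it runs the standard convex--smooth analysis in $z$-coordinates, which yields a one-step bound of the form $\Loss(\fit{t+1})-\Loss(\fstar)\le\frac{1}{2\stsize}\big(\|\DelIt t\|_\Hil^2-\|\DelIt{t+1}\|_\Hil^2\big)+\text{noise}$, i.e.\ the telescoping potential is the \emph{Hilbert norm} $\|\DelIt t\|_\Hil^2=\|z^t-z^*\|_2^2$, not the excess risk. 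Strong convexity of $\Loss$ in the $\|\cdot\|_\numobs$-geometry is used only at the very end, to convert this bound on $\Loss(\fit{t+1})-\Loss(\fstar)$ into one on $\|\DelIt{t+1}\|_\numobs^2$ (so the paper proves~\eqref{EqnEnormBound} first and then deduces~\eqref{EqnMasterUnwrapped} via smoothness, opposite to your order). Your plan will work once you replace the contraction step by this Hilbert-norm telescoping; everything else can stay.
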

\noindent We prove Theorem~\ref{ThmGodWell} in
Section~\ref{SecProofThm}.

A few comments about the constants in our statement: in all cases,
constants of the form $c_j$ are universal, whereas the capital $C_j$
may depend on parameters of the joint distribution and population loss
$\Loss$.  In Theorem~\ref{ThmGodWell}, we have the explicit value $C_2
= \{\frac{\strongcon^2}{\level^2},1\}$ and $C^2$ is proportional to
the quantity $2 \max\{\HilNorm{\fstar}^2,~ 32,~ \sigma^2\}$.  While
inequalities~\eqref{EqnMasterUnwrapped} and~\eqref{EqnEnormBound} are
stated as high probability results, similar bounds for expected loss
(over the response $y_i$, with the design fixed) can be obtained by a
simple integration argument.

In order to gain intuition for the claims in the theorem, note that
apart from factors depending on $(\strongcon, \lipcon)$, the first
term $\frac{1}{\stsize \strongcon T}$ dominates the second term
$\frac{\critquant^2}{\strongcon^2}$ whenever $T \lesssim
1/\critquant^2$.  Consequently, up to this point, taking further
iterations reduces the upper bound on the error. This reduction
continues until we have taken of the order $1/\critquant^2$ many
steps, at which point the upper bound is of the order $\critquant^2$.

More precisely, suppose that we perform the updates with step size
$\stsize = \frac{\strongcon}{\lipcon}$; then, after a total number of
$\tau \defn \frac{1}{\critquant^2\max\{8,\lipcon\}}$ many iterations,
the extension of Theorem~\ref{ThmGodWell} to expectations guarantees
that the mean squared error is bounded as
\begin{align}
\label{EqnLSerrorUB}
\Exs \enorm{\Avgf{\tau} - \fstar}^2 & \leq C' \;
\frac{\critquant^2}{\strongcon^2},
\end{align}
where $C'$ is another constant depending on $\Radius$.  Here we have
used the fact that $\lipcon \geq \strongcon$ in simplifying the
expression.  It is worth noting that guarantee~\eqref{EqnLSerrorUB}
matches the best known upper bounds for kernel ridge regression
(KRR)---indeed, this must be the case, since a sharp analysis of KRR
is based on the same notion of localized Gaussian complexity (e.g. \cite{Bartlett02,Bar05})
.  Thus,
our results establish a strong parallel between the \emph{algorithmic
  regularization} of early stopping, and the \emph{penalized
  regularization} of kernel ridge regression.  Moreover, as will be
clarified in Section~\ref{SecLower}, under suitable regularity
conditions on the RKHS, the critical squared radius $\critquant^2$
also acts as a lower bound for the expected risk, meaning that our upper
bounds are not improvable in general.

Note that the critical radius $\critquant^2$ only depends on our
observations $\{(x_i,y_i)\}_{i=1}^\numobs$ through the solution of
inequality~\eqref{Eqn75perChocolate}.  In many cases, it is possible
to compute and/or upper bound this critical radius, so that a concrete
and valid stopping rule can indeed by calculated in advance.  In
Section~\ref{SecExamples}, we provide a number of settings in which
this can be done in terms of the eigenvalues
$\{\muhat_j\}_{j=1}^\numobs$ of the normalized kernel matrix.


\vspace*{0.2cm}

\subsubsection{Consequences for random design regression}

Thus far, our analysis has focused purely on the case of fixed design,
in which the sequence of covariates $\{x_i\}_{i=1}^\numobs$ is viewed
as fixed.  If we instead view the covariates as being sampled in an
i.i.d. manner from some distribution $\mprobx$ over $\Xspace$, then
the empirical error $\|\fhat - \fstar\|_\numobs^2 = \frac{1}{\numobs}
\sum_{i=1}^\numobs \big(f(x_i) - \fstar(x_i) \big)^2$ of a given
estimate $\fhat$ is a random quantity, and it is interesting to relate
it to the squared population $L^2(\mprobx)$-norm 
$\|\hatf - \fstar\|_2^2 = \Exs \big[ (\hatf(X) - \fstar(X))^2 \big]$.


In order to state an upper bound on this error, we introduce a
population analogue of the critical radius $\critquant$, which we
denote by $\popcritquant$.  Consider the set
\begin{align}
\widebar{\Ellipse}(\delta, 1) & \defn \Big \{f-g \, \mid \,
f,g\in\Hil, ~\|f - g\|_\Hil \leq 1, ~\|f - g\|_2 \leq \delta
\Big \}.
\end{align}
It is analogous to the previously defined set $\Eset{\delta}{1}$,
except that the empirical norm $\|\cdot\|_\numobs$ has been replaced
by the population version.  The population Gaussian complexity
localized at scale $\delta$ is given by
\begin{align}
\label{EqnGW}
\GWP \big(\widebar{\Ellipse}(\delta, 1) \big) & \defn \Exs_{w, X} \Big[
  \sup_{g \in \EllipseBar(\delta, 1)} \frac{1}{\numobs}
  \sum_{i=1}^\numobs w_i g(X_i) \Big],
\end{align}
where $\{w_i\}_{i=1}^\numobs$ are an i.i.d. sequence of standard
normal variates, and $\{X_i\}_{i=1}^\numobs$ is a second
i.i.d. sequence, independent of the normal variates, drawn according
to $\mprobx$.  Finally, the population critical radius
$\popcritquant$ is defined by equation~\eqref{EqnGW}, in which $\GW$
is replaced by $\GWP$.

\begin{cors}
  \label{CorRandDesign}
In addition to the conditions of Theorem~\ref{ThmGodWell}, suppose
that the sequence $\{(X_i, Y_i)\}_{i=1}^\numobs$ of covariate-response
pairs are drawn i.i.d. from some joint distribution $\PP$, and we
compute the boosting updates with step size 
\mbox{$\stsize \in (0, \min\{ \frac{1}{\lipcon}, \lipcon \}]$} and initialization $f^0 = 0$.  Then the
averaged function estimate $\Avgf{T}$ at time $T \defn \lfloor
\frac{1}{\critquant^2\max\{8,\lipcon\}} \rfloor$ satisfies the bound
\begin{align*}
\Exs_X \big(\Avgf{T}(X) - \fstar(X) \big)^2 \; = \; \ltwo{\Avgf{T} -
  \fstar}^2 \leq \tilde{c} \; \popcritquant^2
\end{align*}
with probability at least $1- c_1 \exp (-C_2\frac{m^2\numobs
  \critquant^2}{\level^2})$ over the random samples.
\end{cors}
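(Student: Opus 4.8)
The plan is to deduce the population $L^2(\mprobx)$ bound from the empirical $L^2(\PP_\numobs)$ guarantee of Theorem~\ref{ThmGodWell}, using a uniform comparison of the two norms over a Hilbert ball of constant radius. Three ingredients are needed: (i) an a priori bound on $\HilNorm{\Avgf{T}-\fstar}$; (ii) a one-sided uniform inequality controlling $\ltwo{h}^2$ by $\enorm{h}^2$ and the population critical radius $\popcritquant$, valid over the RKHS unit ball; and (iii) a comparison showing that, with high probability, the empirical critical radius $\critquant$ is no larger than a universal constant multiple of $\popcritquant$. All three are by now standard tools in the empirical-process analysis of kernel methods.

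For ingredient (i): the proof of Theorem~\ref{ThmGodWell} shows that, for every iteration $t$ in its admissible range, the iterate $f^t$ stays inside the Hilbert ball $\HilNormball(\fstar, 2\Radius)$ (this is exactly what allows the local \mMC\ to be used). Since $\Avgf{T} = \frac1T\sum_{t=1}^T f^t$ is a convex combination of these iterates and the Hilbert norm is convex, we also have $\HilNorm{\Avgf{T}-\fstar} \le 2\Radius$. Moreover, at the prescribed stopping time $T = \lfloor \frac{1}{\critquant^2\max\{8,\lipcon\}} \rfloor$, the first ($T$-dependent) term in~\eqref{EqnEnormBound} is of the same order as the second, so Theorem~\ref{ThmGodWell} gives $\enorm{\Avgf{T}-\fstar}^2 \le C'\,\critquant^2$ on an event $\Event_1$ with $\PP[\Event_1]\ge 1 - c_1\exp\!\big(-C_2\tfrac{m^2\numobs\critquant^2}{\level^2}\big)$.

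For ingredients (ii) and (iii): invoking the standard localized-complexity machinery for the function class $\{h\in\Hil : \HilNorm{h}\le 1\}$ --- a peeling argument together with Talagrand's concentration inequality, using the kernel boundedness $\sup_x\KerFun(x,x)\le 1$ to control envelopes and variances (see e.g.~\cite{Men02,Bar05,Kolt06} and the appendices of~\cite{RasWaiYu14}) --- one obtains an event $\Event_2$ with $\PP[\Event_2]\ge 1 - c_1 e^{-c_2\numobs\popcritquant^2}$ on which
\begin{align*}
\big|\,\enorm{g}^2 - \ltwo{g}^2\,\big| \;\le\; \tfrac12\,\ltwo{g}^2 + \tfrac12\,\popcritquant^2 \qquad\text{for all } g\in\Hil \text{ with } \HilNorm{g}\le 1 .
\end{align*}
Rescaling $g = h/\HilNorm{h}$ and multiplying through by $\HilNorm{h}^2$ yields $\ltwo{h}^2 \le 2\,\enorm{h}^2 + \HilNorm{h}^2\,\popcritquant^2$ for every $h\in\Hil$. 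A matching two-sided version of the same concentration argument, combined with the fixed-point characterization~\eqref{Eqn75perChocolate} of the critical radius, shows that on an event $\Event_3$ of comparable probability one has $\critquant \le c_3\,\popcritquant$. Finally, on $\Event_1\cap\Event_2\cap\Event_3$ we apply the rescaled inequality to $h = \Avgf{T}-\fstar$ and combine it with (i) and the $\Event_1$ bound:
\begin{align*}
\ltwo{\Avgf{T}-\fstar}^2 \;\le\; 2\,\enorm{\Avgf{T}-\fstar}^2 + (2\Radius)^2\,\popcritquant^2 \;\le\; \big(2C'c_3^2 + 4\Radius^2\big)\,\popcritquant^2 \;\defn\; \tilde c\,\popcritquant^2 .
\end{align*}
Since $\numobs\popcritquant^2$ and $\numobs\critquant^2$ are of the same order by the comparison in (iii), the event $\Event_1$ is the binding one, so $\Event_1^c\cup\Event_2^c\cup\Event_3^c$ has probability at most $c_1\exp\!\big(-C_2\tfrac{m^2\numobs\critquant^2}{\level^2}\big)$ after adjusting the universal constants $c_1,C_2$, which gives the stated probability.

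The only genuinely nontrivial step is (ii)--(iii): transferring between the empirical and population norms, and between $\critquant$ and $\popcritquant$, uniformly over an RKHS ball. This rests on concentration of suprema of empirical processes around their means (bounded-differences or Talagrand), with the deviation terms controlled precisely because the critical radii are defined through the fixed-point equation~\eqref{Eqn75perChocolate}; the uniform kernel bound $\sup_x\KerFun(x,x)\le 1$ is what keeps the relevant envelopes and variances under control. Everything else --- extracting the Hilbert-norm and empirical-error bounds from Theorem~\ref{ThmGodWell}, the rescaling step, and the bookkeeping of constants and failure probabilities --- is routine.
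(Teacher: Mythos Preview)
Your proposal is correct and follows essentially the same approach as the paper. The paper's own argument for this corollary is only a sketch: it states that the $\ltwo{\cdot}$ and $\enorm{\cdot}$ norms differ by a term proportional to $\popcritquant$ and that $\critquant$ is bounded by $\popcritquant$, then refers the reader to~\cite{Bar05,RasWaiYu14} for details; your ingredients (ii) and (iii) are precisely these two facts, and your ingredient (i)---the a priori Hilbert-norm bound from Lemma~\ref{LemHilBound} carried through to the averaged iterate by convexity---is the rescaling step the paper leaves implicit.
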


The proof of Corollary~\ref{CorRandDesign} follows directly from
standard empirical process theory bounds~\cite{Bar05, RasWaiYu14} on
the difference between empirical risk $\enorm{\Avgf{T} - \fstar}^2$
and population risk $\ltwo{\Avgf{T} - \fstar}^2$. In particular, it
can be shown that $\ltwo{\cdot}$ and $\EmpNorm{\cdot}$ norms differ
only by a factor proportion to $\popcritquant$. Furthermore, one can
show that the empirical critical quantity $\critquant$ is bounded by
the population $\popcritquant$.  By combining both arguments the
corollary follows. We refer the reader to the
papers~\cite{Bar05,RasWaiYu14} for further details on such
equivalences.

It is worth comparing this guarantee with the past work of Raskutti et
al.~\cite{RasWaiYu14}, who analyzed the kernel boosting iterates of
the form~\eqref{EqnUpdate}, but with attention restricted to the special
case of the least-squares loss.  Their analysis was based on first decomposing
the squared error into bias and variance terms, then carefully
relating the combination of these terms to a particular bound on the
localized Gaussian complexity (see equation~\eqref{EqnMendelson}
below).  In contrast, our theory more directly analyzes the effective
function class that is explored by taking $T$ steps, so that the
localized Gaussian width~\eqref{EqnGW} appears more naturally.  In
addition, our analysis applies to a broader class of loss functions.

In the case of reproducing kernel Hilbert spaces, it is possible to
sandwich the localized Gaussian complexity by a function of the
eigenvalues of the kernel matrix.  Mendelson~\cite{Men02} provides
this argument in the case of the localized Rademacher complexity, but
similar arguments apply to the localized Gaussian complexity.  Letting
$\muhat_1 \geq \muhat_2 \geq \cdots \geq \muhat_\numobs \geq 0$ denote
the ordered eigenvalues of the normalized kernel matrix $\Kmat$,
define the function
\begin{align}
\label{EqnMendelson}
\KerR(\delta) & = \frac{1}{\sqrt{\numobs}} \sqrt{ \sum_{j=1}^\numobs
  \min \{ \delta^2, \muhat_j \}}.
\end{align}
Up to a universal constant, this function is an upper bound on the
Gaussian width $\GW \big(\Ellipse(\delta, 1) \big)$ for all $\delta \geq 0$, and up
to another universal constant, it is also a lower bound for all $\delta
\geq \frac{1}{\sqrt{\numobs}}$.
\subsection{Achieving minimax lower bounds}
\label{SecLower}


In this section, we show that the upper bound~\eqref{EqnLSerrorUB}
matches known minimax lower bounds on the error, so that our results
are unimprovable in general.  We establish this result for the class
of \emph{regular kernels}, as previously defined by Yang et
al.~\cite{YanPilWai17}, which includes the Gaussian and Sobolev
kernels as special cases.

The class of regular kernels is defined as follows. Let $\muhat_1 \geq
\muhat_2 \geq \cdots \geq \muhat_\numobs \geq 0$ denote the ordered
eigenvalues of the normalized kernel matrix $\Kmat$, and define the
quantity $\statdim \defn \argmin_{j=1,\dots,\numobs} \{\muhat_j \leq
\critquant^2\}$. A kernel is called \emph{regular} whenever there is a
universal constant $c$ such that the tail sum satisfies
$\sum_{j=\statdim+1}^\numobs \muhat_j \leq c \, \statdim
\critquant^2$. In words, the tail sum of the eigenvalues for regular
kernels is roughly on the same or smaller scale as the sum of the
eigenvalues bigger than $\critquant^2$.

For such kernels and under the Gaussian observation model ($Y_i \sim
N( \fstar(x_i), \sigma^2)$), Yang et al.~\cite{YanPilWai17} prove a
minimax lower bound involving $\critquant$.  In particular, they show
that the minimax risk over the unit ball of the Hilbert space is lower
bounded as
\begin{align}
\label{EqnYun}
\inf_{\fhat} \sup_{\HilNorm{\fstar} \leq 1} \Exs \enorm{\fhat -
  \fstar}^2 \geq c_\ell \critquant^2.
\end{align}
Comparing the lower bound~\eqref{EqnYun} with upper
bound~\eqref{EqnLSerrorUB} for our estimator $\Avgf{T}$ stopped after
$O(1/\critquant^2)$ many steps, it follows that the bounds proven in
Theorem~\ref{ThmGodWell} are unimprovable apart from constant factors.

We now state a generalization of this minimax lower bound, one which
applies to a sub-class of \emph{generalized linear models}, or GLM for
short.  In these models, the conditional distribution of the observed
vector $Y = (Y_1, \ldots, Y_\numobs)$ given $\big( \fstar(x_1),
\ldots, \fstar(x_\numobs) \big)$ takes the form
\begin{align}
  \label{EqnGLM}
  \Prob_\theta(y) = \prod_{i=1}^\numobs \Big[h(y_i)\exp\big(\frac{y_i
      \fstar(x_i) - \Phi(\fstar(x_i))}{s(\sigma)}\big) \Big],
\end{align}
where $s(\sigma)$ is a known scale factor and $\Phi : \real \to \real$
is the cumulant function of the generalized linear model.  As some concrete
examples:
\bcar
\item The linear Gaussian model is recovered by setting $s(\sigma) =
  \sigma^2$ and $\Phi(t) = t^2/2$.
\item The logistic model for binary responses $y \in \{-1, 1\}$ is
  recovered by setting $s(\sigma) = 1$ and $\Phi(t) = \log(1 +
  \exp(t))$.
\ecar

Our minimax lower bound applies to the class of GLMs for which the
cumulant function $\Phi$ is differentiable and has uniformly bounded
second derivative $|\Phi''| \leq L$.  This class includes the linear,
logistic, multinomial families, among others, but excludes (for
instance) the Poisson family.  Under this condition, we have the
following:
\begin{cors}
  \label{CorOptimal}
Suppose that we are given i.i.d. samples $\{y_i\}_{i=1}^{\numobs}$
from a GLM~\eqref{EqnGLM} for some function $\fstar$ in a regular
kernel class with $\|\fstar\|_\Hil \leq 1$.  Then running \mbox{$T
  \defn \lfloor \frac{1}{\critquant^2 \max\{8,\lipcon\}} \rfloor$}
iterations with step size \mbox{$\stsize \in (0,
  \min\{ \frac{1}{\lipcon}, \lipcon \}]$} and
$f^0 = 0$ yields an estimate $\Avgf{T}$ such that
  \begin{align}
    \label{EqnMinimaxGLM}
    \Exs \enorm{\Avgf{T} - \fstar}^2 & \, \asymp \, \inf_{\fhat}
    \sup_{\|\fstar\|_\Hil \leq 1} \Exs \enorm{\fhat - \fstar}^2.
  \end{align}
\end{cors}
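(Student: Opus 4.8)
The plan is to establish the two-sided relation~\eqref{EqnMinimaxGLM} by proving a matching upper bound and lower bound, each of order $\critquant^2$, and then observing that both the estimator's risk and the minimax risk are pinned to this common rate.

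\textbf{Upper bound via Theorem~\ref{ThmGodWell}.} First I would verify that the negative log-likelihood loss of the GLM~\eqref{EqnGLM}, namely $\phi(y,\theta) = \tfrac{1}{s(\sigma)}\big(\Phi(\theta) - y\theta\big)$ (dropping the $\theta$-free term $\log h(y)$), satisfies the hypotheses of Theorem~\ref{ThmGodWell}. Since $\HilNorm{\fstar} \leq 1$ and the kernel is normalized so that $L=1$, the reproducing relation gives $\|\fstar\|_\infty \leq 1$, so the relevant interval $I \defn [\min_i \fstar(x_i) - 2\Radius, \max_i \fstar(x_i) + 2\Radius]$ is a bounded set. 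The population Hessian of $\Loss$ with respect to $f(x_1^\numobs)$ is diagonal with entries $\Phi''(f(x_i))/s(\sigma)$; combining $|\Phi''| \leq L$ with the strict positivity of $\Phi''$ on the compact interval $I$ (automatic for the linear, logistic, and multinomial families, though not for Poisson, which is why it is excluded) yields the \mMC{} for suitable constants $\strongcon,\lipcon$. Likewise $|\partial_\theta\phi| = |\Phi'(\theta) - y|/s(\sigma)$ is bounded on $I$ for the bounded-response families, giving the \PhiBC{} with some finite $\PhiB$, which we rescale to $1$. For $\numobs$ large enough that $\critquant \leq \lipcon/\strongcon$ and $T \geq 1$, applying the expectation form of the bound~\eqref{EqnEnormBound} (cf.~\eqref{EqnLSerrorUB}, obtained from the high-probability statement by the integration argument noted after Theorem~\ref{ThmGodWell}) to $T \defn \lfloor \tfrac{1}{\critquant^2 \max\{8,\lipcon\}}\rfloor \asymp 1/\critquant^2$ gives $\Exs \enorm{\Avgf{T} - \fstar}^2 \lesssim \tfrac{1}{\stsize\strongcon T} + \tfrac{\critquant^2}{\strongcon^2} \asymp \critquant^2$, treating $\stsize,\strongcon,\lipcon,\Radius$ as fixed.

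\textbf{Lower bound.} For the matching lower bound I would use the standard reduction to a multiway hypothesis test over a local packing of the unit Hilbert ball, exactly as in Yang et al.~\cite{YanPilWai17}; the only GLM-specific ingredient is a Kullback--Leibler bound. For $f,g \in \Hil$, the product structure of~\eqref{EqnGLM} and the exponential-family identity $\Exs_\theta[Y] = \Phi'(\theta)$ give
\begin{align*}
\kull{\Prob_f}{\Prob_g} &= \frac{1}{s(\sigma)}\sum_{i=1}^\numobs \Big[\Phi(g(x_i)) - \Phi(f(x_i)) - \Phi'(f(x_i))\big(g(x_i) - f(x_i)\big)\Big] \\
&\leq \frac{L}{2\, s(\sigma)}\; \numobs\, \enorm{f - g}^2,
\end{align*}
where the last step is a second-order Taylor expansion of $\Phi$ together with $|\Phi''| \leq L$. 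This is precisely the analogue of the Gaussian identity $\kull{\Prob_f}{\Prob_g} = \tfrac{\numobs}{2\sigma^2}\enorm{f-g}^2$ with $\sigma^2$ replaced by $s(\sigma)/L$. Feeding this into the Fano/local-packing argument of~\cite{YanPilWai17}—with the regular-kernel hypothesis being exactly what forces that argument to produce the sharp rate $\critquant^2$ rather than something smaller—yields $\inf_{\fhat}\sup_{\HilNorm{\fstar} \leq 1}\Exs \enorm{\fhat - \fstar}^2 \geq c_\ell\,\critquant^2$.

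\textbf{Main obstacle.} The genuinely delicate points are: (i) establishing \emph{local} strong convexity of the population loss, which is where the kernel boundedness is essential—$\Phi''$ need not be bounded away from zero globally, so one must exploit that $\fstar$, and hence the interval $I$, lies in a fixed compact set; and (ii) checking that the critical radius $\critquant$ governing the upper bound (defined by~\eqref{Eqn75perChocolate} with the $\phi'$-bounded noise level $\level = 4(2\lipcon+1)(1+2\Radius)$) and the critical radius implicit in the lower bound (defined with the GLM noise scale $\sqrt{s(\sigma)/L}$) agree up to universal constants. Since $\Phi''$ is pinched between two positive constants on $I$, and $s(\sigma)$, $L$, $\Radius$ are all treated as fixed, the two fixed-point equations differ only by a constant rescaling of $\delta$, so their solutions are of the same order. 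Combining (i)--(ii) with the two displayed bounds then gives~\eqref{EqnMinimaxGLM}.
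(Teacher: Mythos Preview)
Your proposal is correct and follows essentially the same route as the paper: the upper bound is taken from Theorem~\ref{ThmGodWell} (as in~\eqref{EqnLSerrorUB}), and the lower bound is obtained by feeding a GLM Kullback--Leibler bound of the form $\kull{\Prob_f}{\Prob_g} \lesssim \tfrac{\numobs L}{s(\sigma)}\enorm{f-g}^2$ into the Fano/local-packing argument of Yang et al.~\cite{YanPilWai17} for regular kernels. Your derivation of the KL bound via the exponential-family identity $\Exs_\theta[Y]=\Phi'(\theta)$ and a second-order Taylor expansion is exactly what the paper does (the paper routes through Leibniz's rule to recover that identity, but the content is the same), and your points~(i)--(ii) about verifying the \mMC{} for the GLM loss and reconciling the two noise scales are details the paper leaves implicit but which you are right to flag.
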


Here $f(n) \asymp g(n)$ means $f(n) = c g(n)$ up to a universal constant $c \in (0,\infty)$.
As always, in the minimax claim~\eqref{EqnMinimaxGLM}, the infimum is
taken over all measurable functions of the input data and the
expectation is taken over the randomness of the response variables
$\{Y_i\}_{i=1}^\numobs$.  Since we know that 
\mbox{$\Exs \enorm{\Avgf{T} -
  \fstar}^2 \precsim \critquant^2$}, the way to prove
bound~\eqref{EqnMinimaxGLM} is by establishing 
\mbox{$\inf_{\fhat}
\sup_{\|\fstar\|_\Hil \leq 1} \Exs \enorm{\fhat - \fstar}^2 \succsim
\critquant^2$}.  See Section~\ref{SecProofOpt} for the proof of this
result.

At a high level, the statement in Corollary~\ref{CorOptimal} shows
that early stopping prevents us from overfitting to the data; in
particular, using the stopping time $T$ yields an estimate that
attains the optimal balance between bias and variance.

\section{Consequences for various kernel classes}
\label{SecExamples}

In this section, we apply Theorem~\ref{ThmGodWell} to derive some
concrete rates for different kernel spaces and then illustrate them
with some numerical experiments.  It is known that the complexity of
an RKHS in association with a distribution over the covariates
$\mprobx$ can be characterized by the decay rate~\eqref{EqnMercer} of
the eigenvalues of the kernel function.  In the finite sample setting,
the analogous quantities are the eigenvalues
$\{\muhat_j\}_{j=1}^\numobs$ of the normalized kernel matrix $\Kmat$.
The representation power of a kernel class is directly correlated with
the eigen-decay: the faster the decay, the smaller the function class.
When the covariates are drawn from the distribution $\mprobx$,
empirical process theory guarantees that the empirical and population
eigenvalues are close.


\subsection{Theoretical predictions as a function of decay}

In this section, let us consider two broad types of eigen-decay:
\begin{hclist}
\item {\bf{$\gamma$-exponential decay}}: For some $\gamma > 0$, the
  kernel matrix eigenvalues satisfy a decay condition of the form $\mu_j \leq
  c_1\exp(-c_2 j^\gamma)$, where $c_1,c_2$ are universal constants.
  Examples of kernels in this class include the Gaussian kernel,
  which for the Lebesgue measure satisfies such a bound with $\gamma = 2$ (real line) or $\gamma = 1$ (compact domain).
  %
\item {\bf{$\beta$-polynomial decay}}: For some $\beta > 1/2$, the
  kernel matrix eigenvalues satisfy a decay condition of the form $\mu_j \leq
  c_1 j^{-2 \beta}$, where $c_1$ is a universal constant.  Examples of
  kernels in this class include the $k^{th}$-order Sobolev spaces for
  some fixed integer $k \geq 1$ with Lebesgue measure on a bounded
  domain.  We consider Sobolev spaces that consist of functions that
  have $k^{th}$-order weak derivatives $f^{(k)}$ being Lebesgue
  integrable and $f(0) = f^{(1)}(0) = \dots = f^{(k-1)}(0)=0$.
  For such classes, the $\beta$-polynomial decay condition
  holds with $\beta = k$.
\end{hclist}

Given eigendecay conditions of these types, it is possible to compute
an upper bound on the critical radius $\critquant$. In particular,
using the fact that the function $\KerR$ from
equation~\eqref{EqnMendelson} is an upper bound on the function $\GW
\big(\Ellipse(\delta, 1) \big)$,
we can show that for $\gamma$-exponentially decaying kernels, we have
$\critquant^2 \precsim \frac{(\log \numobs)^{1/\gamma}}{\numobs}$,
whereas for $\beta$-polynomial kernels, we have $\critquant^2 \precsim
\numobs^{- \frac{2 \beta}{2 \beta +1}}$ up to universal constants.
Combining with our Theorem~\ref{ThmGodWell}, we obtain the following
result:

\vspace*{0.5cm}

\begin{cors}[Bounds based on eigendecay]
  \label{CorClass}
Under the conditions of Theorem~\ref{ThmGodWell}:
\begin{enumerate}
\item[(a)] For kernels with $\gamma$-exponential eigen-decay, we have 
\begin{subequations}
\begin{align}
\Exs \enorm{\Avgf{T} - \fstar}^2 \leq c \, \frac{\log^{1/\gamma}
  \numobs}{ \numobs} ~~ \mbox{at $ T \asymp
  \frac{\numobs}{\log^{1/\gamma} \numobs}$ steps.}
  \end{align}
\item[(b)] For kernels with $\beta$-polynomial eigen-decay, we have
\begin{align}
\Exs \enorm{\Avgf{T} - \fstar}^2 \leq c \, \numobs^{-2\beta/(2\beta+1)}
~~ \mbox{at $T \asymp \numobs^{2\beta/(2\beta+1)}$ steps.}
\end{align}
\end{subequations}
\end{enumerate}
\end{cors}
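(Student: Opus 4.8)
The plan is to combine the error bound~\eqref{EqnLSerrorUB} of Theorem~\ref{ThmGodWell} with explicit upper bounds on the critical radius $\critquant$ tailored to each eigendecay profile. Recall that $\critquant$ is the smallest positive solution of the fixed-point inequality~\eqref{Eqn75perChocolate}, and that, up to a universal constant $c_0$, the function $\KerR$ in~\eqref{EqnMendelson} upper bounds the localized Gaussian width $\GW(\Ellipse_\numobs(\delta,1))$. Hence the first step is the elementary observation that any $\delta$ satisfying $\KerR(\delta) \leq \delta^2/(c_0\level)$ automatically satisfies~\eqref{Eqn75perChocolate}, and so upper bounds the true $\critquant$; it therefore suffices to solve this simpler inequality for each profile.

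The core computation is to estimate the truncated eigenvalue sum $\sum_{j=1}^\numobs \min\{\delta^2,\muhat_j\}$ that defines $\KerR^2(\delta)$. For $\gamma$-exponential decay, where $\muhat_j \leq c_1 e^{-c_2 j^\gamma}$, I would split the sum at the index $k_\delta \asymp (\log(1/\delta^2))^{1/\gamma}$ beyond which $\muhat_j < \delta^2$: the first $k_\delta$ terms contribute $O(k_\delta\delta^2)$, and the remaining geometrically decaying tail $\sum_{j>k_\delta}\muhat_j$ is of the same order, so $\KerR^2(\delta) \lesssim \delta^2(\log(1/\delta))^{1/\gamma}/\numobs$. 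Substituting into $\KerR^2(\delta) \lesssim \delta^4/\level^2$ and solving yields $\critquant^2 \lesssim \level^2(\log\numobs)^{1/\gamma}/\numobs$. For $\beta$-polynomial decay, where $\muhat_j \leq c_1 j^{-2\beta}$ with $\beta > 1/2$, the split occurs at $j \asymp \delta^{-1/\beta}$; both the head (at most $O(\delta^{-1/\beta})$ terms of size $\delta^2$) and the convergent tail $\sum_{j \gtrsim \delta^{-1/\beta}} j^{-2\beta}$ are of order $\delta^{2-1/\beta}$, giving $\KerR^2(\delta) \lesssim \delta^{2-1/\beta}/\numobs$; the same fixed-point relation then produces $\critquant^2 \lesssim (\level^2/\numobs)^{2\beta/(2\beta+1)} \asymp \numobs^{-2\beta/(2\beta+1)}$.

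To finish, I would invoke~\eqref{EqnLSerrorUB} with step size $\stsize = \strongcon/\lipcon$ and $\tau = \lfloor \frac{1}{\critquant^2\max\{8,\lipcon\}}\rfloor$ iterations, which gives $\Exs\enorm{\Avgf{\tau}-\fstar}^2 \leq C'\critquant^2/\strongcon^2$; inserting the two bounds on $\critquant^2$ just derived yields the claimed mean-squared-error rates (the factor $1/\strongcon^2$ absorbed into the constant), while $\tau \asymp 1/\critquant^2$ translates into the stated stopping times $T \asymp \numobs/\log^{1/\gamma}\numobs$ and $T \asymp \numobs^{2\beta/(2\beta+1)}$. Since in both cases $\critquant^2 \to 0$ as $\numobs \to \infty$, the hypothesis $\critquant \leq \lipcon/\strongcon$ of Theorem~\ref{ThmGodWell} is met for all sufficiently large $\numobs$.

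The step I expect to be the main obstacle is the bookkeeping in the truncated-sum estimates: replacing the discrete sums $\sum_j \min\{\delta^2,\muhat_j\}$ by integrals, carrying the constants $c_1,c_2$ through, and solving $\KerR(\delta) \leq \delta^2/(c_0\level)$ with enough precision that the exponents --- rather than merely the polynomial-versus-logarithmic qualitative behavior --- come out exactly as stated. No new probabilistic argument is required, since the high-probability guarantee and its expectation form are inherited directly from Theorem~\ref{ThmGodWell}.
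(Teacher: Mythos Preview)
Your proposal is correct and follows essentially the same route as the paper: bound the localized Gaussian width by $\KerR(\delta)$, split the truncated eigenvalue sum in~\eqref{EqnMendelson} at the index where $\muhat_j\approx\delta^2$, solve the resulting fixed-point inequality in each decay regime to obtain $\critquant^2$, and plug into the expectation bound~\eqref{EqnLSerrorUB} from Theorem~\ref{ThmGodWell}. The only extra content in the paper's version is a separate verification (Lemma~\ref{LemLogAda}) that the logistic and exponential losses satisfy the \mMC~and $\phi'$-boundedness hypotheses, but since the corollary is stated ``under the conditions of Theorem~\ref{ThmGodWell}'' your omission of that step is harmless.
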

\noindent See Section~\ref{SecProofClass} for the proof of
Corollary~\ref{CorClass}.

In particular, these bounds hold for LogitBoost and AdaBoost.  We note
that similar bounds can also be derived with regard to risk in
$L^2(\PP_\numobs)$ norm as well as the excess risk $\Loss(f^T) -
\Loss(f^*)$.

To the best of our knowledge, this result is the first to show
non-asymptotic and optimal statistical rates for the
$\|\cdot\|_n^2$-error when early stopping LogitBoost or AdaBoost
with an explicit dependence of the stopping rule on $\numobs$.  Our
results also yield similar guarantees for $L^2$-boosting, as has been
established in past work~\cite{RasWaiYu14}.  Note that we can observe
a similar trade-off between computational efficiency and statistical
accuracy as in the case of kernel least-squares
regression~\cite{Yao07, RasWaiYu14}: although larger kernel classes
(e.g. Sobolev classes) yield higher estimation errors, boosting
updates reach the optimum faster than for a smaller kernel class
(e.g. Gaussian kernels).

\subsection{Numerical experiments} 
\label{SecSimulations}

We now describe some numerical experiments that provide illustrative
confirmations of our theoretical predictions.  While we have applied
our methods to various kernel classes, in this section, we present
numerical results for the first-order Sobolev kernel as two typical
examples for exponential and polynomial eigen-decay kernel classes.

Let us start with the first-order Sobolev space of Lipschitz functions
on the unit interval $[0,1]$, defined by the kernel
\mbox{$\KerFun(x,x') = 1 + \min(x,x')$,} and with the design points
$\{x_i\}_{i=1}^n$ set equidistantly over $[0,1]$. Note that the
equidistant design yields $\beta$-polynomial decay of the eigenvalues of
$\Kmat$ with $\beta=1$ as in the case when $x_i$ are drawn i.i.d. from the uniform measure
on $[0,1]$. 
Consequently we have that $\critquant^2 \asymp
\numobs^{-2/3}$.  Accordingly, our theory predicts that the stopping
time $T = (c \numobs)^{2/3}$ should lead to an estimate $\Avgf{T}$
such that $\|\Avgf{T} - \fstar\|_\numobs^2 \precsim \numobs^{-2/3}$.

In our experiments for $L^2$-Boost, we sampled $Y_i$ according to $Y_i
= \fstar(x_i) + w_i$ with $w_i \sim \NORMAL(0,0.5)$, which corresponds
to the probability distribution $\PP(Y \mid x_i) =
\NORMAL(\fstar(x_i); 0.5)$, where $\fstar(x) = |x - \frac{1}{2}| -
\frac{1}{4}$ is defined on the unit interval $[0,1]$. By construction,
the function $\fstar$ belongs to the first-order Sobolev space with
$\|\fstar\|_\Hil = 1$.  For LogitBoost, we sampled $Y_i$ according to
$\text{Bin}(p(x_i), 5)$ where $p(x) =
\frac{\exp(\fstar(x))}{1+\exp(\fstar(x))}$.  In all cases, we fixed
the initialization $f^0 = 0$, and ran the updates~\eqref{EqnUpdate}
for $L^2$-Boost and LogitBoost with the constant step size $\alpha =
0.75$.  We compared various stopping rules to the \emph{oracle gold
  standard} $G$, meaning the procedure that examines all iterates
$\{f^t\}$, and chooses the stopping time $G = \arg \min_{t \geq 1}
\|f^t - \fstar\|_\numobs^2$ that yields the minimum prediction error.
Of course, this procedure is unimplementable in practice, but it
serves as a convenient lower bound with which to compare.

\begin{figure}[t]
  \begin{center}
    \begin{tabular}{ccc}
      \widgraph{0.47\textwidth}{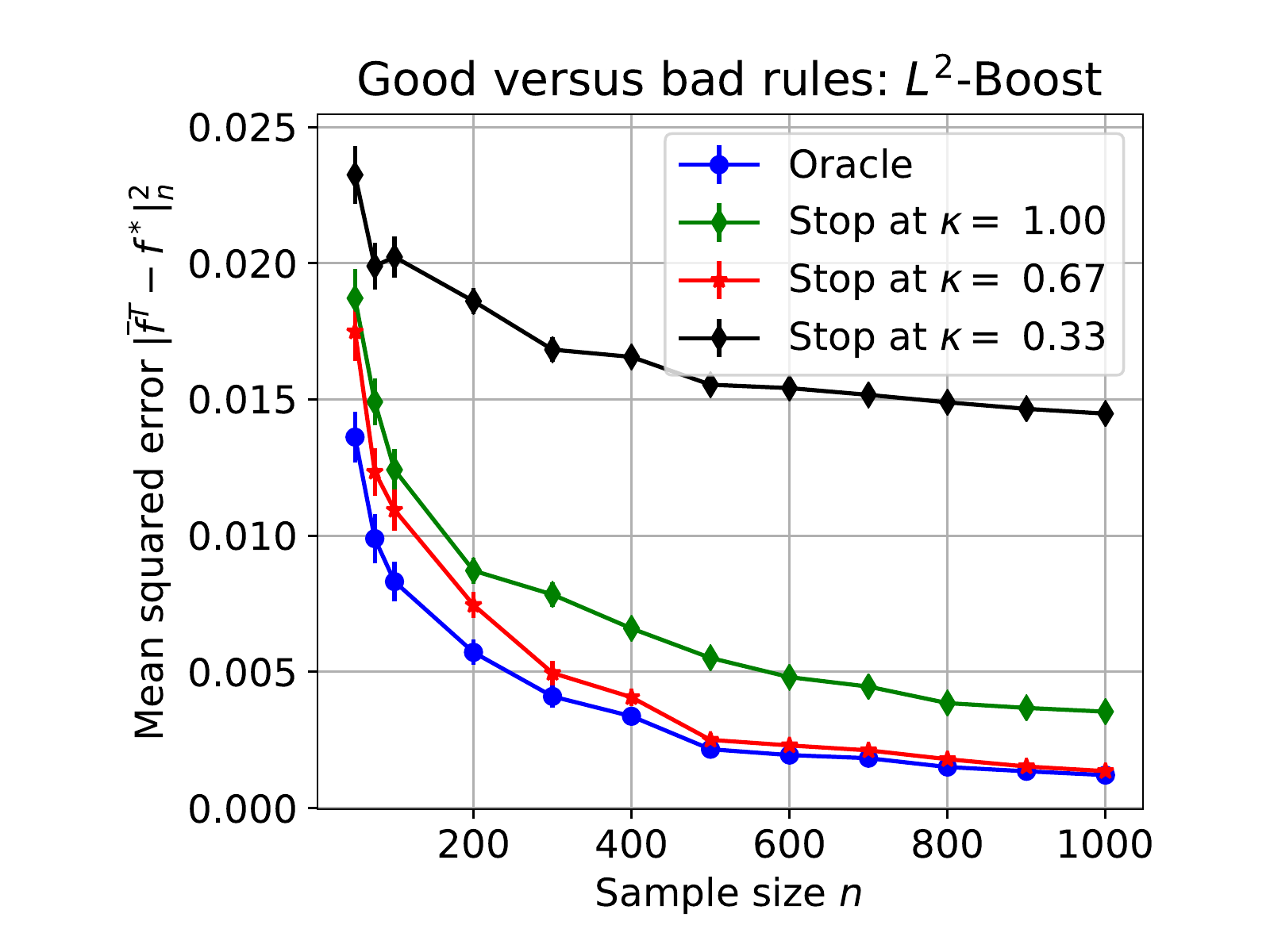} &&
      \widgraph{0.47\textwidth}{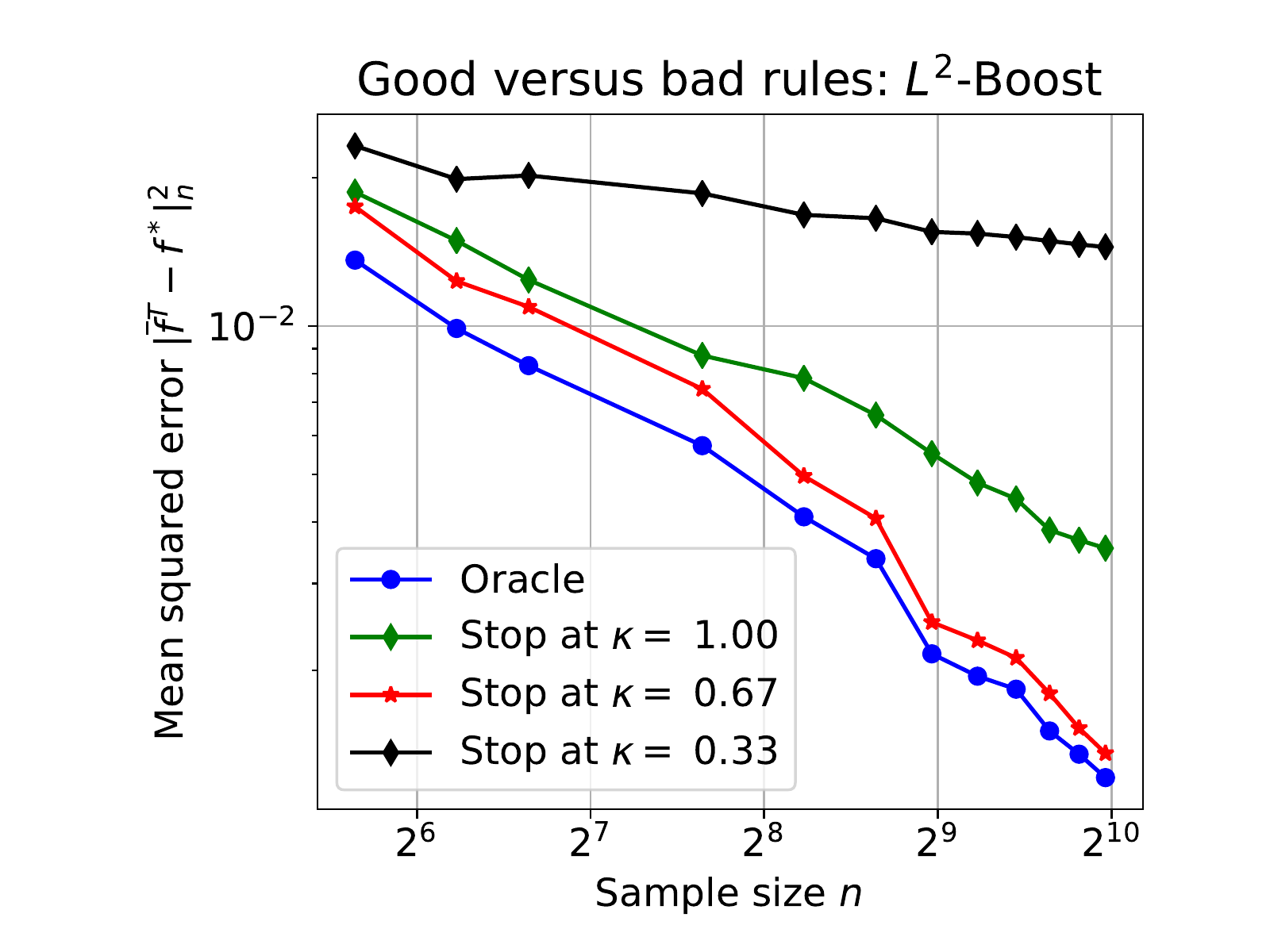} \\
      (a) \vspace{-0.05in}& & (b) \vspace{-0.05in}
    \end{tabular}
  \end{center}
  \caption{The mean-squared errors for the stopped iterates $\bar{f}^T$ at
    the Gold standard, i.e. iterate with the minimum error among all
    unstopped updates (blue) and at $T = (7 \numobs)^{\kappa}$ (with the
    theoretically optimal  $\kappa = 0.67$  in red,
    $\kappa = 0.33$ in black and $\kappa = 1$ in green) for (a) $L^2$-Boost
    and (b) LogitBoost.}
  \vspace{-0.15in}
  \label{FigError}
\end{figure}


Figure~\ref{FigError} shows plots of the mean-squared error $\|\bar{f}^T -
\fstar\|_\numobs^2$ over the sample size $\numobs$  averaged over $40$ trials, 
for the gold standard $T = G$ and
stopping rules based on $T = (7 \numobs)^{\kappa}$ for different choices of
$\kappa$.
Error bars correspond to the standard errors computed
from our simulations. Panel (a) shows the behavior for $L^2$-boosting,
whereas panel (b) shows the behavior for LogitBoost.  

Note that both plots are qualitatively similar and that the
theoretically derived stopping rule $T = (7 \numobs)^{\kappa}$ with
$\kappa^* = 2/3 = 0.67$, while slightly worse than the Gold
standard, tracks its performance closely.
We also performed simulations for some ``bad'' stopping rules, in
particular for an exponent $\kappa$ \emph{not equal} to
$\kappa^* = 2/3$, indicated by the green and black curves.
In the log scale plots in Figure~\ref{FigBad} we can clearly see that
for $\kappa \in \{0.33, 1\}$ the performance is indeed much worse, with the
difference in slope even suggesting a different scaling of the error
with the number of observations $\numobs$. 
Recalling our discussion
for Figure~\ref{FigEarly}, this phenomenon likely occurs due to
underfitting and overfitting effects.
These qualitative shifts are consistent with our theory.


\begin{figure}[t]
  \begin{center}
    \begin{tabular}{ccc}
      \widgraph{0.47\textwidth}{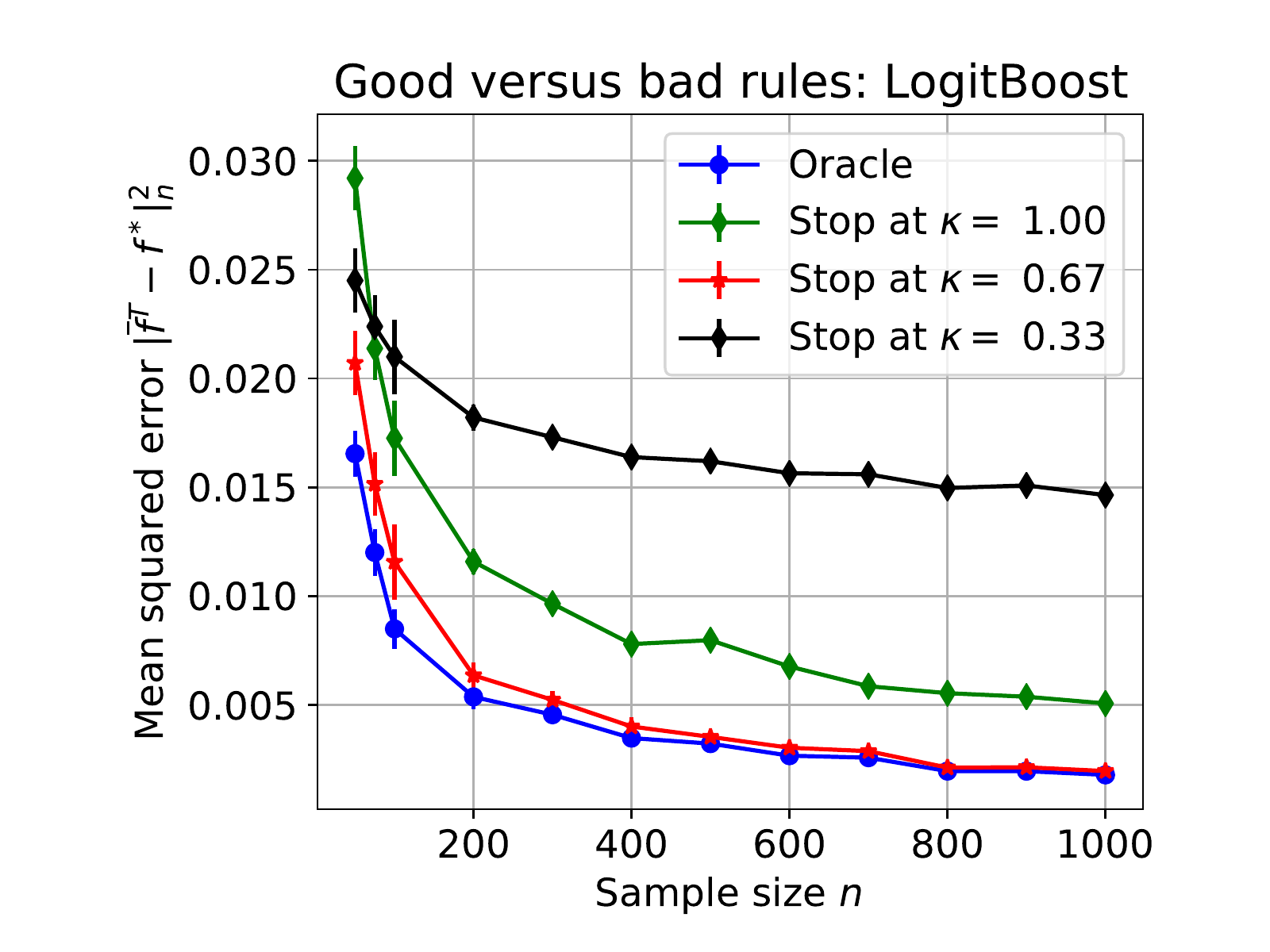} &&
      \widgraph{0.47\textwidth}{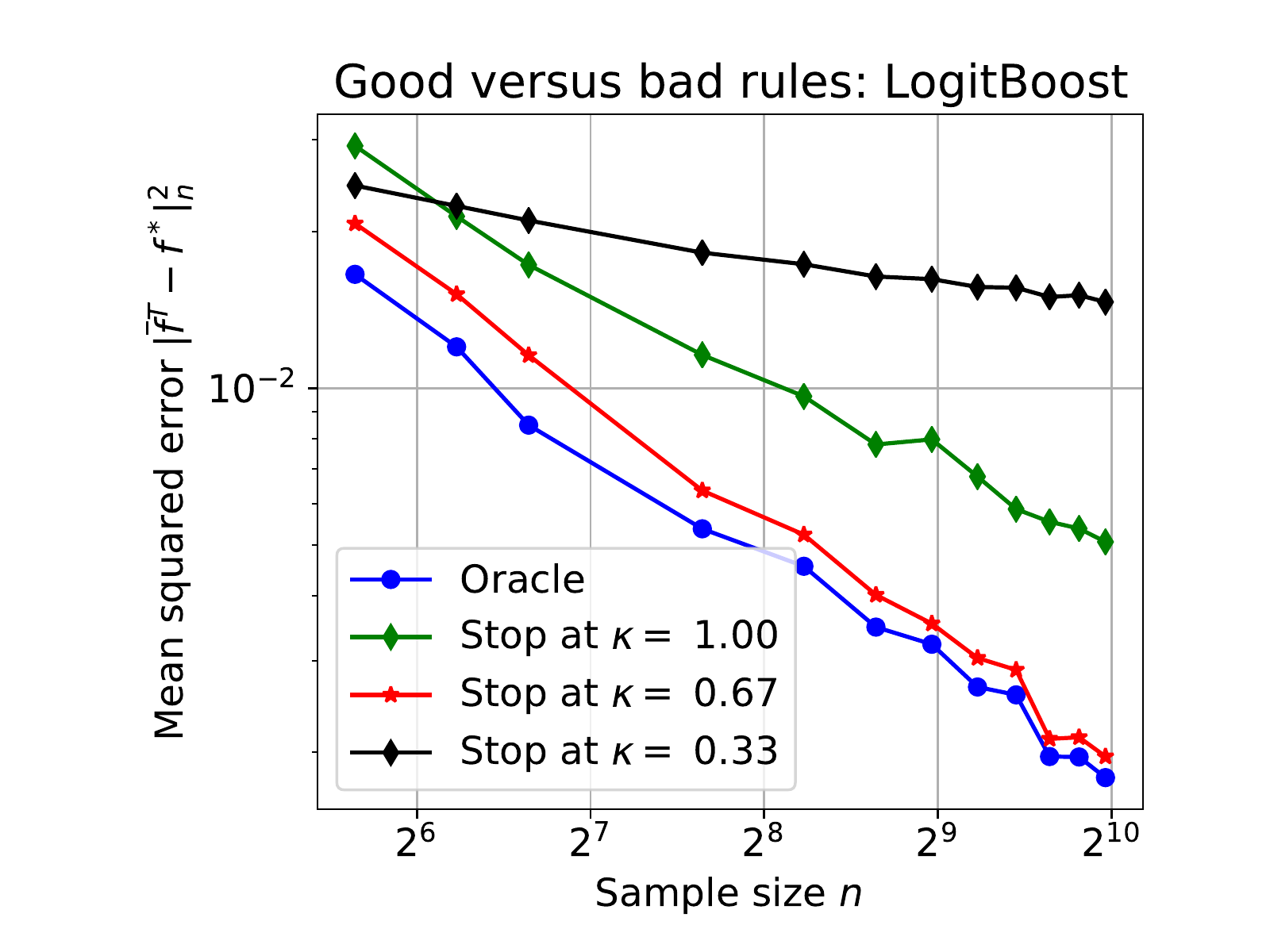} \\
      (a) \vspace{-0.05in} & & (b) \vspace{-0.05in}
    \end{tabular}
  \end{center}
  \caption{Logarithmic plots of the mean-squared errors at the Gold
    standard in blue and at $T = (7 \numobs)^\kappa$ (with the theoretically optimal rule for $\kappa = 0.67$ in red, $\kappa = 0.33$ in black and $\kappa = 1$
    in green) for (a) $L^2$-Boost and (b) LogitBoost.
    }
  \label{FigBad}
\end{figure}


\vspace*{1cm}

\section{Proof of main results}
\label{SecProofs}

In this section, we present the proofs of our main results. The
technical details are deferred to \autoref{SecTechnicalProofs}.

In the following, recalling the discussion in
Section~\ref{SecBoostKer}, we denote the vector of function values
of a function  $f \in
\Hil$ evaluated at $(x_1,x_2,\ldots,x_\numobs)$ as $\fval_f \defn
f(x_1^\numobs) = (f(x_1),f(x_2),\ldots f(x_\numobs)) \in
\real^{\numobs}$, where we omit the subscript $f$ when it is clear
from the context. As mentioned in the main text, updates on the
function value vectors $\fval^t \in \real^\numobs$ correspond uniquely
to updates of the functions $f^t \in \Hil$. In the following we
repeatedly abuse notation by defining the Hilbert norm and empirical
norm on vectors in $\Delta \in \range(K)$ as
\begin{align*}
\HilNorm{\Delta}^2 = \frac{1}{\numobs} \Delta^T
\Kmat^{\pseudinv}\Delta \quad\text{ and }\quad \EmpNorm{\Delta}^2 =
\frac{1}{\numobs} \ltwo{\Delta}^2,
\end{align*}
where $\Kmat^{\pseudinv}$ is the pseudoinverse of $\Kmat$. 
We also use $\HilNormball(\theta, r)$ to denote the ball
with respect to the $\HilNorm{\cdot}$-norm in $\range(K)$.


\subsection{Proof of Theorem~\ref{ThmGodWell}}
\label{SecProofThm}

The proof of our main theorem is based on a sequence of lemmas, all of
which are stated with the assumptions of Theorem~\ref{ThmGodWell} in
force.  The first lemma establishes a bound on the empirical norm
$\enorm{\cdot}$ of the error \mbox{$\DelIt{t+1} \defn \fvalit{t+1} -
  \fvalstar$,} provided that its Hilbert norm is suitably controlled.

\begin{lems}
\label{LemMasterBound}
For any stepsize $\stsize \in (0,~\frac{1}{\lipcon}]$ and any
  iteration $t$ 
  we have
\begin{align}
  \label{EqnMasterBound}
  \notag \frac{\strongcon}{2 } \enorm{\DelIt{t+1}}^2 &\leq
  \frac{1}{2\stsize} \Big \{ \HilNorm{\DelIt{t}}^2 -
  \HilNorm{\DelIt{t+1}}^2 \Big \} + \inprod{\nabla \Loss(\fvalstar +
    \DelIt{t}) - \nabla \EmpLoss(\fvalstar + \DelIt{t})}{\DelIt{t+1}}.
\end{align}
\end{lems}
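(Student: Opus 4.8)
The plan is to view the update~\eqref{EqnUpdate} as a preconditioned gradient step on $\EmpLoss$ taken in the Hilbert geometry, to extract from it an \emph{exact} one-step energy identity for the error $\DelIt{t+1}\defn\fvalit{t+1}-\fvalstar$, and then to convert that identity into~\eqref{EqnMasterBound} using the \mMC{} together with the step-size restriction $\stsize\le 1/\lipcon$. Throughout, abbreviate $\bar{g}^t\defn\nabla\Loss(\fvalstar+\DelIt{t})$ and $g^t\defn\nabla\EmpLoss(\fvalstar+\DelIt{t})$, and recall that since $\fit{0}=0$ and $\fstar\in\SubK$, all of $\fvalit{t},\fvalstar,\DelIt{t}$ lie in $\range(\Kmat)$, so that $\Kmat^{\pseudinv}\Kmat$ restricts to the identity on them and $\Kmat\Kmat^{\pseudinv}\Kmat=\Kmat$.

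First I would establish the energy identity. Writing the update as $\DelIt{t+1}=\DelIt{t}-\stsize\numobs\Kmat g^t$, expanding $\HilNorm{\DelIt{t+1}}^2=\tfrac1\numobs(\DelIt{t+1})^\top\Kmat^{\pseudinv}\DelIt{t+1}$, and using the two matrix identities above together with $\HilNorm{\DelIt{t+1}-\DelIt{t}}^2=\stsize^2\numobs\,(g^t)^\top\Kmat g^t$, one obtains
\[
\inprod{g^t}{\DelIt{t+1}} \;=\; \frac{1}{2\stsize}\Big(\HilNorm{\DelIt{t}}^2-\HilNorm{\DelIt{t+1}}^2-\HilNorm{\DelIt{t+1}-\DelIt{t}}^2\Big).
\]
Adding $\inprod{\bar{g}^t-g^t}{\DelIt{t+1}}$ to both sides shows that the claimed bound~\eqref{EqnMasterBound} is \emph{equivalent} to the purely deterministic inequality
\[
\frac{\strongcon}{2}\enorm{\DelIt{t+1}}^2 \;\le\; \inprod{\bar{g}^t}{\DelIt{t+1}}+\frac{1}{2\stsize}\HilNorm{\DelIt{t+1}-\DelIt{t}}^2 .
\]

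To prove this last inequality I would invoke the \mMC{} on $\HilNormball(\fstar,2\Radius)$ (which applies to the iterates in the range of $t$ under consideration). By first-order optimality of $\fvalstar$ for the minimization of $\Loss$ over $\SubK$, the vector $\nabla\Loss(\fvalstar)$ is orthogonal to $\range(\Kmat)$, so $\inprod{\nabla\Loss(\fvalstar)}{\DelIt{t+1}}=0$; then $\strongcon$-strong convexity of $\Loss$ between $\fit{t+1}$ and $\fstar$ gives $\Loss(\fit{t+1})-\Loss(\fstar)\ge\tfrac{\strongcon}{2}\enorm{\DelIt{t+1}}^2$. Next, convexity of $\Loss$ at $\fit{t}$ in the direction of $\fstar$ gives $\inprod{\bar{g}^t}{\DelIt{t}}\ge\Loss(\fit{t})-\Loss(\fstar)$, while $\lipcon$-smoothness of $\Loss$ at $\fit{t}$ in the direction of $\fit{t+1}$ gives $\inprod{\bar{g}^t}{\DelIt{t+1}-\DelIt{t}}\ge\Loss(\fit{t+1})-\Loss(\fit{t})-\tfrac{\lipcon}{2}\enorm{\DelIt{t+1}-\DelIt{t}}^2$. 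Summing the last two and using the first bound yields
\[
\inprod{\bar{g}^t}{\DelIt{t+1}} \;\ge\; \frac{\strongcon}{2}\enorm{\DelIt{t+1}}^2-\frac{\lipcon}{2}\enorm{\DelIt{t+1}-\DelIt{t}}^2 .
\]
Finally, since $\trace(\Kmat)=\tfrac1\numobs\sum_i\KerFun(x_i,x_i)\le1$ forces $\opnorm{\Kmat}\le1$, every $\Delta\in\range(\Kmat)$ satisfies $\enorm{\Delta}\le\HilNorm{\Delta}$; combining this with $\stsize\le1/\lipcon$ gives $\tfrac1{2\stsize}\HilNorm{\DelIt{t+1}-\DelIt{t}}^2\ge\tfrac{\lipcon}{2}\enorm{\DelIt{t+1}-\DelIt{t}}^2$, which absorbs the residual term and establishes the deterministic inequality, hence the lemma.

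The main obstacle is the bookkeeping in this last part: one must \emph{not} throw away the quadratic term $\HilNorm{\DelIt{t+1}-\DelIt{t}}^2$ from the energy identity, since the naive simplification $\tfrac{\strongcon}{2}\enorm{\DelIt{t+1}}^2\le\inprod{\nabla\Loss(\fvalit{t})}{\DelIt{t+1}}$ is false in general (for instance at $\fvalit{t}=\fvalstar$, where the right-hand side vanishes but $\DelIt{t+1}\ne0$). Keeping that residual and spending it against the $\lipcon$-smoothness slack --- which is exactly where the step-size bound $\stsize\le1/\lipcon$ and the kernel normalization $\opnorm{\Kmat}\le1$ are used --- is the crux; the remaining manipulations are routine linear algebra inside $\range(\Kmat)$.
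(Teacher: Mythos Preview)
Your argument is correct and uses the same ingredients as the paper's proof: the one-step energy identity in the Hilbert geometry, convexity plus $\lipcon$-smoothness of the population loss $\Loss$ to bound $\Loss(\fvalit{t+1})-\Loss(\fvalstar)$, $\strongcon$-strong convexity together with first-order optimality of $\fvalstar$ to pass to $\tfrac{\strongcon}{2}\enorm{\DelIt{t+1}}^2$, and the step-size restriction $\stsize\le 1/\lipcon$ combined with $\opnorm{\Kmat}\le 1$ to absorb the residual quadratic term. The only organizational difference is that the paper first performs the linear change of variables $\zval=\numobs^{-1/2}(\Kmat^{\pseudinv})^{1/2}\fval$, under which the Hilbert norm becomes Euclidean and the kernel-eigenvalue bound is folded into the statement that $\Jloss$ inherits $\lipcon$-smoothness; you instead stay in $\fval$-coordinates, work with $\Kmat^{\pseudinv}$ explicitly, and invoke $\enorm{\cdot}\le\HilNorm{\cdot}$ at the end. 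These are the same proof, and your remark that $\nabla\Loss(\fvalstar)$ is only guaranteed to be orthogonal to $\range(\Kmat)$ (rather than to vanish identically) is in fact the more careful formulation when $\Kmat$ is rank-deficient.
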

\noindent See Section~\ref{SecLemMasProof} for the proof of this claim.

The second term on the right-hand side of the bound~\eqref{EqnMasterBound}
involves the difference between the population and empirical gradient
operators.  Since this difference is being evaluated at the random
points $\DelIt{t}$ and $\DelIt{t+1}$, the following lemma establishes
a form of uniform control on this term.

Let us define the set
\begin{align}
  \HANASET & \defn \Biggr \{ \Delta, \DelTil \in \real^\numobs \mid
  \HilNorm{\Delta} \geq 1,  ~\mbox{and}~
  \Delta, ~\DelTil \in \HilNormball(0,2\Radius)
  \Biggr \},
\end{align}
and consider the uniform bound
\begin{align}
  \label{EqnPeeling}
  \notag \inprod{\nabla \Loss(\fvalstar + \DelTil) & - \nabla
    \EmpLoss(\fvalstar + \DelTil)}{\Delta} \leq 2\delta_{\numobs} \enorm{\Delta} \\
    &+ 2\delta_{\numobs}^2 \HilNorm{\Delta} 
    +
  \frac{\strongcon}{ \AnnCon} \enorm{\Delta}^2 \quad \mbox{for all
    $\Delta, \DelTil \in \HANASET$.}
\end{align}

\begin{lems}
\label{LemPeeling}
Let $\Event$ be the event that bound~\eqref{EqnPeeling}
holds. There are universal constants $(c_1, c_2)$ such that
$\mprob[\Event] \geq 1 - c_1 \exp(-c_2 \frac{\strongcon^2 \numobs
  \delta_\numobs^2}{\level^2})$.
\end{lems}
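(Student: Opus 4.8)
The plan is to control the supremum of the empirical process
\[
Z(\Delta, \DelTil) \defn \inprod{\nabla \Loss(\fvalstar + \DelTil) - \nabla \EmpLoss(\fvalstar + \DelTil)}{\Delta}
\]
over the set $\HANASET$ via a peeling argument, reducing the problem to a bound on the localized Gaussian complexity at scale $\delta_\numobs$. First I would rescale: for $\Delta \in \HANASET$, the key scale-invariant quantities are $\HilNorm{\Delta}$ and $\enorm{\Delta}$, so I would partition $\HANASET$ into shells where $\HilNorm{\Delta} \in [2^k, 2^{k+1})$ and $\enorm{\Delta} \in [2^\ell \delta_\numobs, 2^{\ell+1}\delta_\numobs)$ (with the appropriate boundary cases), and show the desired bound holds on each shell with exponentially decaying failure probability, so that a union bound over the $O(\log \numobs)$ relevant shells (the radii are bounded by $2\Radius$ and $\critquant \le \lipcon/\strongcon$) preserves the form of the probability estimate. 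On each shell, after normalizing $\Delta$ to the unit Hilbert ball intersected with an empirical-norm ball of radius $\sim \delta$, the quantity $\sup Z$ is, up to the Lipschitz/smoothness constants, controlled by $\delta \cdot \GW(\Ellipse_\numobs(\delta, 1))$; the crucial step is then invoking the defining inequality~\eqref{Eqn75perChocolate} of the critical radius, which gives $\GW(\Ellipse_\numobs(\delta,1)) \le \delta^2/\level$, so that after multiplying by the shell radii we recover exactly the three terms $2\delta_\numobs \enorm{\Delta} + 2\delta_\numobs^2 \HilNorm{\Delta} + \tfrac{\strongcon}{\AnnCon}\enorm{\Delta}^2$ appearing in~\eqref{EqnPeeling}.

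The concentration ingredient is a bounded-differences / functional Bernstein argument: the summands $w_i \mapsto$ (difference of gradient coordinates) $\cdot \Delta(x_i)$ are controlled because $\phi$ is $M$-Lipschitz in its second argument on the relevant interval (the \PhiBC{} or the subexponential tail defining $\level$ in~\eqref{EqnLevel}), so the process has subgaussian (for $\phi'$-bounded losses) or subexponential (for least squares) increments with parameter scaling like $\level/\sqrt{\numobs}$ times the empirical norm of $\Delta$. A symmetrization step replaces the gradient difference by a Gaussian-weighted sum, and a Talagrand-type deviation inequality promotes the in-expectation Gaussian-complexity bound to a high-probability statement; tracking constants gives the exponent $c_2 \strongcon^2 \numobs \delta_\numobs^2 / \level^2$. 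For the least-squares case one additionally has to handle the fact that $\nabla\EmpLoss(\fvalstar + \DelTil) = \DelTil - W$ (with $W$ the noise vector), so the process splits into a deterministic piece bounded using smoothness and a linear-in-noise piece $\inprod{W}{\Delta}$ handled by the same localized-complexity machinery with $\sigma$ in place of $\level$.

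The main obstacle is handling the dependence of the gradient difference on the ``base point'' $\DelTil$: because $Z$ is a supremum over \emph{two} arguments $\Delta$ and $\DelTil$, not just one, I need a uniform bound over $\DelTil \in \HilNormball(0, 2\Radius)$ as well. For $\phi'$-bounded losses this is eased because $|\partial_\theta \phi| \le B = 1$ uniformly, so the base-point dependence only affects the Lipschitz modulus, and a covering argument over $\DelTil$ (whose metric entropy is again governed by the kernel eigenvalues) suffices; for least squares the dependence on $\DelTil$ is affine and thus even cleaner. The other delicate point is the boundary case $\HilNorm{\Delta} \ge 1$ (rather than $\le 1$): the lower bound on $\HilNorm{\Delta}$ is what forces the $2\delta_\numobs^2 \HilNorm{\Delta}$ term to absorb the contributions from large Hilbert norm, and one must check the peeling in the $\HilNorm{}$ direction terminates at $\HilNorm{\Delta} \asymp \Radius$ so that only logarithmically many shells arise. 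Modulo these bookkeeping issues, the argument is a standard localized-complexity peeling proof, and I would defer the detailed constant-tracking to the appendix (Section~\ref{SecTechnicalProofs}).
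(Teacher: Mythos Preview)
Your proposal is in the right spirit and would ultimately work, but it takes a noticeably more laborious route than the paper's argument in two respects. First, the paper avoids your two-dimensional peeling over both $\HilNorm{\Delta}$ and $\enorm{\Delta}$: it observes that for any $\Delta$ with $\HilNorm{\Delta}\ge 1$, one can simply rescale to $g=\Delta/\HilNorm{\Delta}$, prove the bound for $\HilNorm{g}=1$, and multiply back through; this reduces the problem to a one-dimensional peeling over the $\enorm{\cdot}$ radius alone. Second, and more substantively, the paper handles the dependence on the base point $\DelTil$ not by a covering argument but by a decoupling trick: after symmetrization it uses the pointwise inequality $\phi'(y_i,\theta^*_i+\DelTil_i)\,\Delta_i \le \tfrac{1}{2}(\phi')^2+\tfrac{1}{2}\Delta_i^2$, which splits the joint supremum over $(\Delta,\DelTil)$ into two separate suprema, each then controlled by Rademacher contraction (using $\|\phi'\|_\infty\le 1$ and the $M$-Lipschitz property of $\phi'$ for the first, and $\|\Delta\|_\infty\le 2\Radius$ for the second). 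This is cleaner than chaining or covering over $\DelTil$ and makes the constant-tracking in Lemmas~\ref{LemGW2RC} and~\ref{LemCorrectConc} straightforward. For the least-squares case, incidentally, the gradient difference $\nabla\Loss-\nabla\EmpLoss$ is exactly the noise vector $\sigma w/\numobs$ and has no $\DelTil$ dependence at all, so no ``deterministic piece'' arises. Your approach buys nothing extra here; the paper's simplifications are worth adopting.
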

\noindent See Section~\ref{SecProofPeeling} for the proof of
Lemma~\ref{LemPeeling}.

\vspace*{0.2cm}

Note that Lemma~\ref{LemMasterBound} applies only to error iterates
with a bounded Hilbert norm.  Our last lemma provides this control for
some number of iterations:
\begin{lems}
\label{LemHilBound}
There are constants $(C_1, C_2)$ independent of $\numobs$ such that for any step size
$\stsize \in \big(0, \min\{\lipcon, \frac{1}{\lipcon}\} \big]$, we have
\begin{align}
\label{EqnDeltaStayBounded}
\HilNorm{\DelIt{t}} \leq \Radius \qquad \mbox{for all iterations $t
  \leq \frac{\strongcon}{8 \lipcon \critquant^2}$}
\end{align}
with probability at least
$1 - C_1 \exp(- C_2 \numobs \delta_\numobs^2)$, where $C_2 = \max\{ \frac{\strongcon^2
 }{\level^2}, 1\}$.
\end{lems}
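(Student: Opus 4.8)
The plan is to prove the uniform Hilbert-norm bound $\HilNorm{\DelIt{t}} \leq \Radius$ by strong induction on the iteration index $t$, running the induction up to $t \leq \frac{\strongcon}{8\lipcon\critquant^2}$ and working on the intersection of the event $\Event$ from Lemma~\ref{LemPeeling} with a second high-probability event controlling the noise at the \emph{initial} iterate $\DelIt{0} = -\fvalstar$. The base case $t=0$ is immediate since $\HilNorm{\DelIt{0}} = \HilNorm{\fvalstar} \leq \Radius$ by the definition $\Radius^2 = 2\max\{\HilNorm{\fstar}^2, 32, \sigma^2\}$. For the inductive step, I assume $\HilNorm{\DelIt{s}} \leq \Radius$ for all $s \leq t$ and seek to establish $\HilNorm{\DelIt{t+1}} \leq \Radius$.

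The key mechanism is to rearrange the master recursion from Lemma~\ref{LemMasterBound}. That lemma gives, for step size $\stsize \in (0, 1/\lipcon]$,
\begin{align*}
\frac{\strongcon}{2}\enorm{\DelIt{s+1}}^2 \leq \frac{1}{2\stsize}\big(\HilNorm{\DelIt{s}}^2 - \HilNorm{\DelIt{s+1}}^2\big) + \inprod{\nabla\Loss(\fvalstar+\DelIt{s}) - \nabla\EmpLoss(\fvalstar+\DelIt{s})}{\DelIt{s+1}},
\end{align*}
so that the Hilbert norm is non-increasing \emph{up to} the gradient-difference cross term. Summing over $s = 0, 1, \ldots, t$ telescopes the first term, yielding a bound on $\HilNorm{\DelIt{t+1}}^2$ in terms of $\HilNorm{\DelIt{0}}^2$ plus an accumulated error from the cross terms. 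The main work is to control each cross term. Here I split into two regimes according to whether $\HilNorm{\DelIt{s}} \geq 1$. If $\HilNorm{\DelIt{s}} \geq 1$, then $\DelIt{s} \in \HANASET$ (using the inductive hypothesis to place it in $\HilNormball(0, 2\Radius)$, and likewise $\DelIt{s+1}$, which requires a separate sub-argument that $\HilNorm{\DelIt{s+1}} \leq 2\Radius$ — obtainable by noting that a single update cannot inflate the Hilbert norm by more than a controlled factor, or by running a secondary induction), and Lemma~\ref{LemPeeling} applies to give
\begin{align*}
\inprod{\nabla\Loss(\fvalstar+\DelIt{s}) - \nabla\EmpLoss(\fvalstar+\DelIt{s})}{\DelIt{s+1}} \leq 2\delcrit\enorm{\DelIt{s+1}} + 2\delcrit^2\HilNorm{\DelIt{s+1}} + \frac{\strongcon}{\AnnCon}\enorm{\DelIt{s+1}}^2.
\end{align*}
If instead $\HilNorm{\DelIt{s}} < 1$, one rescales: apply the bound to $\DelIt{s}/\HilNorm{\DelIt{s}}$ (which has unit Hilbert norm) and multiply back through, picking up an extra factor. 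In either case, after absorbing the $\frac{\strongcon}{\AnnCon}\enorm{\DelIt{s+1}}^2$ term into the left side of the master bound (using $\AnnCon$ large enough, e.g. $\AnnCon \geq 8$), and applying Young's inequality to the $2\delcrit\enorm{\DelIt{s+1}}$ term, the recursion collapses to
\begin{align*}
c\,\strongcon\,\enorm{\DelIt{s+1}}^2 + \frac{1}{2\stsize}\HilNorm{\DelIt{s+1}}^2 \leq \frac{1}{2\stsize}\HilNorm{\DelIt{s}}^2 + \frac{C\delcrit^2}{\strongcon} + 2\delcrit^2\HilNorm{\DelIt{s+1}},
\end{align*}
and the stray $2\delcrit^2\HilNorm{\DelIt{s+1}}$ term can be absorbed since $\delcrit^2 \stsize \lesssim 1$ under the hypotheses (using $\critquant \leq \lipcon/\strongcon$ and $\stsize \leq 1/\lipcon$).

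Telescoping this from $s=0$ to $s=t$ gives $\HilNorm{\DelIt{t+1}}^2 \lesssim \HilNorm{\DelIt{0}}^2 + \stsize(t+1)\frac{\delcrit^2}{\strongcon}$. Since $t+1 \leq \frac{\strongcon}{8\lipcon\critquant^2} + 1$ and $\stsize \leq 1/\lipcon \leq 1/\strongcon$ — actually one uses $\stsize \leq \lipcon$ and $\stsize \leq 1/\lipcon$ together with $\lipcon \geq \strongcon$ — the accumulated term $\stsize(t+1)\delcrit^2/\strongcon$ is bounded by a small multiple of $\Radius^2/2$, say at most $\Radius^2/4$, while $\HilNorm{\DelIt{0}}^2 = \HilNorm{\fvalstar}^2 \leq \Radius^2/2$. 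Hence $\HilNorm{\DelIt{t+1}}^2 \leq \Radius^2$, completing the induction. The probability statement follows by intersecting $\Event$ (probability at least $1 - c_1\exp(-c_2\frac{\strongcon^2\numobs\delcrit^2}{\level^2})$ from Lemma~\ref{LemPeeling}) with the noise event at $\DelIt{0}$; a union bound over these two events gives the stated $1 - C_1\exp(-C_2\numobs\delcrit^2)$ with $C_2 = \max\{\strongcon^2/\level^2, 1\}$.

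The main obstacle I anticipate is the bookkeeping needed to legitimately invoke Lemma~\ref{LemPeeling}: it requires \emph{both} $\DelIt{s}$ and $\DelIt{s+1}$ to lie in $\HilNormball(0, 2\Radius)$, but the induction hypothesis only controls iterates up to index $t$, so one needs an auxiliary argument that the \emph{next} iterate $\DelIt{t+1}$ has Hilbert norm at most $2\Radius$ before one can close the bound showing it is in fact at most $\Radius$. This is handled by a nested/bootstrapping argument — first show crudely that one update step from within $\HilNormball(0,\Radius)$ stays within $\HilNormball(0,2\Radius)$ (which follows because $\stsize\numobs\Kmat\nabla\EmpLoss$ has bounded Hilbert norm when $\stsize \leq 1/\lipcon$ and the gradient is controlled via the $\phi'$-boundedness or sub-Gaussian noise assumption), and only then apply the sharp telescoping argument. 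The scale-invariance trick for handling $\HilNorm{\DelIt{s}} < 1$ and the careful tracking of how the constants $\AnnCon, c, C$ interact to leave room to absorb lower-order terms are the other points requiring care, but these are routine once the structure above is in place.
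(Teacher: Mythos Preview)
Your overall architecture---induction on $t$, a crude one-step bootstrap to place $\DelIt{t+1}$ in $\HilNormball(0,2\Radius)$ so that Lemma~\ref{LemPeeling} applies, then a sharper per-step increment telescoped over the horizon---matches the paper's proof closely. The paper likewise proves a per-step inequality of the form $\max\{1,\HilNorm{\DelIt{t+1}}^2\} \leq \max\{1,\HilNorm{\DelIt{t}}^2\} + c\,\stsize\critquant^2\lipcon/\strongcon$ and sums, and it also conditions on the intersection of the event $\Event$ with a sub-Gaussian noise event $\Event_0$ exactly as you describe.

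There is, however, one genuine gap in your crude-bound step. You propose to show $\HilNorm{\DelIt{t+1}} \leq 2\Radius$ by directly bounding the Hilbert norm of the full update increment $\stsize\numobs K\nabla\EmpLoss(\fvalit{t})$. This works for $\phi'$-bounded losses (where $\|\nabla\EmpLoss\|_2 \leq 1/\sqrt{\numobs}$ gives increment $\leq \stsize \leq 1$), but it fails for least-squares: there $\nabla\EmpLoss(\fvalit{t}) = \tfrac{1}{\numobs}(\fvalit{t}-y)$ contains a signal part $\tfrac{1}{\numobs}\DelIt{t}$ whose Hilbert-norm contribution is of order $\stsize\enorm{\DelIt{t}}$, potentially as large as $\Radius$. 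Adding this to $\HilNorm{\DelIt{t}} \leq \Radius$ and the noise part $\lesssim \sigma$ overshoots $2\Radius$. The paper handles this by introducing an auxiliary lemma (its Lemma~6) asserting that the \emph{population} update operator $G(z) = z - \stsize\nabla\Jloss(z)$ is non-expansive; this lets one write $\HilNorm{\DelIt{t+1}} \leq \HilNorm{\DelIt{t}} + \stsize\sqrt{\numobs}\|K^{1/2}(\nabla\EmpLoss - \nabla\Loss)\|_2$, so only the gradient \emph{difference} (pure noise for least-squares) appears in the increment. You will need this non-expansiveness ingredient, or an equivalent, to close the bootstrap for the least-squares case.

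A secondary point: the paper also uses this non-expansiveness lemma, rather than Lemma~\ref{LemMasterBound} alone, in the main induction step (splitting into cases $\enorm{\DelIt{t+1}} \lessgtr \critquant\HilNorm{\DelIt{t+1}}$). Your route via rearranging Lemma~\ref{LemMasterBound} directly is legitimate and arguably more streamlined---the rearrangement $\HilNorm{\DelIt{t+1}}^2 \leq \HilNorm{\DelIt{t}}^2 + 2\stsize\cdot(\text{cross term}) - \stsize\strongcon\enorm{\DelIt{t+1}}^2$ yields the same per-step increment after absorbing terms---but be aware that the constant bookkeeping is delicate (the paper tunes $c_3$ and an auxiliary constant $\widetilde{\gamma} = 1/\Radius^2$ precisely so that the accumulated error over $\tfrac{\strongcon}{8\lipcon\critquant^2}$ steps does not exceed $\Radius^2/2$), and your sketch of absorbing the $2\critquant^2\HilNorm{\DelIt{s+1}}$ term may not close for small $\lipcon$ (e.g.\ $\lipcon = 1/4$ for logistic) without care.
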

\noindent See Section~\ref{SecLemHilProof} for the proof of this lemma which also uses Lemma~\ref{LemPeeling}.

\vspace*{.1in}

Taking these lemmas as given, we now complete the proof of the
theorem.  We first condition on the event $\Event$ from
Lemma~\ref{LemPeeling}, so that we may apply the
bound~\eqref{EqnPeeling}.  We then fix some iterate $t$ such that $t <
\frac{\strongcon}{8 \lipcon \critquant^2} - 1$, and condition on the event that the
bound~\eqref{EqnDeltaStayBounded} in Lemma~\ref{LemHilBound} holds, so
that we are guaranteed that $\HilNorm{\DelIt{t+1}} \leq \Radius$.  We
then split the analysis into two cases:

\paragraph{Case 1}  First, suppose that
\mbox{$\enorm{\DelIt{t+1}} \leq \critquant \Radius$}.  In this case,
inequality~\eqref{EqnEnormBound} holds directly.

\paragraph{Case 2} Otherwise, we may assume that
$\enorm{\DelIt{t+1}} > \critquant \HilNorm{\DelIt{t+1}}$.  Applying
the bound~\eqref{EqnPeeling} with the choice $(\DelTil, \Delta) =
(\DelIt{t}, \DelIt{t+1})$ yields
\begin{align}
\label{EqnGrizzly}
\inprod{\nabla \Loss(\fvalstar + \DelIt{t}) &- \nabla
  \EmpLoss(\fvalstar + \DelIt{t})}{\DelIt{t+1}} \leq 4 \critquant
\enorm{\DelIt{t+1}} + \frac{\strongcon}{ \AnnCon}\enorm{\DelIt{t+1}}^2.
\end{align}
Substituting inequality~\eqref{EqnGrizzly} back into
equation~\eqref{EqnMasterBound} yields
\begin{align*}
  \frac{\strongcon}{2 } \enorm{\DelIt{t+1}}^2 \leq
  \frac{1}{2\stsize} \Big \{ \HilNorm{\DelIt{t}}^2 & -
  \HilNorm{\DelIt{t+1}}^2 \Big \} 
  + 4 \critquant \enorm{\DelIt{t+1}} +
  \frac{\strongcon}{\AnnCon}\enorm{\DelIt{t+1}}^2.
\end{align*}

Re-arranging terms 
yields the bound
\begin{align}
\label{EqnSquared}
   \sceff \strongcon \enorm{\DelIt{t+1}}^2 \leq \Hdiff{t} + 4 \delta_n
  \enorm{\DelIt{t+1}},
\end{align}
where we have introduced the shorthand notation $\Hdiff{t} \defn
\frac{1}{2 \stsize} \Big \{ \HilNorm{\DelIt{t}}^2 -
\HilNorm{\DelIt{t+1}}^2 \Big \}$, as well as $\sceff = \frac{1}{2} -
\frac{1}{\AnnCon}$ 

Equation~\eqref{EqnSquared} defines a quadratic inequality with
respect to $\enorm{\DelIt{t+1}}$; solving it and making use of the
inequality $(a+b)^2 \leq 2a^2 + 2 b^2$ yields the bound
\begin{align}
  \label{EqnSquaredTwo}
\enorm{\DelIt{t+1}}^2 \leq \frac{c \delta_n^2}{\sceff^2\strongcon^2} +
\frac{2 \Hdiff{t} }{\sceff\strongcon},
\end{align}
for some universal constant $c$.  By telescoping
inequality~\eqref{EqnSquaredTwo}, we find that
\begin{align}
\label{EqnNearFinal}
\frac{1}{T} \sum_{t=1}^T \enorm{\DelIt{t}}^2 &\leq \frac{c
  \delta_n^2 }{\sceff^2\strongcon^2} + \frac{1}{T} \sum_{t=1}^T \frac{2
  \Hdiff{t} }{\sceff \strongcon}\\
  &\leq \frac{c
  \delta_n^2 }{\sceff^2\strongcon^2} + \frac{1}{\stsize
  \sceff\strongcon T} [\HilNorm{\DelIt{0}}^2 - \HilNorm{\DelIt{T}}^2].
\end{align}
By Jensen's inequality, we have
\begin{align*}
  \enorm{\Avgf{T} - \fstar}^2 & = \enorm{\frac{1}{T} \sum_{t=1}^T
    \DelIt{t}}^2 \; \leq \; \frac{1}{T} \sum_{t=1}^T
  \enorm{\DelIt{t}}^2,
\end{align*}
so that inequality~\eqref{EqnEnormBound} follows from the
bound~\eqref{EqnNearFinal}.

On the other hand, by the smoothness assumption, we have
\begin{align*}
  \Loss(\Avgf{T}) - \Loss(\fstar) \leq \frac{\lipcon}{2}
  \enorm{\Avgf{T} - \fstar}^2,
\end{align*}
from which inequality~\eqref{EqnMasterUnwrapped} follows.


\subsection{Proof of Corollary~\ref{CorOptimal}}
\label{SecProofOpt}

Similar to the proof of Theorem 1 in Yang et al.~\cite{YanPilWai17}, a
generalization can be shown using a standard argument of Fano’s
inequality.  By definition of the transformed parameter
$\theta = D U \alpha$ with $K = U^T D U$, we have for any
estimator $\fhat  = \sqrt{n} U^T\theta$ that
$\enorm{\fhat - \fstar}^2 = \ltwo{\theta - \thetastar}^2$. Therefore
our goal is to lower bound the Euclidean error
$\ltwo{\theta -\thetastar}$ of any estimator of $\thetastar.$
Borrowing Lemma 4 in Yang et al.~\cite{YanPilWai17}, there exists
$\delta/2$-packing of the set
$B = \{\theta \in \real^\numobs \mid \ltwo{D^{-1/2}\theta} \leq 1\}$
of cardinality $M = e^{d_n/64}$ with
$d_n \defn \arg \min_{j=1,\dots, n} \{\mu_j \leq \delta_n^2\}.$ 
This is done through packing the following subset of $B$ 
\begin{align*}
  \Ellipse(\delta) \defn \Big\{ \theta\in \real^{\numobs} ~\mid~ \sum_{j=1}^{\numobs} \frac{\theta_j^2}{\min\{\delta^2,\mu_j\}} \leq 1 \Big\}.
\end{align*}
Let us denote the packing set by $\{\theta^1,\ldots,\theta^M\}.$ Since $\theta \in \Ellipse(\delta)$, by simple calculation, we have $\ltwo{\theta^i} \leq \delta$.

By considering the random ensemble of regression problem in which 
we first draw at index $Z$ at random from the index set $[M]$
and then condition on $Z = z$, we observe $\numobs$ i.i.d samples
$y_1^n := \{y_1,\dots,y_n\}$
from $\Prob_{\theta^z}$, 
Fano's inequality implies that 
\begin{align*}
  \Prob(\ltwo{\thetahat - \thetastar} \geq \frac{\delta^2}{4})
  \geq 1 - \frac{I(y_1^\numobs; Z) + \log 2}{\log M}.
\end{align*}
where $I(y_{1}^\numobs ;Z)$ is the mutual information between the samples $Y$ and the random index $Z$.

So it is only left for us to control the mutual information 
$I(y_{1}^\numobs ;Z)$. 
Using the mixture representation, $\bar{\Prob} = \frac{1}{M}
\sum_{i=1}^M \Prob_{\theta^i}$ and the convexity of the Kullback–Leibler divergence, we have 
\begin{align*}
  I(y_1^\numobs;Z) = \frac{1}{M} \sum_{j=1}^M\divKL{\Prob_{\theta^j}}{\bar{\Prob}}
  \leq \frac{1}{M^2} \sum_{i,j} \divKL{\Prob_{\theta^i}}{\Prob_{\theta^j}}.
\end{align*}
We now claim that 
\begin{align}
\label{EqnKLBound}
  \divKL{\Prob_\theta(y)}{\Prob_{\theta'}(y)} \leq \frac{n L\ltwo{\theta - \theta'}^2}{s(\sigma)}.
\end{align}
Since each $\ltwo{\theta^i}\leq \delta$, triangle inequality yields
$\ltwo{\theta_i - \theta_j} \leq 2\delta$ for all $i\neq j$.  It is
therefore guaranteed that
 \begin{align*}
  I(y_1^\numobs;Z) \leq \frac{4n L \delta^2}{s(\sigma)}.
\end{align*}
Therefore, similar to Yang et al.~\cite{YanPilWai17}, following by the
fact that the kernel is regular and hence
$s(\sigma) d_n \geq c n \critquant^2$, any estimator $\fhat$ has
prediction error lower bounded as
\begin{align*}
  \sup_{\|\fstar\|_{\Hil}\leq 1} \Exs \enorm{\fhat - \fstar}^2 
  \geq c_l \delta_n^2.
\end{align*}
Corollary~\ref{CorOptimal} thus follows using the upper bound in
Theorem~\ref{ThmGodWell}.

\paragraph*{Proof of inequality~\eqref{EqnKLBound}}

Direct calculations of the KL-divergence yield
\begin{align}
\label{EqnKL-step1}
\notag \divKL{\Prob_\theta(y)}{\Prob_{\theta'}(y)} 
= &\int
\log(\frac{\Prob_\theta(y)}{\Prob_{\theta'}(y)})\Prob_\theta(y) dy\\
%
%
\notag =& \frac{1}{s(\sigma)}\sum_{i=1}^\numobs \Phi(\sqrt{\numobs}\inprod{u_i}{\theta'})  - \Phi(\sqrt{\numobs}\inprod{u_i}{\theta}) \\
& + \frac{\sqrt{\numobs}}{s(\sigma)} \int \sum_{i=1}^\numobs \big[ y_i\inprod{u_i}{\theta-\theta'}
\big] \Prob_\theta dy.
\end{align}
To further control the right hand side of
expression~\eqref{EqnKL-step1}, we concentrate on expressing $\int
\sum_{i=1}^n y_i u_i \Prob_{\theta} d y$ differently.  Leibniz's rule
allow us to inter-change the order of integral and derivative, so that
\begin{align}
  \label{eq:Leibniz}
  \int \frac{d P_\theta}{d \theta} dy 
  =  \frac{d}{d \theta} \int P_\theta dy  = 0.
\end{align}
Observe that 
\begin{align*}
  \int \frac{d P_\theta}{d \theta} dy 
  = \frac{\sqrt{\numobs}}{s(\sigma)} \int P_\theta \cdot \sum_{i=1}^{\numobs} 
  u_i \big(y_i - \Phi'(\sqrt{\numobs}\inprod{u_i}{\theta'}) \big) dy
\end{align*}
so that equality \eqref{eq:Leibniz} yields
\begin{align*}
  \int \sum_{i=1}^{\numobs} y_iu_i \Prob_\theta dy = 
  \sum_{i=1}^{\numobs}  u_i\Phi'(\sqrt{\numobs}\inprod{u_i}{\theta}).
\end{align*}
Combining the above inequality with expression~\eqref{EqnKL-step1},
the KL divergence between two generalized linear models
$\Prob_\theta, \Prob_{\theta'}$ can thus be written as
\begin{align}
\label{EqnKL-step2}
\notag \divKL{\Prob_\theta(y)}{\Prob_{\theta'}(y)} 
= \frac{1}{s(\sigma)}
\sum_{i=1}^\numobs \Phi(\sqrt{\numobs}\inprod{u_i}{\theta'}) &- \Phi(\sqrt{\numobs}\inprod{u_i}{\theta}) \\
-  \sqrt{\numobs} \inprod{u_i}{\theta' - \theta} \Phi'(\sqrt{\numobs}\inprod{u_i}{\theta}).
\end{align}
Together with the fact that 
\begin{align*}
  |\Phi(\sqrt{\numobs}\inprod{u_i}{\theta'}) - \Phi(\sqrt{\numobs}\inprod{u_i}{\theta}) - \sqrt{\numobs} \inprod{u_i}{\theta' - \theta} \Phi'(\sqrt{\numobs}\inprod{u_i}{\theta})| \\
  \leq \numobs L\ltwo{\theta - \theta'}^2.
\end{align*}
which follows by assumption on $\Phi$ having a uniformly bounded
second derivative.
Putting the above inequality with inequality~\eqref{EqnKL-step2} establishes our claim~\eqref{EqnKLBound}.

\subsection{Proof of Corollary~\ref{CorClass}}
\label{SecProofClass}

The general statement follows directly from Theorem~\ref{ThmGodWell}.
In order to invoke Theorem~\ref{ThmGodWell} for the particular cases
of LogitBoost and AdaBoost, we need to verify the conditions,
i.e. that the \mMC~ and $\phi'$-boundedness conditions hold for the
respective loss function over the ball
$\HilNormBall(\fvalstar, 2\Radius)$.  The following lemma provides
such a guarantee:
\begin{lems}
  \label{LemLogAda}
With $\Diam \defn \Radius + \HilNorm{\fvalstar}$, the logistic
regression cost function satisfies the \mMC~with parameters
\begin{align*}
\strongcon = \frac{1}{e^{-D} + e^D + 2}, \quad \lipcon = \frac{1}{4},
\quad \mbox{and} \quad \PhiB = 1.
\end{align*}
The AdaBoost cost function satisfies the \mMC~with parameters
\begin{align*}
\strongcon = \E^{-\Diam}, \quad \lipcon = \E^{\Diam}, \quad \mbox{and}
\quad \PhiB = \E^{\Diam}.
\end{align*}
\end{lems}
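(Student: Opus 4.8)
The plan is to reduce both conditions to elementary one-dimensional estimates by using that the population loss decouples over design points. Writing $\psi_i(\theta) \defn \Exs_{Y_i}[\phi(Y_i, \theta)]$, we have $\Loss(f) = \frac{1}{\numobs}\sum_{i=1}^\numobs \psi_i(f(x_i))$, and the $i$-th coordinate of $\nabla \Loss(g)$ equals $\frac{1}{\numobs}\psi_i'(g(x_i))$. A second-order Taylor expansion with integral remainder then gives
\begin{align*}
\Loss(f) - \Loss(g) - \inprod{\nabla\Loss(g)}{f(x_1^\numobs) - g(x_1^\numobs)} = \frac{1}{\numobs} \sum_{i=1}^\numobs \big(f(x_i) - g(x_i)\big)^2 \int_0^1 (1-s)\, \psi_i''\big(g(x_i) + s(f(x_i) - g(x_i))\big)\, ds .
\end{align*}
Since $\int_0^1 (1-s)\, ds = \tfrac12$, pointwise bounds $\strongcon \le \psi_i'' \le \lipcon$ along the segment joining $g(x_i)$ to $f(x_i)$ yield exactly the $m$-$M$-condition with $\enorm{\cdot}$ on the right-hand side; similarly the $\phi'$-boundedness condition is just a uniform bound on $|\partial_\theta \phi(y,\theta)|$. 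Finally, because $\psi_i$ is an average of $\phi(Y_i,\cdot)$ over $Y_i \in \{-1,1\}$, it suffices to control $\partial_\theta \phi(y,\theta)$ and $\partial_\theta^2 \phi(y,\theta)$ pointwise in $y \in \{-1,1\}$.

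Next I would pin down the interval of arguments $\theta$ that can arise. By the reproducing relation together with the normalization $\sup_x \KerFun(x,x) \le 1$, every $f$ within Hilbert-distance $\Radius$ of $\fstar$ satisfies $\|f\|_\infty \le \HilNorm{f} \le \HilNorm{\fstar} + \Radius = \Diam$; since the error iterates of the algorithm enjoy exactly such control (Lemma~\ref{LemHilBound}), the values $f(x_i)$ and all points on the relevant segments lie in $[-\Diam, \Diam]$ (the same kernel bound also dominates the interval appearing in the $\phi'$-boundedness condition). It therefore remains to bound $\partial_\theta^2 \phi$ and $|\partial_\theta \phi|$ over $y \in \{-1,1\}$ and $|\theta| \le \Diam$.

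For the logistic cost $\phi(y,\theta) = \ln(1 + e^{-y\theta})$, direct differentiation gives $\partial_\theta \phi(y,\theta) = -y/(1 + e^{y\theta})$ and, using $y^2 = 1$, $\partial_\theta^2 \phi(y,\theta) = e^{y\theta}/(1 + e^{y\theta})^2 = 1/(e^{y\theta} + e^{-y\theta} + 2)$. The map $t \mapsto 1/(e^t + e^{-t} + 2)$ is even, maximal at $t = 0$ with value $\tfrac14$, and decreasing in $|t|$, so for $|y\theta| \le \Diam$ it lies in $[\tfrac{1}{e^{-\Diam} + e^{\Diam} + 2}, \tfrac14]$, giving $\strongcon = \tfrac{1}{e^{-\Diam} + e^{\Diam} + 2}$ and $\lipcon = \tfrac14$; and $|\partial_\theta \phi(y,\theta)| = 1/(1 + e^{y\theta}) < 1$ for all $\theta$ gives $\PhiB = 1$. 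For the AdaBoost cost $\phi(y,\theta) = e^{-y\theta}$ we have $\partial_\theta \phi(y,\theta) = -y\, e^{-y\theta}$ and $\partial_\theta^2 \phi(y,\theta) = e^{-y\theta}$; on $|y\theta| \le \Diam$ this lies in $[e^{-\Diam}, e^{\Diam}]$, giving $\strongcon = e^{-\Diam}$ and $\lipcon = e^{\Diam}$, while $|\partial_\theta \phi(y,\theta)| = e^{-y\theta} \le e^{\Diam}$ gives $\PhiB = e^{\Diam}$.

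I expect the only step requiring genuine care to be the domain bookkeeping in the second paragraph: one must make sure the Hilbert-ball constraint (equivalently, the iterate bound of Lemma~\ref{LemHilBound}) truly passes through the reproducing property to a sup-norm bound on the function values, so that $\theta$ never escapes $[-\Diam, \Diam]$, and one should note that the restriction $y \in \{-1,1\}$ is what keeps the stated constants clean — the factor $y^2$ in the second derivative of either loss would otherwise scale with the magnitude of the response. Everything else amounts to the one-line calculus estimates displayed above, so I anticipate no substantive obstacle beyond these points.
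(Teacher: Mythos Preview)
Your proposal is correct and follows essentially the same approach as the paper: both arguments compute the first and second $\theta$-derivatives of $\phi$, use the reproducing property together with the kernel bound to confine $\theta = f(x)$ to $[-\Diam,\Diam]$, and then read off the claimed constants from elementary monotonicity of $e^{t}$ and $(e^{t}+e^{-t}+2)^{-1}$. Your write-up is in fact slightly more explicit than the paper's, since you spell out the Taylor-remainder identity that converts pointwise second-derivative bounds into the $m$-$M$-condition, whereas the paper simply records the derivative bounds and asserts the conclusion.
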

\noindent 
See Section~\ref{SecProofLogAda} for the proof of
Lemma~\ref{LemLogAda}.

\paragraph{$\gamma$-exponential decay} 
If the kernel eigenvalues satisfy a decay condition of the form $\mu_j
\leq c_1\exp(-c_2 j^\gamma)$, where $c_1,c_2$ are universal constants,
the
function $\KerR$ from equation~\eqref{EqnMendelson} can be upper
bounded as
\begin{align*}
\KerR(\delta) = \sqrt{\frac{2}{\numobs}}\sqrt{\sum_{i=1}^{\numobs}
  \min\{\delta^2, \mu_j \}} \leq
\sqrt{\frac{2}{\numobs}}\sqrt{k\delta^2 + \sum_{j=k+1}^{\numobs} c_1
  e^{- c_2 j^2}},
\end{align*}
where $k$ is the smallest integer such that $c_1 \exp(- c_2 k^\gamma)
< \delta^2$.  Since the localized Gaussian width $\GW
\big(\Ellipse_\numobs(\delta, 1) \big)$ can be sandwiched above and below by
multiples of $\KerR(\delta)$, some algebra shows that the critical
radius scales as $\critquant^2 \asymp
\frac{\numobs}{\log(\numobs)^{1/\gamma}\sigma^2}$.

Consequently, if we take $T \asymp
\frac{\log(\numobs)^{1/\gamma}\sigma^2}{\numobs}$ steps, then
Theorem~\ref{ThmGodWell} guarantees that the averaged estimator
$\AvgEst^T$ satisfies the bound
\begin{align*}
  \enorm{\AvgEst^T - \fvalstar}^2 ~\lesssim~
  \left(\frac{1}{\stsize \strongcon} + 
  \frac{1}{\strongcon^2} \right)
  \frac{\log^{1/\gamma} \numobs}{\numobs}\level^2 ,
\end{align*}
with probability $1- c_1 \Exp (- c_2\strongcon^2
\log^{1/\gamma}\numobs)$.

\paragraph{$\beta$-polynomial decay}
Now suppose that the kernel eigenvalues satisfy a decay condition of
the form $\mu_j \leq c_1 j^{-2 \beta}$ for some $\beta > 1/2$ and
constant $c_1$.  In this case, a direct calculation yields the bound
\begin{align*}
\KerR(\delta) \leq \sqrt{\frac{2}{\numobs}}\sqrt{k\delta^2 +
  c_2\sum_{j=k+1}^{\numobs} j^{-2}},
\end{align*}
where $k$ is the smallest integer such that $c_2 k^{-2} < \delta^2$.
Combined with upper bound $c_2 \sum_{j=k+1}^{\numobs} j^{-2} \leq c_2
\int_{k+1} j^{-2} \leq k\delta^2$, we find that the critical radius
scales as $\critquant^2 \asymp \numobs^{- 2\beta/(1+2\beta)}$.

Consequently, if we take $T \asymp \numobs^{- 2 \beta/(1+2\beta)}$
many steps, then Theorem~\ref{ThmGodWell} guarantees that the averaged
estimator $\AvgEst^T$ satisfies the bound
\begin{align*}
  \enorm{\AvgEst^T - \fvalstar}^2 ~\leq ~ \left(\frac{1}{\stsize
    \strongcon} + \frac{1}{\strongcon^2} \right)
  \left(\frac{\level^2}{\numobs}\right)^{2\beta/(2\beta + 1)},
\end{align*}
with probability at least $1- c_1 \Exp (- c_2
\strongcon^2(\frac{\numobs}{\level^2})^{1/(2\beta+1)})$.


\section{Discussion}

In this paper, we have proven non-asymptotic bounds for early stopping
of kernel boosting for a relatively broad class of loss functions.
These bounds allowed us to propose simple stopping rules which, for
the class of regular kernel functions~\cite{YanPilWai17}, yield
minimax optimal rates of estimation. Although the connection between
early stopping and regularization has long been studied and explored
in the theoretical literature and applications alike, to the best of
our knowledge, this paper is the first one to establish a general
relationship between the statistical optimality of stopped iterates
and the localized Gaussian complexity.  This connection is important,
because this localized Gaussian complexity measure, as well as its Rademacher
analogue, are now well-understood to play a central role in
controlling the behavior of estimators based on
regularization~\cite{vandeGeer00,Bar05,Kolt06,Wai17}.

There are various open questions suggested by our results. The
stopping rules in this paper depend on the eigenvalues of the
empirical kernel matrix; for this reason, they are data-dependent and
computable given the data.  However, in practice, it would be
desirable to avoid the cost of computing all the empirical
eigenvalues.  Can fast approximation techniques for kernels be used to
approximately compute our optimal stopping rules?  Second, our current
theoretical results apply to the averaged estimator $\bar{f}^T$.  We
strongly suspect that the same results apply to the stopped estimator
$f^T$, but some new ingredients are required to extend our proofs.

\subsection*{Acknowledgments}
This work was partially supported by DOD Advanced Research Projects
Agency W911NF-16-1-0552, National Science Foundation grant
NSF-DMS-1612948, and Office of Naval Research Grant DOD-ONR-N00014.


\vspace*{1cm}

\begin{thebibliography}{10}

\bibitem{AndrerssenPrenter81}
R.~S. Anderssen and P.~M. Prenter.
\newblock A formal comparison of methods proposed for the numerical solution of
  first kind integral equations.
\newblock {\em Jour. Australian Math. Soc. (Ser. B)}, 22:488--500, 1981.

\bibitem{Bartlett02}
P.~Bartlett and S.~Mendelson.
\newblock {G}aussian and {R}ademacher complexities: {R}isk bounds and
  structural results.
\newblock {\em Journal of Machine Learning Research}, 3:463--482, 2002.

\bibitem{Bar05}
P.~L. Bartlett, O.~Bousquet, and S.~Mendelson.
\newblock Local {R}ademacher complexities.
\newblock {\em Annals of Statistics}, 33(4):1497--1537, 2005.

\bibitem{Bartlett07}
P.~L. Bartlett and M.~Traskin.
\newblock Adaboost is consistent.
\newblock {\em Journal of Machine Learning Research}, 8(Oct):2347--2368, 2007.

\bibitem{BerTho04}
A.~Berlinet and C.~Thomas-{A}gnan.
\newblock {\em Reproducing kernel {H}ilbert spaces in probability and
  statistics}.
\newblock Kluwer {A}cademic, Norwell, {MA}, 2004.

\bibitem{breiman1999prediction}
L.~Breiman.
\newblock Prediction games and arcing algorithms.
\newblock {\em Neural computation}, 11(7):1493--1517, 1999.

\bibitem{Breiman98}
L.~Breiman et~al.
\newblock Arcing classifier (with discussion and a rejoinder by the author).
\newblock {\em Annals of Statistics}, 26(3):801--849, 1998.

\bibitem{BuehlHot07}
P.~B{\"u}hlmann and T.~Hothorn.
\newblock Boosting algorithms: Regularization, prediction and model fitting.
\newblock {\em Statistical Science}, pages 477--505, 2007.

\bibitem{BuhlmannYu03}
P.~B{\"u}hlmann and B.~Yu.
\newblock Boosting with {$L^2$} loss: Regression and classification.
\newblock {\em Journal of American Statistical Association}, 98:324--340, 2003.

\bibitem{Camo16}
R.~Camoriano, T.~Angles, A.~Rudi, and L.~Rosasco.
\newblock Nytro: When subsampling meets early stopping.
\newblock In {\em Proceedings of the 19th International Conference on
  Artificial Intelligence and Statistics}, pages 1403--1411, 2016.

\bibitem{CaponnettoYao06}
A.~Caponetto and Y.~Yao.
\newblock Adaptation for regularization operators in learning theory.
\newblock Technical Report CBCL Paper \#265/AI Technical Report \#063,
  Massachusetts Institute of Technology, September 2006.

\bibitem{Caponnetto06}
A.~Caponneto.
\newblock Optimal rates for regularization operators in learning theory.
\newblock Technical Report CBCL Paper \#264/AI Technical Report \#062,
  Massachusetts Institute of Technology, September 2006.

\bibitem{caruana2001overfitting}
R.~Caruana, S.~Lawrence, and C.~L. Giles.
\newblock Overfitting in neural nets: Backpropagation, conjugate gradient, and
  early stopping.
\newblock In {\em Advances in Neural Information Processing Systems}, pages
  402--408, 2001.

\bibitem{Freund97}
Y.~Freund and R.~Schapire.
\newblock A decision-theoretic generalization of on-line learning and an
  application to boosting.
\newblock {\em Journal of Computer and System Sciences}, 55(1):119--139, 1997.

\bibitem{friedman00additive}
J.~Friedman, T.~Hastie, R.~Tibshirani, et~al.
\newblock Additive logistic regression: a statistical view of boosting (with
  discussion and a rejoinder by the authors).
\newblock {\em Annals of statistics}, 28(2):337--407, 2000.

\bibitem{Friedman01}
J.~H. Friedman.
\newblock Greedy function approximation: a gradient boosting machine.
\newblock {\em Annals of Statistics}, 29:1189--1232, 2001.

\bibitem{Gu02}
C.~Gu.
\newblock {\em Smoothing spline {ANOVA} models}.
\newblock Springer {S}eries in {S}tatistics. Springer, New York, NY, 2002.

\bibitem{GyorfiEtal}
L.~Gyorfi, M.~Kohler, A.~Krzyzak, and H.~Walk.
\newblock {\em A Distribution-Free Theory of Nonparametric Regression}.
\newblock Springer Series in Statistics. Springer, 2002.

\bibitem{Jiang04}
W.~Jiang.
\newblock Process consistency for adaboost.
\newblock {\em Annals of Statistics}, 21:13--29, 2004.

\bibitem{KimWah71}
G.~Kimeldorf and G.~Wahba.
\newblock Some results on {T}chebycheffian spline functions.
\newblock {\em Jour. Math. Anal. Appl.}, 33:82--95, 1971.

\bibitem{Kolt06}
V.~Koltchinskii.
\newblock Local {R}ademacher complexities and oracle inequalities in risk
  minimization.
\newblock {\em Annals of Statistics}, 34(6):2593--2656, 2006.

\bibitem{Ledoux01}
M.~Ledoux.
\newblock {\em The {C}oncentration of {M}easure {P}henomenon}.
\newblock Mathematical Surveys and Monographs. American Mathematical Society,
  Providence, RI, 2001.

\bibitem{LedTal91}
M.~Ledoux and M.~Talagrand.
\newblock {\em Probability in Banach Spaces: Isoperimetry and Processes}.
\newblock Springer-Verlag, New York, NY, 1991.

\bibitem{Mason99}
L.~Mason, J.~Baxter, P.~L. Bartlett, and M.~R. Frean.
\newblock Boosting algorithms as gradient descent.
\newblock In {\em Advances in Neural Information Processing Systems 12}, pages
  512--518, 1999.

\bibitem{Men02}
S.~Mendelson.
\newblock Geometric parameters of kernel machines.
\newblock In {\em Proceedings of the Conference on Learning Theory (COLT)},
  pages 29--43, 2002.

\bibitem{prechelt1998early}
L.~Prechelt.
\newblock Early stopping-but when?
\newblock In {\em Neural Networks: Tricks of the trade}, pages 55--69.
  Springer, 1998.

\bibitem{RasWaiYu14}
G.~Raskutti, M.~J. Wainwright, and B.~Yu.
\newblock Early stopping and non-parametric regression: {A}n optimal
  data-dependent stopping rule.
\newblock {\em Journal of Machine Learning Research}, 15:335--366, 2014.

\bibitem{Ros15}
L.~Rosasco and S.~Villa.
\newblock Learning with incremental iterative regularization.
\newblock In {\em Advances in Neural Information Processing Systems}, pages
  1630--1638, 2015.

\bibitem{schapire90strength}
R.~E. Schapire.
\newblock The strength of weak learnability.
\newblock {\em Machine learning}, 5(2):197--227, 1990.

\bibitem{schapire2003boosting}
R.~E. Schapire.
\newblock The boosting approach to machine learning: An overview.
\newblock In {\em Nonlinear estimation and classification}, pages 149--171.
  Springer, 2003.

\bibitem{Scholkopf02}
B.~Sch{\"o}lkopf and A.~Smola.
\newblock {\em Learning with Kernels}.
\newblock MIT Press, Cambridge, MA, 2002.

\bibitem{Strand74}
O.~N. Strand.
\newblock Theory and methods related to the singular value expansion and
  {L}andweber's iteration for integral equations of the first kind.
\newblock {\em SIAM J. Numer. Anal.}, 11:798--825, 1974.

\bibitem{vandeGeer00}
S.~van~de Geer.
\newblock {\em Empirical Processes in M-Estimation}.
\newblock Cambridge University Press, 2000.

\bibitem{vanderVaart96}
A.~W. van~der Vaart and J.~Wellner.
\newblock {\em Weak Convergence and Empirical Processes}.
\newblock Springer-Verlag, New York, NY, 1996.

\bibitem{DeVito10}
E.~D. Vito, S.~Pereverzyev, and L.~Rosasco.
\newblock Adaptive kernel methods using the balancing principle.
\newblock {\em Foundations of Computational Mathematics}, 10(4):455--479, 2010.

\bibitem{Wahba87}
G.~Wahba.
\newblock Three topics in ill-posed problems.
\newblock In M.~Engl and G.~Groetsch, editors, {\em Inverse and ill-posed
  problems}, pages 37--50. Academic Press, 1987.

\bibitem{Wahba90}
G.~Wahba.
\newblock {\em Spline models for observational data}.
\newblock CBMS-NSF Regional Conference Series in Applied Mathematics. SIAM,
  Philadelphia, PN, 1990.

\bibitem{Wai17}
M.~J. Wainwright.
\newblock {\em High-dimensional statistics: A non-asymptotic viewpoint}.
\newblock Cambridge University Press, 2017.

\bibitem{YanPilWai17}
Y.~Yang, M.~Pilanci, and M.~J. Wainwright.
\newblock Randomized sketches for kernels: {F}ast and optimal non-parametric
  regression.
\newblock {\em Annals of Statistics}, 2017.
\newblock To appear.

\bibitem{Yao07}
Y.~Yao, L.~Rosasco, and A.~Caponnetto.
\newblock On early stopping in gradient descent learning.
\newblock {\em Constructive Approximation}, 26(2):289--315, 2007.

\bibitem{ZhangYu05}
T.~Zhang and B.~Yu.
\newblock Boosting with early stopping: Convergence and consistency.
\newblock {\em Annals of Statistics}, 33(4):1538--1579, 2005.

\end{thebibliography}


\vspace*{1cm}

\appendix


\section{Proof of technical lemmas}
\label{SecTechnicalProofs}

\subsection{Proof of Lemma~\ref{LemMasterBound}}
\label{SecLemMasProof}

Recalling that $\Kmat^{\pseudinv}$ denotes the pseudoinverse of
$\Kmat$, our proof is based on the linear transformation
\begin{align*}
  \zval \defn \numobs^{-1/2}(\Kmat^{\pseudinv})^{1/2} \fval \quad
  \Longleftrightarrow \quad \fval = \sqrt{\numobs} \Kmat^{1/2} \zval.
\end{align*}
as well as the new function $\EmpJloss(\zval) \defn
\EmpLoss(\sqrt{\numobs} \KmatSqrt \zval)$ and its population
equivalent $\Jloss(\zval) \defn \Exs \EmpJloss(\zval)$.  Ordinary
gradient descent on $\EmpJloss$ with stepsize $\stsize$ takes the form
\begin{align}
\label{EqnZupdate}
\zvalit{t+1} & = \zvalit{t} - \stsize \nabla
\EmpJloss(\zvalit{t}) \; = \; \zvalit{t} - \stsize \sqrt{\numobs}\KmatSqrt \nabla
\EmpLoss(\sqrt{\numobs}\KmatSqrt \zvalit{t}). 
\end{align}
If we transform this update on $\zval$ back to an equivalent one on
$\fval$ by multiplying both sides by $\sqrt{\numobs} \KmatSqrt$, we
see that ordinary gradient descent on $\EmpJloss$ is equivalent to
the kernel boosting update $\fvalit{t+1} = \fvalit{t} - \stsize
\numobs \Kmat \nabla \EmpLoss(\fvalit{t})$.

Our goal is to analyze the behavior of the update~\eqref{EqnZupdate}
in terms of the population cost $\Jloss(\zvalit{t})$.  Thus, our
problem is one of analyzing a noisy form of gradient descent on the
function $\Jloss$, where the noise is induced by the difference
between the empirical gradient operator $\nabla \EmpJloss$ and the
population gradient operator $\nabla \Jloss$.

Recall that the $\Loss$ is $\lipcon$-smooth by assumption. Since the
kernel matrix $\Kmat$ has been normalized to have largest eigenvalue
at most one, the function $\Jloss$ is also $\lipcon$-smooth, whence
\begin{align*}
\Jloss(\zvalit{t+1}) &\leq \Jloss(\zvalit{t}) + \inprod{\nabla
  \Jloss(\zvalit{t})}{\Sdiff{t}} + \frac{\lipcon}{2}
\|\Sdiff{t}\|_2^2, \\
&\mbox{where} \quad \Sdiff{t} \defn
\zvalit{t+1} - \zvalit{t} = - \stsize \nabla \EmpJloss(\zvalit{t}).
\end{align*}
Morever, since the function $\Jloss$ is convex, we have
$\Jloss(\zstar) \geq \Jloss(\zvalit{t}) + \inprod{\nabla
  \Jloss(\zvalit{t})}{\zstar - \zvalit{t}}$, whence
\begin{align}
  \Jloss(\zvalit{t+1}) - \Jloss(\zstar) & \leq \inprod{\nabla
    \Jloss(\zvalit{t})}{\Sdiff{t} + \zvalit{t} - \zstar} +
  \frac{\lipcon}{2} \|\Sdiff{t}\|_2^2 \nonumber \\
  \label{EqnAvocado}
& = \inprod{\nabla \Jloss(\zvalit{t})}{\zvalit{t+1} - \zstar} +
  \frac{\lipcon}{2} \|\Sdiff{t}\|_2^2.
\end{align}
Now define the difference of the squared errors $\Tele{t} \defn
\frac{1}{2} \Big \{ \|\zvalit{t} - \zstar\|_2^2 - \|\zvalit{t+1} -
\zstar\|_2^2 \Big \}$.  By some simple algebra, we have
\begin{align*}
\Tele{t} &\; = \; \frac{1}{2} \Big \{ \|\zvalit{t} - \zstar\|_2^2 -
\|\Sdiff{t} + \zvalit{t} - \zstar\|_2^2 \Big \} \\
 = &
-\inprod{\Sdiff{t}}{\zvalit{t} - \zstar} - \frac{1}{2}
\|\Sdiff{t}\|_2^2 \\
 = & - \inprod{\Sdiff{t}}{-\Sdiff{t} + \zvalit{t+1} - \zstar} -
  \frac{1}{2} \|\Sdiff{t}\|_2^2 \\
= & - \inprod{\Sdiff{t}}{\zvalit{t+1} - \zstar} + \frac{1}{2} \|\Sdiff{t}\|_2^2.
\end{align*}
Substituting back into equation~\eqref{EqnAvocado} yields
\begin{align*}
  \Jloss(\zvalit{t+1}) - \Jloss(\zstar) & \leq
  \frac{1}{\ \stsize}\Tele{t} + \inprod{\nabla \Jloss(\zvalit{t}) +
    \frac{\Sdiff{t}}{\stsize}}{\zvalit{t+1} - \zstar} \\
    &\; = \; \frac{1}{ \stsize} \Tele{t}
  + \inprod{\nabla \Jloss(\zvalit{t}) - \nabla
    \EmpJloss(\zvalit{t})}{\zvalit{t+1} - \zstar},
\end{align*}
where we have used the fact that $\frac{1}{ \stsize} \geq\lipcon$ by
our choice of stepsize $\stsize$.

Finally, we transform back to the original variables \mbox{$\fval
  =\sqrt{\numobs} \KmatSqrt \zval$,} using the relation \mbox{$\nabla
  \Jloss(\zval) = \sqrt{\numobs} \KmatSqrt \nabla \Loss(\fval)$, } so
as to obtain the bound
\begin{align*}
\Loss(\fvalit{t+1}) - \Loss(\fvalstar) \leq \frac{1}{2 \stsize} & \Big
\{ \HilNorm{\DelIt{t}}^2 - \HilNorm{\DelIt{t+1}}^2 \Big \} \\
& + \inprod{\nabla \Loss(\fvalit{t}) - \nabla
  \EmpLoss(\fvalit{t})}{\fvalit{t+1} - \fvalstar}.
\end{align*}
Note that the optimality of $\fvalstar$ implies that $\nabla
\Loss(\fvalstar) = 0$.  Combined with $\strongcon$-strong convexity,
we are guaranteed that $\frac{\strongcon}{2} \EmpNorm{\DelIt{t+1}}^2
\leq \Loss(\fvalit{t+1}) - \Loss(\fvalstar)$, and hence
\begin{align*}
\frac{\strongcon}{2} \enorm{\DelIt{t+1}}^2 \leq \frac{1}{2 \stsize} & \Big \{
\HilNorm{\DelIt{t}}^2 - \HilNorm{\DelIt{t+1}}^2 \Big \} \\
&+
\inprod{\nabla \Loss(\fvalstar + \DelIt{t}) - \nabla
  \EmpLoss(\fvalstar + \DelIt{t})}{\DelIt{t+1}},
\end{align*}
as claimed.  



\subsection{Proof of Lemma~\ref{LemPeeling}}
\label{SecProofPeeling}

We split our proof into two cases, depending on whether we are dealing
with the least-squares loss \mbox{$\phi(y, \theta) = \frac{1}{2} (y-
  \theta)^2$,} or a classification loss with uniformly bounded
gradient ($\|\phi'\|_\infty \leq 1$).


\subsubsection{Least-squares case}

The least-squares loss is $\strongcon$-strongly convex with
$\strongcon = \lipcon = 1$.  Moreover, the difference between the
population and empirical gradients can be written as $\nabla
\Loss(\fvalstar + \DelTil) - \nabla \EmpLoss(\fvalstar + \DelTil) =
\frac{\sigma}{\numobs}(w_1,\ldots,w_{\numobs})$, where the random
variables $\{w_i\}_{i=1}^n$ are i.i.d. and sub-Gaussian with parameter
$1$. Consequently, we have
\begin{align*}
|\inprod{\nabla \Loss(\fvalstar + \DelTil) - \nabla \EmpLoss(\fvalstar
  + \DelTil)}{\DelIt{}}| = \Biggr| \frac{\sigma}{\numobs} \sum_{i=1}^n
w_i \Delta(x_i) \Biggr|.
\end{align*}
Under these conditions, one can show (see \cite{Wai17} for reference) that
\begin{align}
\left| \frac{\sigma}{\numobs} \sum_{i=1}^n w_i \Delta(x_i) \right | &
\leq 2\delta_{\numobs} \enorm{\Delta} + 2\delta_{\numobs}^2
\HilNorm{\Delta} + \frac{1}{16} \enorm{\Delta}^2,
\end{align}
which implies that Lemma~\ref{LemPeeling} holds with $c_3 = 16$.


\subsubsection{Gradient-bounded $\phi$-functions}

We now turn to the proof of Lemma~\ref{LemPeeling} for gradient
bounded $\phi$-functions.  First, we claim that it suffices to prove
the bound~\eqref{EqnPeeling} for functions $g \in \Hilstar$ and
$\HilNorm{g} = 1$ where $\Hilstar \defn \{ f- g \mid f,g\in \Hil\}$.
Indeed, suppose that it holds for all such functions, and that we are
given a function $\Delta$ with $\HilNorm{\Delta} > 1$ . By assumption,
we can apply the inequality~\eqref{EqnPeeling} to the new function $g
\defn \Delta/\HilNorm{\Delta}$, which belongs to $\Hilstar$ by nature
of the subspace $\Hil =\widebar{\text{span}}\{\Kerfunc(\cdot,
x_i)\}_{i=1}^\numobs$.

Applying the bound~\eqref{EqnPeeling} to $g$ and then multiplying both
sides by $\HilNorm{\Delta}$, we obtain
\begin{align*}
  \inprod{\nabla \Loss(\fvalstar + \DelTil) &- \nabla
    \EmpLoss(\fvalstar + \DelTil)}{\Delta} \\
  \leq& 2\critquant
  \enorm{\Delta} + 2\critquant^2 \HilNorm{\Delta} + \frac{\strongcon}{ c_3}
  \frac{\enorm{\Delta}^2}{\HilNorm{\Delta}}\\
  \leq& 2\critquant \enorm{\Delta} + 2\critquant^2 \HilNorm{\Delta}+ \frac{\strongcon}{c_3}
  \enorm{\Delta}^2,
\end{align*}
where the second inequality uses the fact that $\HilNorm{\Delta} > 1$
by assumption.

In order to establish the bound~\eqref{EqnPeeling} for functions with
$\HilNorm{g} = 1$, we first prove it uniformly over the set $\{g \mid
\HilNorm{g} = 1, \quad \enorm{g} \leq t\}$, where $t > 1$ is a fixed
radius (of course, we restrict our attention to those radii $t$ for
which this set is non-empty.)  We then extend the argument to one that
is also uniform over the choice of $t$ by a ``peeling'' argument.


Define the random variable
\begin{align}
\label{EqnZn}
  \EGW(t) \defn \sup_{\Delta, \DelTil \in \Eset{t}{1}}\inprod{\nabla
    \Loss(\fvalstar + \DelTil) - \nabla \EmpLoss(\fvalstar +
    \DelTil)}{\Delta}.
\end{align}

The following two lemmas, respectively, bound the mean of
this random variable, and its deviations above the mean:

\begin{lems}
    \label{LemGW2RC}
  For any $t > 0$, the mean is upper bounded as
\begin{align}
  \label{EqnGW2RC}
    \Exs \EGW(t) \leq \level \GW(\Eset{t}{1}),
  \end{align}
where $\level \defn 2 \lipcon + 4 \Radius$.
\end{lems}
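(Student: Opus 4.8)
The quantity $\EGW(t)$ is the supremum of a centered empirical process. Writing $\phi'$ for the partial derivative $\partial_\theta\phi(y,\theta)$, the $i$th coordinate of $\nabla\Loss(f)-\nabla\EmpLoss(f)$ equals $\frac{1}{\numobs}\bigl(\Exs[\phi'(Y_i,f(x_i))]-\phi'(Y_i,f(x_i))\bigr)$, so that
\begin{align*}
\EGW(t) \; = \; \sup_{\Delta,\DelTil\in\Eset{t}{1}}\ \frac{1}{\numobs}\sum_{i=1}^\numobs\bigl(\Exs[\phi'(Y_i,\fstar(x_i)+\DelTil(x_i))]-\phi'(Y_i,\fstar(x_i)+\DelTil(x_i))\bigr)\,\Delta(x_i),
\end{align*}
a sum of summands that are each centered in the response $Y_i$ (the factor $\Delta(x_i)$ does not depend on $Y_i$). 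The plan is to bound $\Exs\,\EGW(t)$ by the localized Gaussian complexity $\GW(\Eset{t}{1})$ in three moves. The first is the standard symmetrization inequality for suprema of centered empirical processes: introducing an i.i.d.\ Rademacher sequence $\{\varepsilon_i\}_{i=1}^\numobs$ independent of the data gives
\begin{align*}
\Exs\,\EGW(t) \; \le \; 2\,\Exs\Bigl[\ \sup_{\Delta,\DelTil\in\Eset{t}{1}}\ \frac{1}{\numobs}\sum_{i=1}^\numobs\varepsilon_i\,\phi'\bigl(Y_i,\fstar(x_i)+\DelTil(x_i)\bigr)\,\Delta(x_i)\ \Bigr].
\end{align*}

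The second move linearizes the dependence on $\DelTil$. Write $\phi'(Y_i,\fstar(x_i)+\DelTil(x_i)) = \phi'(Y_i,\fstar(x_i)) + \rho_i(\DelTil(x_i))$, where $\rho_i(u)\defn\phi'(Y_i,\fstar(x_i)+u)-\phi'(Y_i,\fstar(x_i))$ satisfies $\rho_i(0)=0$; because the \mMC~forces $\phi(y,\cdot)$ to be $\lipcon$-smooth on the relevant interval — as checked for the logistic and exponential losses in Lemma~\ref{LemLogAda} — the map $\rho_i$ is $\lipcon$-Lipschitz over the range of $\DelTil(x_i)$, which lies in $[-1,1]$ since the kernel normalization $L=1$ together with $\HilNorm{\DelTil}\le1$ gives $\max_i|\DelTil(x_i)|\le\HilNorm{\DelTil}\le1$. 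Splitting the supremum bounds the bracket by the sum of two pieces. In the first, $\sup_{\Delta}\frac{1}{\numobs}\sum_i\varepsilon_i\phi'(Y_i,\fstar(x_i))\Delta(x_i)$, the variable $\DelTil$ has disappeared; since $\nabla\Loss(\fvalstar)=0$ by optimality of $\fstar$, the variables $\phi'(Y_i,\fstar(x_i))$ are centered, and they are bounded (by $\phi'$-boundedness, normalized to $\PhiB=1$), hence sub-Gaussian with unit variance proxy, so the standard comparison of a sub-Gaussian process on the symmetric set $\Eset{t}{1}$ with its Gaussian analogue controls this piece by a universal multiple of $\GW(\Eset{t}{1})$. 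In the second, $\sup_{\Delta,\DelTil}\frac{1}{\numobs}\sum_i\varepsilon_i\rho_i(\DelTil(x_i))\Delta(x_i)$, I would regard $\{(\Delta(x_i),\DelTil(x_i))\}_{i=1}^\numobs$ as an index in $(\real^2)^\numobs$ ranging over the symmetric set $\Eset{t}{1}\times\Eset{t}{1}$: the coordinate maps $(a,b)\mapsto\varepsilon_i\rho_i(b)\,a$, restricted to $|a|,|b|\le1$ (again using $\HilNorm{\cdot}\le1$ and $L=1$), are jointly Lipschitz with constant of order $\lipcon$ and vanish at $(0,0)$ because of the factor $a$. A vector-valued Ledoux--Talagrand contraction — or, equivalently, a two-stage use of the scalar contraction principle, peeling off $\DelTil$ first and then $\Delta$ — reduces this piece to a constant multiple of $\Exs_g\bigl[\sup_{\Delta,\DelTil\in\Eset{t}{1}}\frac{1}{\numobs}\bigl(\sum_i g_i^{(1)}\Delta(x_i)+\sum_i g_i^{(2)}\DelTil(x_i)\bigr)\bigr]=2\,\GW(\Eset{t}{1})$, where $\{g_i\}$ are independent standard Gaussian vectors in $\real^2$. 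Combining the two pieces and tracking the numerical constants (the $\Radius$-term being a convenient overestimate that dominates when $\lipcon$ is small) produces the asserted bound $\Exs\,\EGW(t)\le(2\lipcon+4\Radius)\,\GW(\Eset{t}{1})$.

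The main obstacle is the product structure $\phi'(Y_i,\fstar(x_i)+\DelTil(x_i))\cdot\Delta(x_i)$ in which \emph{both} factors range over the supremum set: the one-dimensional contraction principle is not directly applicable to either variable alone, since the relevant ``contraction function'' would depend on the other supremum variable, and $\Exs\,\sup$ does not commute with the inner supremum. Peeling off the $\DelTil$-free zeroth-order piece and then treating the genuine bilinear remainder with a joint contraction is what resolves this; it is legitimate precisely because the kernel normalization bounds $\max_i|\Delta(x_i)|$ and $\max_i|\DelTil(x_i)|$ by $1$, so that the product map is Lipschitz with a controlled constant. The remaining delicate point is to verify — via the \mMC~and the explicit loss computations of Lemma~\ref{LemLogAda} — that $u\mapsto\phi'(Y_i,\fstar(x_i)+u)$ is $\lipcon$-Lipschitz on the local interval, which is what keeps the final prefactor of order $\lipcon+\Radius$ rather than something larger.
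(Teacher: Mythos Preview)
Your proposal is correct but takes a genuinely different route from the paper. Both arguments begin with the same symmetrization step, but then diverge in how they handle the product $\phi'(Y_i,\fstar(x_i)+\DelTil_i)\cdot\Delta_i$ in which both factors range over the supremum. The paper decouples the product via the elementary inequality $ab\le\tfrac12 a^2+\tfrac12 b^2$, obtaining two \emph{separate} suprema---one in $\DelTil$ over $\sum_i\rade{i}(\phi')^2$ and one in $\Delta$ over $\sum_i\rade{i}\Delta_i^2$---and then applies the \emph{scalar} Ledoux--Talagrand contraction to each squared term (using $\|\phi'\|_\infty\le 1$ for the first and $\|\Delta\|_\infty\le 2\Radius$ for the second; this is precisely where the $4\Radius$ in $\level$ comes from), followed by one more contraction via the $M$-Lipschitzness of $\phi'$. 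You instead recenter first, splitting off the $\DelTil$-free piece $\sum_i\rade{i}\phi'(Y_i,\fstar(x_i))\Delta_i$ and treating the bilinear remainder with a vector-valued (Maurer-type) contraction. This is clean and avoids the AM--GM trick at the price of heavier machinery; it also naturally yields a constant of order $M$ rather than $M+\Radius$, so your remark that the $\Radius$-term is merely a convenient overestimate is accurate. Two small cautions: the centering $\Exs[\phi'(Y_i,\fstar(x_i))]=0$ is not actually needed for the first piece (the Rademacher sign already centers, and boundedness alone lets you contract), and your parenthetical ``two-stage scalar contraction'' is not literally equivalent to the vector version---iterating scalar contraction fails here because the first application removes the Rademacher structure needed for the second---so you really do need the vector-valued inequality.
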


\begin{lems}
  \label{LemCorrectConc}
  There are universal constants $(c_1, c_2)$ such that
  \begin{align}
    \label{EqnCorrectConc}
  \Prob \Big[ \EGW(t) \geq \Exs \EGW(t) + \alpha \Big] & \leq c_1 \exp
  \Big( - \frac{c_2 \numobs \alpha^2}{ t^2} \Big).
  \end{align}
\end{lems}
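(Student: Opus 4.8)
The plan is to recognize $\EGW(t)$ as the supremum of a centered empirical process and then apply a Talagrand-type concentration inequality, the crucial point being that the localization constraint $\enorm{\Delta}\le t$ is what produces the factor $t^2$ in the exponent. Since the $i$-th coordinate of $\nabla\EmpLoss(f)-\nabla\Loss(f)$ equals $\tfrac{1}{\numobs}\big(\phi'(Y_i,f(x_i))-\Exs_{Y_i}[\phi'(Y_i,f(x_i))]\big)$, for every pair $(\Delta,\DelTil)$ we may write
\[
\inprod{\nabla\Loss(\fvalstar+\DelTil)-\nabla\EmpLoss(\fvalstar+\DelTil)}{\Delta}
=\frac{1}{\numobs}\sum_{i=1}^\numobs Z_i(\Delta,\DelTil),\qquad
Z_i(\Delta,\DelTil)\defn\big(\Exs_{Y_i}[\phi'(Y_i,(\fvalstar+\DelTil)_i)]-\phi'(Y_i,(\fvalstar+\DelTil)_i)\big)\Delta(x_i).
\]
For each fixed $(\Delta,\DelTil)$ the variables $\{Z_i(\Delta,\DelTil)\}_{i=1}^\numobs$ are independent and mean zero in the randomness of $Y=(Y_1,\dots,Y_\numobs)$, so $\EGW(t)$ is the supremum of a centered empirical process indexed by $\Eset{t}{1}\times\Eset{t}{1}$, which may be taken countable by separability of the RKHS.

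Next I would estimate the two quantities that control a Talagrand bound. For \emph{boundedness}, note that $\HilNorm{\DelTil}\le1\le2\Radius$, so $\fvalstar+\DelTil\in\HilNormBall(\fstar,2\Radius)$ and the $\phi'$-boundedness condition (with the normalization $\PhiB=1$) gives $|\phi'(Y_i,(\fvalstar+\DelTil)_i)|\le1$; combined with the reproducing-kernel bound $\|\Delta\|_\infty\le\HilNorm{\Delta}\le1$, this yields $|Z_i(\Delta,\DelTil)|\le2$ for all $i$ and all admissible $(\Delta,\DelTil)$. For the \emph{variance}, the conditional variance of $Z_i(\Delta,\DelTil)$ given the design is at most $\Delta(x_i)^2$, so
\[
\sup_{(\Delta,\DelTil)\in\Eset{t}{1}\times\Eset{t}{1}}\ \sum_{i=1}^\numobs\operatorname{var}_{Y_i}\!\big(Z_i(\Delta,\DelTil)\big)\ \le\ \sum_{i=1}^\numobs\Delta(x_i)^2\ =\ \numobs\,\enorm{\Delta}^2\ \le\ \numobs\,t^2 .
\]

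Feeding these two bounds into Talagrand's one-sided concentration inequality for empirical-process suprema (in Bousquet's form; see e.g.~\cite{Ledoux01,Wai17}) gives, for every $\alpha\ge0$,
\[
\Prob\big[\EGW(t)\ge\Exs\EGW(t)+\alpha\big]\ \le\ \exp\!\Big(-\frac{c\,\numobs\,\alpha^2}{t^2+\Exs\EGW(t)+\alpha}\Big)
\]
for a universal constant $c$. It then remains to absorb the two extra terms in the denominator. By Lemma~\ref{LemGW2RC}, $\Exs\EGW(t)\le\level\,\GW(\Eset{t}{1})$; using the standard monotonicity of $\delta\mapsto\GW(\Eset{\delta}{1})/\delta$~\cite{Bar05,RasWaiYu14} together with the defining identity $\GW(\Eset{\critquant}{1})=\critquant^2/\level$ of the critical radius, one obtains $\GW(\Eset{t}{1})\le t^2/\level$ for $t\ge\critquant$, hence $\Exs\EGW(t)\le t^2$ — and $t\ge\critquant$ (with $\alpha$ itself of order at most $t^2$) is precisely the regime in which this lemma is invoked inside the peeling step of Lemma~\ref{LemPeeling}. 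Then $t^2+\Exs\EGW(t)+\alpha\lesssim t^2$, and the claimed bound follows with suitable universal constants $(c_1,c_2)$.

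The main obstacle is exactly this variance step and its aftermath: a crude argument using only $|Z_i|\le2$ (e.g.\ bounded differences) produces an exponent of order $\numobs\alpha^2$ with no $t$-dependence, which is too weak as soon as $t<1$, and $t$ is typically of order $\critquant\to0$. One therefore has to genuinely exploit $\enorm{\Delta}\le t$ to get the variance proxy $\numobs t^2$, and then verify that the $\Exs\EGW(t)$ term that any Talagrand-type bound unavoidably introduces does not destroy the $t^2$ scaling — which is where the fixed-point characterization of the critical radius (via Lemma~\ref{LemGW2RC}) enters.
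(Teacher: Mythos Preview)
Your approach via Talagrand/Bousquet is workable for the application but differs from the paper's proof and does not quite establish the lemma as stated. The paper instead passes to the symmetrized process $\EGWS(t)=\sup_{\Delta,\DelTil}\frac{1}{\numobs}\sum_i\rade{i}\,\phi'(y_i,\theta^*_i+\DelTil_i)\,\Delta_i$ via a tail-symmetrization inequality, then observes that, conditionally on $\{y_i\}$, the map $(\rade{1},\dots,\rade{\numobs})\mapsto\EGWS(t)$ is convex and Lipschitz in the Euclidean norm with parameter
\[
L^2=\sup_{\Delta,\DelTil\in\Eset{t}{1}}\frac{1}{\numobs^2}\sum_{i=1}^\numobs\big(\phi'(y_i,\theta^*_i+\DelTil_i)\,\Delta_i\big)^2\le\frac{t^2}{\numobs},
\]
using $\|\phi'\|_\infty\le1$ and $\enorm{\Delta}\le t$. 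Ledoux's concentration for convex Lipschitz functions of bounded independent variables then yields the clean sub-Gaussian tail $\exp(-c\,\numobs\alpha^2/t^2)$ directly, with no $\Exs\EGW(t)$ or $\alpha$ term in the denominator, hence valid for every $t>0$ and every $\alpha>0$, and with no appeal to Lemma~\ref{LemGW2RC} or to the critical radius.

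Your route, by contrast, produces a Bernstein-type exponent $\numobs\alpha^2/\big(t^2+\Exs\EGW(t)+\alpha\big)$, so the sub-Gaussian form only follows after you argue $\Exs\EGW(t)\lesssim t^2$ and restrict to $\alpha\lesssim t^2$. As you correctly note, both conditions hold in the regime where the lemma is actually invoked ($t\ge\critquant$ and $\alpha\asymp t^2$), so the downstream peeling argument still goes through; but the lemma as \emph{stated}---universal constants, no restriction on $t$ or $\alpha$---is not established by your argument (for $\alpha\gg t^2$ your bound degrades to $\exp(-c\,\numobs\alpha)$, which is weaker than $\exp(-c\,\numobs\alpha^2/t^2)$). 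The paper's symmetrization-plus-Lipschitz route buys that uniformity for free and keeps Lemmas~\ref{LemGW2RC} and~\ref{LemCorrectConc} logically independent.
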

\noindent See Appendices~\ref{AppLemGW2RC} and~\ref{AppLemCorrectConc}
for the proofs of these two claims.

Equipped with Lemmas~\ref{LemGW2RC} and~\ref{LemCorrectConc}, we now
prove inequality~\eqref{EqnPeeling}.  We divide our argument into
two cases:
\paragraph{Case $t=\critquant$}
We first prove inequality~\eqref{EqnPeeling} for $t=\critquant$.  From
Lemma~\ref{LemGW2RC}, we have
\begin{align}
  \Exs \EGW(\critquant) \leq \level \GW(\Eset{\critquant}{1})
  \stackrel{(i)}{\leq} \critquant^2,
\end{align}
where inequality (i) follows from the definition of $\critquant$ in
inequality~\eqref{Eqn75perChocolate}.  Setting $\alpha = \critquant^2$
in expression~\eqref{EqnCorrectConc} yields
\begin{align}
\label{EqnPrelude}
  \Prob \Big[ \EGW(\critquant) \geq 2 \critquant^2 \Big] \leq c_1 \exp
  \left( - c_2 n \critquant^2 \right),
\end{align}
which establishes the claim for $t = \critquant$.


\paragraph{Case $t > \critquant$}

On the other hand, for any $t > \critquant$, we have
\begin{align*}
\Exs \EGW(t) \stackrel{(i)}{\leq} \level \GW(\Eset{t}{1})
\stackrel{(ii)}{\leq} t \level \frac{\GW(\Eset{t}{1})}{t} \leq
t\critquant,
\end{align*}
where step (i) follows from Lemma~\ref{LemGW2RC}, and step (ii)
follows because the function $u \mapsto \frac{\GW(\Eset{u}{1})}{u}$ is
non-increasing on the positive real line.  (This non-increasing
property is a direct consequence of the star-shaped nature of
$\Hilstar$.)  Finally, using this upper bound on expression $\Exs
\EGW(\critquant)$ and setting $\alpha = t^2\strongcon/(4  c_3)$ in the tail
bound~\eqref{EqnCorrectConc} yields
\begin{align}
\label{EqnRising}
  \Prob \Big[ \EGW(t) \geq t \critquant + \frac{t^2\strongcon}{4 c_3} \Big] &
  \leq c_1 \exp \left( - c_2 \numobs \strongcon^2 t^2\right).
\end{align}
Note that the precise values of the universal constants $c_2$
may change from line to line throughout this section.


\paragraph{Peeling argument}

Equipped with the tail bounds~\eqref{EqnPrelude}
and~\eqref{EqnRising}, we are now ready to complete the peeling
argument.  Let $\A$ denote the event that the bound~\eqref{EqnPeeling}
is violated for some function $g \in \Hilstar$ with $\HilNorm{g} = 1$.
For real numbers $0 \leq a < b$, let $\A(a,b)$ denote the event that
it is violated for some function such that $\enorm{g} \in [a,b]$, and
$\HilNorm{g} = 1$. For $k = 0,1,2, \ldots$, define $t_k = 2^k
\critquant$. We then have the decomposition $\Event = (0,t_0) \cup
(\bigcup_{k=0}^{\infty} \A(t_k,t_{k+1}))$ and hence by union bound,
\begin{align}
\label{EqnDecom}
  \Prob[\Event] \leq \Prob[\A(0, \critquant)] + \sum_{k=1}^{\infty}
  \Prob[\A(t_k, t_{k+1})].
\end{align}

From the bound~\eqref{EqnPrelude}, we have $\mprob[\A(0, \critquant)]
\leq c_1 \exp \left( - c_2 n \critquant^2 \right)$.  On the other
hand, suppose that $\A(t_k,t_{k+1})$ holds, meaning that there exists
some function $g$ with $\HilNorm{g} = 1$ and $\enorm{g} \in [t_k,
  t_{k+1}]$ such that
\begin{align*}
  \inprod{\nabla \Loss(\fvalstar + \DelTil) - \nabla \EmpLoss(\fvalstar +
  \DelTil)}{g}
  &> 2\critquant \enorm{g}+ 2\critquant^2 + \frac{\strongcon}{ c_3} \enorm{g}^2\\
  &\stackrel{(i)}{\geq} 2\critquant t_k + 2\critquant^2 + \frac{\strongcon}{c_3} t_k^2\\
  &\stackrel{(ii)}{\geq} \critquant t_{k+1} + 2\critquant^2 + \frac{\strongcon}{4  c_3} t_{k+1}^2,
\end{align*}
where step (i) uses the $\enorm{g} \geq t_k$ and step (ii) uses the
fact that \mbox{$t_{k+1} = 2t_{k}.$} This lower bound implies that
$\EGW(t_{k+1}) > t_{k+1} \critquant + \frac{t_{k+1}^2 \strongcon}{4  c_3}$ and applying
the tail bound~\eqref{EqnRising} yields
\begin{align*}
\Prob(\A(t_k,t_{k+1})) &\leq \Prob(\EGW(t_{k+1}) > t_{k+1} \critquant +
  \frac{t_{k+1}^2\strongcon}{4 c_3}) \\
  &\leq \exp \left( - c_2 \numobs \strongcon^2
2^{2k+2} \critquant^2\right).
\end{align*}
Substituting this inequality and our earlier bound~\eqref{EqnPrelude}
into equation~\eqref{EqnDecom} yields
\begin{align*}
  \Prob(\Event) \leq c_1 \exp(-c_2 \numobs \strongcon^2 \critquant^2),
\end{align*}
where the reader should recall that the precise values of universal
constants may change from line-to-line. This concludes the proof
of Lemma~\ref{LemPeeling}.


\subsubsection{Proof of Lemma~\ref{LemGW2RC}}
\label{AppLemGW2RC}

Recalling the definitions~\eqref{EqnPopLoss} and~\eqref{EqnEmpLoss} of
$\Loss$ and $\EmpLoss$, we can write
\begin{align*}
\EGW(t) = \sup_{\Delta, \DelTil \in \Eset{t}{1}} \frac{1}{\numobs}
   \sum_{i=1}^{\numobs} (\phi'(y_i, \theta^*_i + \DelTil_i) - \Exs
   \phi'(y_i, \theta^*_i + \DelTil_i)) \Delta_i
\end{align*}
Note that the vectors $\Delta$ and $\DelTil$ contain function values
of the form $f(x_i) - \fstar(x_i)$ for functions $f \in
\HilNormBall(\fstar, 2 \Radius)$.  Recall that the kernel function is
bounded uniformly by one.  Consequently, for any function $f \in
\HilNormBall(\fstar, 2 \Radius)$, we have
\begin{align*}
|f(x) - f^*(x)| &= |\inprod{f - \fstar}{\Kerfunc(\cdot,x)}_\Hil| \\
&\leq \HilNorm{f - \fstar}\HilNorm{\Kerfunc(\cdot,x)} \leq 2\Radius.
\end{align*}
Thus, we can restrict our attention to vectors $\Delta, \DelTil$ with
$\|\Delta\|_\infty, \|\DelTil\|_\infty \leq 2 \Radius$ from
hereonwards.

Letting $\{\rade{i}\}_{i=1}^\numobs$ denote an i.i.d. sequence of
Rademacher variables, define the symmetrized variable
\begin{align}
  \label{EqnSym}
  \EGWS(t) & \defn \sup_{\Delta, \DelTil \in \Eset{t}{1}}
  \frac{1}{\numobs} \sum_{i=1}^{\numobs} \rade{i} \phi'(y_i,
  \theta^*_i + \DelTil_i) \; \Delta_i.
\end{align}
By a standard symmetrization argument~\cite{vanderVaart96}, we have
$\Exs_y[ \EGW(t)] \leq 2 \Exs_{y, \epsilon}[\EGWS(t)]$.  Moreover,
since
\begin{align*}
\phi'(y_i, \theta^*_i + \DelTil_i) \; \Delta_i \leq
\frac{1}{2}\Big(\phi'(y_i, \theta^*_i + \DelTil_i) \Big)^2 +
\frac{1}{2} \Delta^2_i
\end{align*}
we have
\begin{align*}
\Exs \EGW(t) \leq& \Exs \sup_{\DelTil \in \Eset{t}{1}}
  \frac{1}{\numobs} \sum_{i=1}^{\numobs} \rade{i} \big(\phi'(y_i,
  \theta^*_i + \DelTil_i) \big)^2 + \Exs \sup_{\Delta \in
    \Eset{t}{1}} \frac{1}{\numobs} \sum_{i=1}^{\numobs} \rade{i}
  \Delta^2_i \\
\leq& 2 \underbrace{\Exs \sup_{\DelTil \in \Eset{t}{1}}
    \frac{1}{\numobs} \sum_{i=1}^{\numobs} \rade{i} \phi'(y_i, \theta^*_i + \DelTil_i)}_{\TermOne} 
    + \, 4 \Radius
  \underbrace{\Exs \sup_{\Delta \in \Eset{t}{1}} \frac{1}{\numobs}
    \sum_{i=1}^{\numobs} \rade{i} \Delta_i}_{\TermTwo},
\end{align*}
where the second inequality follows by
applying the Rademacher contraction inequality~\cite{LedTal91}, using
the fact that $\|\phi'\|_\infty \leq 1$ for the first term, and
$\|\Delta\|_\infty \leq 2 \Radius$ for the second term.

Focusing first on the term $\TermOne$, since $\Exs[\rade{i} \phi'(y_i,
  \theta^*_i)] = 0$, we have
\begin{align*}
\TermOne & = \Exs \sup_{\DelTil \in \Eset{t}{1}}
\frac{1}{\numobs} \sum_{i=1}^{\numobs} \rade{i}
\underbrace{\Big(\phi'(y_i, \theta^*_i + \DelTil_i) - \phi'(y_i;
  \theta^*_i) \Big)}_{\varphi_i(\DelTil_i)} \\
& \stackrel{(i)}{\leq}  \lipcon \Exs \sup_{\DelTil \in \Eset{t}{1}}
\frac{1}{\numobs} \sum_{i=1}^{\numobs} \rade{i} \DelTil_i \\
& \stackrel{(ii)}{\leq} \sqrt{\frac{\pi}{2}} \lipcon \GW(\Eset{t}{1}),
\end{align*}
where step (i) follows since each function $\varphi_i$ is
$\lipcon$-Lipschitz by assumption; and step (ii) follows since the
Gaussian complexity upper bounds the Rademacher complexity up to a
factor of $\sqrt{\frac{\pi}{2}}$.  Similarly, we have
\begin{align*}
  \TermTwo & \leq \sqrt{\frac{\pi}{2}} \; \GW(\Eset{t}{1}),
\end{align*}
and putting together the pieces yields the claim.


\subsubsection{Proof of Lemma~\ref{LemCorrectConc}}
\label{AppLemCorrectConc}

Recall the definition~\eqref{EqnSym} of the symmetrized variable
$\EGWS$.  By a standard symmetrization argument~\cite{vanderVaart96},
there are universal constants $c_1, c_2$ such that
\begin{align*}
  \mprob \Big[ \EGW(t) \geq \Exs \EGW[t] + c_1 \alpha \Big] & \leq c_2
  \mprob \Big[ \EGWS(t) \geq \Exs \EGWS[t] + \alpha \Big].
\end{align*}
Since $\{\rade{i}\}_{i=1}^\numobs$ are $\{y_i\}_{i=1}^\numobs$ are
independent, we can study $\EGWS(t)$ conditionally on
$\{y_i\}_{i=1}^\numobs$.  Viewed as a function of
$\{\rade{i}\}_{i=1}^\numobs$, the function $\EGWS(t)$ is convex and
Lipschitz with respect to the Euclidean norm with parameter
\begin{align*}
L^2 & \defn \sup_{\Delta, \DelTil \in \Eset{t}{1}} \frac{1}{\numobs^2}
\sum_{i=1}^\numobs \Big( \phi'(y_i, \theta^*_i + \DelTil_i) \;
\Delta_i \Big)^2 \; \leq \; \frac{t^2}{\numobs},
\end{align*}
where we have used the facts that $\|\phi'\|_\infty \leq 1$ and
$\|\Delta\|_\numobs \leq t$.  By Ledoux's concentration for convex
and Lipschitz functions~\cite{Ledoux01}, we have
\begin{align*}
  \mprob \Big[ \EGWS(t) \geq \Exs \EGWS[t] + \alpha \; \mid \;
    \{y_i\}_{i=1}^\numobs \Big] & \leq c_3 \exp \Big(- c_4
  \frac{\numobs \alpha^2}{t^2} \Big).
\end{align*}
Since the right-hand side does not involve $\{y_i\}_{i=1}^\numobs$,
the same bound holds unconditionally over the randomness in both the
Rademacher variables and the sequence $\{y_i\}_{i=1}^\numobs$.
Consequently, the claimed bound~\eqref{EqnCorrectConc} follows, with
suitable redefinitions of the universal constants.


\subsection{Proof of Lemma~\ref{LemHilBound}}
\label{SecLemHilProof}

We first require an auxiliary lemma, which we state and prove in the
following section.  We then prove Lemma~\ref{LemHilBound} in
Section~\ref{SecLemHilProofSub}.


\subsubsection{An auxiliary lemma}

The following result relates the Hilbert norm of the error to the
difference between the empirical and population gradients:
\begin{lems}
\label{LemZurich}
For any convex and differentiable loss function $\Loss$, the kernel
boosting error \mbox{$\DelIt{t+1} \defn \fval^{t+1} - \fvalstar$}
satisfies the bound
\begin{align}
  \label{EqnZurich}
  \notag \HilNorm{\DelIt{t+1}}^2 \leq \HilNorm{\DelIt{t}} &\HilNorm{\DelIt{t+1}} \\
  & + \stsize \inprod{\nabla \Loss(\fvalstar + \DelIt{t}) - \nabla
    \EmpLoss(\fvalstar + \DelIt{t})}{\DelIt{t+1}}.
\end{align}
\end{lems}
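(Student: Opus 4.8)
The plan is to run the argument in the transformed coordinates $\zval \defn \numobs^{-1/2}(\Kmat^{\pseudinv})^{1/2}\fval$ already used in the proof of Lemma~\ref{LemMasterBound}, under which the kernel boosting recursion becomes plain gradient descent $\zvalit{t+1} = \zvalit{t} - \stsize \nabla \EmpJloss(\zvalit{t})$ on $\EmpJloss(\zval) \defn \EmpLoss(\sqrt{\numobs}\KmatSqrt \zval)$, with population version $\Jloss(\zval) \defn \Exs \EmpJloss(\zval)$. Write $\zstar$ for the image of $\fvalstar$ and $\zeta^t \defn \zvalit{t} - \zstar$. From that setup I will use three facts: (i) $\HilNorm{\DelIt{t}} = \ltwo{\zeta^t}$; (ii) optimality of $\fvalstar$ over $\range(\Kmat)$ gives $\nabla \Jloss(\zstar) = 0$; and (iii) since $\KmatSqrt (\Kmat^{\pseudinv})^{1/2}$ is the orthogonal projection onto $\range(\Kmat)$ and every iterate --- hence $\DelIt{t+1}$ --- lies in $\range(\Kmat)$, one has $\inprod{\nabla \Jloss(\zvalit{t}) - \nabla \EmpJloss(\zvalit{t})}{\zeta^{t+1}} = \inprod{\nabla \Loss(\fvalstar + \DelIt{t}) - \nabla \EmpLoss(\fvalstar + \DelIt{t})}{\DelIt{t+1}}$. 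Hence it suffices to prove the $\zval$-space bound
\begin{align*}
\ltwo{\zeta^{t+1}}^2 \leq \ltwo{\zeta^{t}}\,\ltwo{\zeta^{t+1}} + \stsize \inprod{\nabla \Jloss(\zvalit{t}) - \nabla \EmpJloss(\zvalit{t})}{\zeta^{t+1}}.
\end{align*}

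For this, insert the update $\zeta^{t+1} = \zeta^{t} - \stsize \nabla \EmpJloss(\zvalit{t})$ into $\ltwo{\zeta^{t+1}}^2 = \inprod{\zeta^{t+1}}{\zeta^{t} - \stsize \nabla \EmpJloss(\zvalit{t})}$ and add and subtract the population gradient to obtain
\begin{align*}
\ltwo{\zeta^{t+1}}^2 = \inprod{\zeta^{t+1}}{\zeta^{t} - \stsize \nabla \Jloss(\zvalit{t})} + \stsize \inprod{\zeta^{t+1}}{\nabla \Jloss(\zvalit{t}) - \nabla \EmpJloss(\zvalit{t})}.
\end{align*}
The second term already has the required form, so it remains to bound the first term by $\ltwo{\zeta^{t}}\,\ltwo{\zeta^{t+1}}$; by Cauchy--Schwarz this reduces to the non-expansiveness estimate $\ltwo{\zeta^{t} - \stsize \nabla \Jloss(\zvalit{t})} \leq \ltwo{\zeta^{t}}$.

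The non-expansiveness step is the only substantive point. Because $\Kmat$ has been normalized to have operator norm at most one and $\Loss$ is $\lipcon$-smooth under the \mMC, the function $\Jloss$ is convex and $\lipcon$-smooth; together with $\nabla \Jloss(\zstar) = 0$, co-coercivity of the gradient gives $\inprod{\nabla \Jloss(\zvalit{t})}{\zeta^{t}} = \inprod{\nabla \Jloss(\zvalit{t}) - \nabla \Jloss(\zstar)}{\zvalit{t} - \zstar} \geq \tfrac{1}{\lipcon}\ltwo{\nabla \Jloss(\zvalit{t})}^2$, whence
\begin{align*}
\ltwo{\zeta^{t} - \stsize \nabla \Jloss(\zvalit{t})}^2
&= \ltwo{\zeta^{t}}^2 - 2\stsize \inprod{\nabla \Jloss(\zvalit{t})}{\zeta^{t}} + \stsize^2 \ltwo{\nabla \Jloss(\zvalit{t})}^2 \\
&\leq \ltwo{\zeta^{t}}^2 - \stsize\Big(\tfrac{2}{\lipcon} - \stsize\Big)\ltwo{\nabla \Jloss(\zvalit{t})}^2 ,
\end{align*}
and the subtracted term is nonnegative since the step size obeys $\stsize \leq \tfrac{1}{\lipcon} \leq \tfrac{2}{\lipcon}$. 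Combining this estimate with the two displays above and translating back via fact (iii) yields inequality~\eqref{EqnZurich}. The one delicate point to handle carefully is that the \mMC{} supplies $\lipcon$-smoothness of $\Loss$ only on $\HilNormball(\fstar, 2\Radius)$, so the co-coercivity bound must be read as a local statement along the segment joining $\zvalit{t}$ to $\zstar$; this segment lies in $\HilNormball(\fstar, 2\Radius)$ inside the inductive argument of Lemma~\ref{LemHilBound} --- the only place this lemma is used --- where $\HilNorm{\DelIt{t}} \leq \Radius$ is the running hypothesis. This local-to-global bookkeeping for the smoothness constant is the main obstacle; the remaining steps are the change of variables and a single application of Cauchy--Schwarz.
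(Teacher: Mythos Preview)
Your proof is correct and follows essentially the same approach as the paper: both work in the transformed $\zval$-coordinates, split off the empirical--population gradient difference, apply Cauchy--Schwarz to the remaining term, and reduce to non-expansiveness of the population gradient step $\zval \mapsto \zval - \stsize \nabla \Jloss(\zval)$ together with the fixed-point property $\PopOp(\zstar) = \zstar$. Your term $\zeta^t - \stsize \nabla \Jloss(\zvalit{t})$ is exactly the paper's $\PopOp(\zvalit{t}) - \PopOp(\zstar)$, so the two decompositions coincide. If anything, your write-up is more explicit: the paper simply cites ``standard arguments in convex optimization'' for non-expansiveness, whereas you spell out the co-coercivity inequality, and you are more careful than the paper in flagging that $\lipcon$-smoothness is only assumed locally on $\HilNormball(\fstar, 2\Radius)$ and that this suffices in the inductive context where the lemma is invoked.
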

\begin{proof}

Recall that $\HilNorm{\DelIt{t}}^2 = \HilNorm{\fval^t - \fvalstar}^2 =
\ltwo{\zval^t - \zvalstar}^2$ by definition of the Hilbert norm.  Let
us define the population update operator $\PopOp$ on the population
function $\Jloss$ and the empirical update operator $\EmpOp$ on
$\EmpJloss$ as
\begin{align}
  \label{EqnPopOp}
\notag \PopOp(\zval^t) \defn \zval^t - \stsize \nabla \Jloss (\sqrt{\numobs}
\KmatSqrt \zval^t), \\
\mbox{and} \quad \zval^{t+1} \defn
\EmpOp(\zval^t) = \zval^t - \stsize \nabla \EmpJloss (\sqrt{\numobs}
\KmatSqrt \zval^t).
\end{align}
Since $\Jloss$ is convex and smooth, it follows from standard
arguments in convex optimization that $\PopOp$ is a non-expansive
operator---viz.
\begin{align}
  \label{EqnPopOpNonExpansive}
\ltwo{\PopOp(x) - \PopOp(y)} \leq \ltwo{x- y} \qquad \mbox{for all
  $x,y \in \ConstSet$.}
\end{align}
In addition, we note that the vector $\zvalstar$ is a fixed point of
$\PopOp$---that is, $\PopOp(\zvalstar) = \zvalstar$.  From these
ingredients, we have
\begin{align*}
& \HilNorm{\DelIt{t+1}}^2 \\
= \,& \inprod{ \zval^{t+1} - \zvalstar}{
  \EmpOp(\zval^t) - \PopOp(\zval^t) + \PopOp(\zval^t) - \zvalstar}
\\
 \stackrel{(i)}{\leq} \,& \ltwo{\zval^{t+1} - \zvalstar}
\ltwo{\PopOp(\zval^t) - \PopOp(\zvalstar)} \\
& \,+ \stsize \inprod{
  \sqrt{\numobs} \KmatSqrt [\nabla\Loss(\fvalstar + \DelIt{t}) -
    \nabla\EmpLoss (\fvalstar + \DelIt{t})]}{\zval^{t+1} - \zvalstar}
\\
 \stackrel{(ii)}{\leq}\,& \HilNorm{\DelIt{t+1}} \HilNorm{\DelIt{t}}  \\
& + \stsize \inprod{\nabla\Loss(\fvalstar + \DelIt{t}) - \nabla\EmpLoss
  (\fvalstar + \DelIt{t})}{\DelIt{t+1}}
\end{align*}
where step (i) follows by applying the Cauchy-Schwarz to control the
inner product, and step (ii) follows since $\DelIt{t+1} =
\sqrt{\numobs} \KmatSqrt (\zval^{t+1} - \zvalstar)$, and the square
root kernel matrix $\KmatSqrt$ is symmetric.
\end{proof}


\subsubsection{Proof of Lemma~\ref{LemHilBound}}
\label{SecLemHilProofSub}

We now prove Lemma~\ref{LemHilBound}.  The argument makes use of
Lemmas~\ref{LemMasterBound} and~\ref{LemPeeling} combined with
Lemma~\ref{LemZurich}.

In order to prove inequality~\eqref{EqnDeltaStayBounded}, we follow an
inductive argument. Instead of proving~\eqref{EqnDeltaStayBounded}
directly, we prove a slightly stronger relation which implies it,
namely
\begin{align}
\label{EqnIndStep}
\max\{1, \HilNorm{\DelIt{t}}^2\} & \leq \max\{1,
\HilNorm{\DelIt{0}}^2\} + t \critquant^2 \frac{4 \lipcon}{\scefftil
  \strongcon}.
\end{align}
Here $\scefftil$ and $\AnnCon$ are constants linked by the relation
\begin{align}
  \label{EqnGamma}
  \scefftil \defn
  \frac{1}{32} - \frac{1}{4 \AnnCon} = 1/\Radius^2.
\end{align}
We claim that it suffices to prove that the error iterates
$\DelIt{t+1}$ satisfy the inequality~\eqref{EqnIndStep}.  Indeed, if
we take inequality~\eqref{EqnIndStep} as given, then we have
\begin{align*}
  \HilNorm{\DelIt{t}}^2 \leq \max\{1, \HilNorm{\DelIt{0}}^2 \} +
  \frac{1}{2\scefftil } \leq \Radius^2,
\end{align*}
where we used the definition $\Radius^2 =
2\max\{\HilNorm{\fvalstar}^2,~ 32\}$.  Thus, it suffices to focus our
attention on proving inequality~\eqref{EqnIndStep}.

For $t = 0$, it is trivially true.  Now let us assume
inequality~\eqref{EqnIndStep} holds for some $t \leq
\frac{\strongcon}{8\lipcon \critquant^2}$, and then prove that it also
holds for step $t+1$.


If $\HilNorm{\DelIt{t+1}} < 1$, then inequality~\eqref{EqnIndStep}
follows directly.  Therefore, we can assume without loss of generality
that $\HilNorm{\DelIt{t+1}} \geq 1$.

We break down the proof of this induction into two steps:
\bcar
\item First, we show that $\HilNorm{\DelIt{t+1}} \leq 2 \Radius$ so
  that Lemma~\ref{LemPeeling} is applicable.
\item Second, we show that the bound~\eqref{EqnIndStep} holds and thus
  in fact $\HilNorm{\DelIt{t+1}} \leq \Radius$.
\ecar
  
Throughout the proof, we condition on the event $\mathcal{E}$ and
$\mathcal{E}_0 := \{ \frac{1}{\sqrt{\numobs}}\ltwo{y-\Exs [y \mid x]}
\leq \sqrt{2}\sigma\}$. Lemma~\ref{LemPeeling} guarantees that
$\Prob(\Event^c) \leq c_1 \exp(-c_2 \frac{\strongcon^2 \numobs
  \delta_\numobs^2}{\level^2})$ whereas $\Prob (\mathcal{E}_0) \geq 1-
\E^{-n}$ follows from the fact that $Y^2$ is sub-exponential with
parameter $\sigma^2 \numobs$ and applying Hoeffding's inequality.
Putting things together yields an upper bound on the probability of
the complementary event, namely
  \begin{equation*}
    \Prob(\Event^c \cup \Event_0^c) \leq 2c_1 \exp(-C_2 \numobs
    \delta_\numobs^2)
  \end{equation*}
with $C_2 = \max\{ \frac{\strongcon^2}{\level^2} , 1\}$.

  
\paragraph{Showing that $\HilNorm{\DelIt{t+1}} \leq 2 \Radius$}

In this step, we assume that inequality~\eqref{EqnIndStep} holds at
step $t$, and show that $\HilNorm{\DelIt{t+1}} \leq 2 \Radius$.
Recalling that $\zval \defn
\frac{(\Kmat^{\pseudinv})^{1/2}}{\sqrt{\numobs}} \fval$, our update
can be written as
\begin{align*}
\zvalit{t+1} - \zvalstar &= \zvalit{t} - \stsize
\sqrt{\numobs}\KmatSqrt \nabla \Loss(\fvalit{t}) - \zvalstar \\
& \hspace*{1cm} + \stsize \sqrt{\numobs} \KmatSqrt (\nabla \EmpLoss(\fvalit{t}) - \nabla
\Loss(\fvalit{t})).
\end{align*}
Applying the triangle inequality yields the bound
\begin{align*}
\ltwo{\zvalit{t+1} - \zvalstar} &\leq \ltwo{\underbrace{\zvalit{t} -
    \stsize \sqrt{\numobs}\KmatSqrt \nabla
    \Loss(\fvalit{t})}_{\PopOp(\zvalit{t})} - \zvalstar} \\
    & \hspace*{1cm }+ \ltwo{\stsize \sqrt{\numobs} \KmatSqrt (\nabla \EmpLoss(\fvalit{t}) -
  \nabla \Loss(\fvalit{t}))}
\end{align*}
where the population update operator $\PopOp$ was previously
defined~\eqref{EqnPopOp}, and observed to be
non-expansive~\eqref{EqnPopOpNonExpansive}.  From this non-expansiveness,
we find that
\begin{align*}
\ltwo{\zvalit{t+1} - \zvalstar} & \leq \ltwo{\zvalit{t} - \zvalstar} +
\ltwo{\stsize \sqrt{\numobs} \KmatSqrt (\nabla \EmpLoss(\fvalit{t}) -
  \nabla \Loss(\fvalit{t}))},
\end{align*}

Note
that the $\ell_2$ norm of $\zvalit{}$ corresponds to the Hilbert norm
of $\fvalit{}$.  This implies
\begin{align*}
  \HilNorm{\DelIt{t+1}} &\leq \HilNorm{\DelIt{t}} +
  \underbrace{\ltwo{\stsize \sqrt{\numobs} \KmatSqrt (\nabla
      \EmpLoss(\fvalit{t}) - \nabla \Loss(\fvalit{t}))}}_{\defn
    \CrudeTerm}
\end{align*}
Observe that because of uniform boundedness of the kernel by one, the
quantity $\CrudeTerm$ can be bounded as
\begin{align*}
  \CrudeTerm \leq \stsize \sqrt{\numobs} \ltwo{\nabla
    \EmpLoss(\fvalit{t}) - \nabla \Loss(\fvalit{t}))} = \stsize
  \sqrt{\numobs} \frac{1}{\numobs} \ltwo{v - \Exs v},
\end{align*}
where we have define the vector $v \in \real^\numobs$ with coordinates
$v_i \defn \phi'(y_i, \fvalit{t}_i)$. For
functions $\phi$ satisfying the gradient boundedness and $m-M$
condition, since
$\fval^t \in \HilNormball(\fvalstar, \Radius)$, each coordinate of the
vectors $v$ and $\Exs v$ is bounded by $1$ in absolute value.
We consequently have
\begin{align*}
\CrudeTerm \leq \stsize \leq \Radius,
\end{align*}
where we have used the fact that $\stsize \leq \strongcon/\lipcon < 1 \leq \frac{\Radius}{2}.$
For least-squares $\phi$ we instead have
\begin{align*}
  T \leq \alpha \frac{\sqrt{\numobs} }{\numobs} \ltwo{y-\Exs [y \mid x]} =: \frac{\alpha}{\sqrt{\numobs}} Y \leq \sqrt{2}\sigma \leq  \Radius
\end{align*}
conditioned on the event $\mathcal{E}_0 := \{ \frac{1}{\sqrt{\numobs}}\ltwo{y-\Exs [y \mid x]} \leq  \sqrt{2}\sigma\}$. Since $Y^2$ is sub-exponential with parameter $\sigma^2 \numobs$ it follows by Hoeffding's inequality that  $\Prob (\mathcal{E}_0) \geq 1- \E^{-n}$.


Putting together the pieces yields that $\HilNorm{\DelIt{t+1}} \leq 2
\Radius$, as claimed.


\paragraph{Completing the induction step}

We are now ready to complete the induction step for proving
inequality~\eqref{EqnIndStep} using Lemma~\ref{LemMasterBound} and
Lemma~\ref{LemPeeling} since $\HilNorm{\DelIt{t+1}}\geq 1$.
We split the argument into two
cases separately depending on whether or not
$\HilNorm{\DelIt{t+1}} \critquant \geq \enorm{\DelIt{t+1}}$. In
general we can assume that
$\HilNorm{\DelIt{t+1}} > \HilNorm{\DelIt{t}}$, otherwise the induction
inequality~\eqref{EqnIndStep} satisfies trivially.

\paragraph{Case 1}
When $\HilNorm{\DelIt{t+1}} \critquant \geq \enorm{\DelIt{t+1}}$,
inequality~\eqref{EqnPeeling} implies that
\begin{align}
\label{EqnSurface}
\notag \inprod{\nabla \Loss(\fvalstar + \widetilde{\Delta}) - \nabla
   \EmpLoss(\fvalstar + & \widetilde{\Delta})}{\DelIt{t+1}}
\\
     &\leq 4\delta_{\numobs}^2 \HilNorm{\DelIt{t+1}} + \frac{\strongcon}{ \AnnCon}
 \enorm{\DelIt{t+1}}^2,
\end{align}
Combining Lemma~\ref{LemZurich} and inequality
\eqref{EqnSurface},
we obtain
\begin{align}
\label{EqnKorean}
\notag \HilNorm{\DelIt{t+1}}^2 
\leq &\HilNorm{\DelIt{t}}
\HilNorm{\DelIt{t+1}} + 4 \stsize\critquant^2 \HilNorm{\DelIt{t+1}} +
\stsize \frac{\strongcon}{\AnnCon} \enorm{\DelIt{t+1}}^2\\
\implies & \HilNorm{\DelIt{t+1}} \leq \frac{1}{1 -\stsize
  \critquant^2\frac{\strongcon}{\AnnCon} } \big[ \HilNorm{\Delta^t} +
  4 \stsize \critquant^2 \big],
\end{align}
where the last inequality uses the fact that $\enorm{\DelIt{t+1}} \leq \critquant
\HilNorm{\DelIt{t+1}}.$

\vspace*{0.2cm}

\paragraph{Case 2}
When $\HilNorm{\DelIt{t+1}} \critquant < \enorm{\DelIt{t+1}}$, we use
our assumption $\HilNorm{\Delta^{t+1}} \geq \HilNorm{\Delta^t}$
together with Lemma~\ref{LemZurich} and inequality \eqref{EqnPeeling}
which guarantee that
\begin{align*}
\HilNorm{\DelIt{t+1}}^2
\leq &\HilNorm{\DelIt{t}}^2 + 2 \stsize
\inprod{\nabla \Loss(\fvalstar + \DelIt{t}) - \nabla
  \EmpLoss(\fvalstar + \DelIt{t})}{\DelIt{t+1}} \nonumber\\
 \leq &\HilNorm{\DelIt{t}}^2 + 8 \stsize \critquant
\enorm{\DelIt{t+1}} + 2 \stsize \frac{\strongcon}{\AnnCon}
\enorm{\DelIt{t+1}}^2 \nonumber.
\end{align*}
Using the elementary inequality $2ab \leq a^2+b^2$, we find that
\begin{align}
\label{EqnBiking}
\notag \HilNorm{\DelIt{t+1}}^2 \leq  &\HilNorm{\DelIt{t}}^2 + 8
\stsize \left[ \strongcon \scefftil \enorm{\DelIt{t+1}}^2 + \frac{1}{4
    \scefftil \strongcon} \critquant^2\right] + 2 \stsize
\frac{\strongcon}{ \AnnCon} \enorm{\DelIt{t+1}}^2 \\
\leq &\HilNorm{\DelIt{t}}^2 + \stsize \frac{\strongcon}{4 }
\enorm{\DelIt{t+1}}^2 + \frac{2\stsize \critquant^2 }{\scefftil
  \strongcon},
\end{align}
where in the final step, we plug in the constants $\scefftil,\AnnCon$
which satisfy equation~\eqref{EqnGamma}.

Now Lemma~\ref{LemMasterBound} implies that
\begin{align*}
\frac{\strongcon}{2}\enorm{\DelIt{t+1}}^2 &\leq \Hdiff{t} + 4
\enorm{\DelIt{t+1}} \critquant + \frac{\strongcon}{ \AnnCon}
\enorm{\DelIt{t+1}}^2 \\ &\stackrel{(i)}{\leq} \Hdiff{t} + 4 \left[
  \scefftil \strongcon \enorm{\DelIt{t+1}}^2 + \frac{1}{4
    \scefftil\strongcon} \critquant^2 \right] +
\frac{\strongcon}{ \AnnCon} \enorm{\DelIt{t+1}}^2,
\end{align*}
where step (i) again uses $2ab \leq a^2+b^2$.  Thus, we have
$\frac{\strongcon}{4} \enorm{\DelIt{t+1}}^2 \leq \Hdiff{t} +
\frac{1}{\scefftil \strongcon} \critquant^2$.  Together with
expression~\eqref{EqnBiking}, we find that
\begin{align}
\label{EqnOpera}
\notag \HilNorm{\DelIt{t+1}}^2 & \leq \HilNorm{\DelIt{t}}^2 +
\frac{1}{2} (\HilNorm{\DelIt{t}}^2 - \HilNorm{\DelIt{t+1}}^2) +
\frac{4\stsize }{\scefftil \strongcon} \critquant^2\\
&\implies \HilNorm{\DelIt{t+1}}^2 \leq \HilNorm{\DelIt{t}}^2 +
\frac{4\stsize }{ \scefftil \strongcon} \critquant^2.
\end{align}


\vspace*{0.2cm}
\paragraph{Combining the pieces} By combining the two previous
cases, we arrive at the bound 
\begin{align}
  \label{EqnOryx}
\notag &\max \Big \{1, \HilNorm{\DelIt{t+1}}^2 \Big \} \\
 \leq &\max \Big\{ 1,
\kappa^2 ( \HilNorm{\DelIt{t}} + 4 \stsize \critquant^2)^2,
\HilNorm{\DelIt{t}}^2 + \frac{4 \lipcon }{\scefftil \strongcon}
\critquant^2 \Big\},
\end{align} 
where
$\kappa \defn \frac{1}{(1- \stsize \critquant^2
  \frac{\strongcon}{\AnnCon})}$ and we used that $\alpha \leq \min\{\frac{1}{\lipcon}, \lipcon\}$.

Now it is only left for us to show that with the constant $c_3$ chosen
such that $\scefftil = \frac{1}{32} - \frac{1}{4 \AnnCon} =
1/\Radius^2$, we have
\begin{align*}
\kappa^2 (\HilNorm{\DelIt{t}} + 4 \stsize \critquant^2)^2 &\leq
\HilNorm{\DelIt{t}}^2+\frac{4\lipcon}{\scefftil \strongcon} \critquant^2.
\end{align*}

Define the function $f: (0,\Radius] \rightarrow \real$ via
$f(\matchquant) \defn \kappa^2 (\matchquant + 4 \stsize
\critquant^2)^2 - \matchquant^2 - \frac{4 \lipcon}{\scefftil
  \strongcon} \critquant^2$.  Since $\kappa \geq 1$, in order to
conclude that $f(\xi) <0$ for all $\xi \in (0,\Radius]$, it suffices
to show that $\argmin_{x\in\real} f(x) < 0$ and $f(\Radius)<0$.
The former is obtained by basic algebra and follows directly from $\kappa \geq 1$. 
For the latter, since
$\scefftil = \frac{1}{32} - \frac{1}{4 \AnnCon} =
1/\Radius^2$, $\alpha < \frac{1}{\lipcon}$ and $\critquant^2 \leq \frac{\lipcon^2}{\strongcon^2}$ it thus suffices to show
\begin{equation*}
  \frac{1}{(1- \frac{\lipcon}{8\strongcon} )^2} \leq \frac{4\lipcon}{\strongcon} + 1
\end{equation*}
Since $  (4x + 1)(1 - \frac{x}{8})^2 \geq 1$ for all $x \leq 1$ and $\frac{\strongcon}{\lipcon}\leq 1$,
we conclude that $f(\Radius) < 0$.

Now that we have established
$\max\{1, \HilNorm{\DelIt{t+1}}^2\} \leq \max\{ 1,
\HilNorm{\DelIt{t}}^2 \}+ \frac{4 \lipcon }{\scefftil \strongcon}
\critquant^2$, the induction step~\eqref{EqnIndStep} follows. 
which completes the proof of Lemma~\ref{LemHilBound}.


\subsection{Proof of Lemma~\ref{LemLogAda}}
\label{SecProofLogAda}

Recall that the LogitBoost algorithm is based on logistic loss $\phi(y,
\secarg) = \ln(1+ e^{-y \secarg })$, whereas the AdaBoost algorithm is
based on the exponential loss $\phi(y, \secarg) = \exp( -y
\secarg)$. We now verify the \mMC~for these two losses with the
corresponding parameters specified in Lemma~\ref{LemLogAda}.


\subsubsection{\mMC~for logistic loss}

The first and second derivatives are given by
\begin{align*}
  \frac{\partial \phi(y, \fval)}{\partial \fval} = \frac{-y e^{- y
      \fval}}{1+ e^{- y \fval}}, \qquad \text{ and } \quad
  \frac{\partial^2 \phi(y, \fval)}{(\partial \fval)^2} =
  \frac{y^2}{(e^{-y\fval/2}+ e^{y\fval/2})^2}.
\end{align*}
It is easy to check that $|\frac{\partial \phi(y, \fval)}{\partial
  \fval}|$ is uniformly bounded by $\PhiB = 1$.

Turning to the second derivative, recalling that $y \in \{-1, +1 \}$,
it is straightforward to show that
\begin{align*}
\max_{y \in \{-1, +1 \} } \sup_{\fval} \frac{y^2}{(e^{-y\fval/2}+
  e^{y\fval/2})^2} \leq \frac{1}{4},
\end{align*}
which implies that $\frac{\partial \phi(y,\fval)}{\partial \fval}$ is
a $1/4$-Lipschitz function of $\fval$, i.e. with $\lipcon = 1/4$.

Our final step is to compute a value for $\strongcon$ by deriving a
uniform lower bound on the Hessian.  For this step, we need to exploit
the fact that $\fval = f(x)$ must arise from a function $f$ such that
\mbox{$\HilNorm{f} \leq \Diam \defn \Radius + \HilNorm{\fvalstar}$.}
Since $\sup_{x} \Kerfunc(x,x) \leq 1$ by assumption, the reproducing
relation for RKHS then implies that \mbox{$|f(x)| \leq \Diam$.}
Combining this inequality with the fact that $y \in \{-1, 1 \}$, it
suffices to lower the bound the quantity
\begin{align*}
  \min_{y \in \{-1, +1 \}} \min_{|\fval| \leq \Diam}
  \left|\frac{\partial^2 \phi(y, \fval)}{(\partial \fval)^2}\right| &=
  \min_{|y| \leq 1} \min_{|\fval| \leq \Diam}
  \frac{y^2}{(e^{-y\fval/2}+ e^{y\fval/2})^2} \\
  &\geq \underbrace{\frac{1}{e^{-D} + e^D + 2}}_{\strongcon},
\end{align*}
which completes the proof for the logistic loss.


\subsubsection{\mMC~for AdaBoost}
The AdaBoost algorithm is based on the cost function $\phi(y, \fval) =
e^{-y \fval}$, which has first and second derivatives (with respect to
its second argument) given by
\begin{align*}
  \frac{\partial\phi(y, \fval)}{\partial \fval} =  -y e^{-y \fval},
  \qquad
  \text{ and }~~\frac{\partial^2\phi(y, \fval)}{(\partial \fval)^2}  =  e^{-y \fval}.
\end{align*}
As in the preceding argument for logistic loss, we have the bound $|y|
\leq 1$ and $|\fval| \leq \Diam$.  By inspection, the absolute value
of the first derivative is uniformly bounded $\PhiB \defn e^{\Diam}$,
whereas the second derivative always lies in the interval
$[\strongcon, \lipcon]$ with $\lipcon \defn e^\Diam$ and $\strongcon
\defn e^{-\Diam}$, as claimed.

\end{document}